\documentclass[a4paper,twoside,12pt,english]{article}

% 11pt
%\setlength\oddsidemargin{54pt}
%\setlength\evensidemargin{54pt}
%\setlength\marginparwidth{83pt}

% 12pt
\setlength\oddsidemargin{39pt} \setlength\evensidemargin{39pt}
\setlength\marginparwidth{68pt}
\usepackage[applemac]{inputenc}
\usepackage[english]{babel}
\hyphenation{com-pari-son con-tinu-ous maxi-min mini-max}

\makeatletter
\renewcommand\section{\@startsection
{section}{1}{0mm}%
{-2\bigskipamount}%
{\bigskipamount}%
{\normalfont\normalsize\bfseries}%
}
\renewcommand\thesection{\arabic{section}}

\newcommand\dateymd{\number\year, \ifcase\month\or
January\or February\or March\or April\or May\or June\or
July\or August\or September\or October\or November\or
December\fi, \number\day}
\newcommand\printtime{%
\c@hours=\time \divide\c@hours by60
\c@minutes=\c@hours \multiply\c@minutes by-60
\advance \c@minutes by \time
\ifnum\c@hours<10 0\fi\the\c@hours:%
\ifnum\c@minutes<10 0\fi\the\c@minutes}
\makeatother

\usepackage[OT1]{fontenc}

\usepackage{fancyhdr}
\pagestyle{fancy}
\fancyhead{}
%\fancyhead[CO]{\textsc{Choosing and ranking $\cdot$ Let's be logical about it},\quad \S\,\thesection}
\fancyhead[CO]{\textsc{Social choice rules driven by propositional logic},\quad \S\,\thesection}
\fancyhead[CE]{\textsc{R.\ Camps,\, X.\ Mora,\, L.\ Saumell}}
\fancyhead[RO]{\thepage}
\fancyhead[LE]{\thepage}

\fancyfoot{}
\fancyfoot[RO]{}
\fancyfoot[LE]{}
%\fancyfoot[RO]{\footnotesize\dateymd; \printtime} % eliminar en versió definitiva
%\fancyfoot[LE]{\footnotesize\dateymd; \printtime} % eliminar en versió definitiva

\usepackage{relsize}
\usepackage{bm} % bold math
\usepackage{amssymb}
\usepackage{eucal}
\usepackage{amsmath}
\usepackage{amsthm}
\usepackage{enumerate}
\usepackage{IEEEtrantools}
\usepackage{colortbl}

\usepackage{verbatim} % comment

%\usepackage{natbib}
%\setlength\bibsep{1ex}

%\usepackage[nodvipsnames]{color}
%\newcommand\bfgr[1]{\textcolor[cmyk]{0,0,0,0.75}{\textbf{#1}}}
%
%\usepackage[pdftex]{graphicx} % figures
%%%%% DIMENSIONS

\parskip=.75\smallskipamount
\mathsurround=1pt

%%%%% TEXT

\newcommand\bnou{\textup{\textbf{\lower3.7pt\hbox{\char'052}:~}}}
\newcommand\enou{\unskip\textup{\textbf{~:\lower3.7pt\hbox{\char'052}}} }
\newcommand\bvell{\textup{\textbf{\lower3.7pt\hbox{\char'052}:~}$\langle$}}
\newcommand\evell{\unskip\textup{$\rangle$\textbf{~:\lower3.7pt\hbox{\char'052}}} }
\newcommand\bbnou{\textup{\textbf{\lower3.7pt\hbox{\char'052\char'052}:~}}}
\newcommand\eenou{\unskip\textup{\textbf{~:\lower3.7pt\hbox{\char'052\char'052}}} }

\newcommand\ie{i.\,e.~}

\newcommand\ifoi{\,\hbox{if\kern2.5pt and\kern2.5pt only\kern2.5pt if}\,{} }

\newcommand\df{\bfseries}
\newcommand\dfc[1]{\,{\df#1}\,}
\newcommand\dfd[1]{\,{\df#1}\hskip1pt}
\newcommand\secpar[1]{\S\,{#1}}

\newcommand\ensep{\unskip\hskip.65em\ignorespaces}

\newcommand\atilde{\lower3.5pt\hbox{\~{}}}
\newcommand\underl{\lower3.5pt\hbox{-}}

\newcommand\halfsmallskip{\vskip0.5\smallskipamount}

\newcommand\remark{\vskip-10pt\bigskip\noindent\textit{Remark}.\hskip.5em}

\newcommand\pq[2]{\raise.25ex\hbox{\footnotesize${#1}\over{#2}$}%
\hskip-.35ex\null}
\newcommand\onehalf{\frac12}

\newcommand\itm{\smallskip\noindent\hbox to\parindent{\hss\smaller{$\bullet$}\hskip.5em}}

%%%%% SPECIAL ITEMS

\newcommand\xxxx[1]{%
 \hangindent2.5\parindent
 \hangafter1
 \noindent\hskip.5\parindent
 \hbox to2\parindent{\hss#1\hss}}
\newcommand\condition[2]{\xxxx{#1}\textit{#2}.}
\newcommand\iim[1]{\xxxx{\textup{(#1)}}\ignorespaces} % he afegit \small % he tret \small

\newcommand\iimtxt[1]{\textup{\small(#1)}~\ignorespaces}

\newcommand\ddd[1]{\halfsmallskip\vskip-2pt\noindent\hbox to 2\parindent{\hss\footnotesize$\bullet$\ \ }{#1}\ensep}

%%%%% CITES

%\newcommand\brwrap[1]{[\textsl{#1}\kern1pt]}
%
%\newcommand\refa[1]{\brwrap{#1}}
%\newcommand*\drefa[2]{\brwrap{#1\kern.5pt:\,{\relscale{0.95}#2}}}
%
%\makeatletter
%\newcommand\bibref[1]{\@nameuse{b@#1}}% transforms internal bib codes to external ones
%\renewcommand\@biblabel[1]{\brwrap{#1}}
%%\renewcommand\@biblabel[1]{\refa{#1}}
%\renewcommand\@cite[2]{\hbox{\brwrap{#1\if@tempswa\/\upshape\,:\,{\relscale{0.95}#2}\fi}}} % ex latex.ltx
%\makeatother

%%%%% CITES

%\newcommand\brwrap[1]{[\textsl{#1}\kern1pt]}
%
%\makeatletter
%\newcommand\bibref[1]{\@nameuse{b@#1}}% transforms internal bib codes to external ones
%\renewcommand\@biblabel[1]{\brwrap{#1}}
%%\renewcommand\@cite[2]{\hbox{\brwrap{#1\if@tempswa\textup{\kern.15ex:\,#2}\fi}}} % ex latex.ltx
%\renewcommand\@cite[2]{\brwrap{#1\if@tempswa\textup{\kern.15ex:\,#2}\fi}} % ex latex.ltx
%\makeatother
%
%%\newcommand*\refa[1]{\brwrap{\bibref{#1}}}
%%\newcommand*\dbibref[2]{\bibref{#1}\,\textup{:\,{\relscale{0.95}#2}}}
%\newcommand*\dbibref[2]{\bibref{#1}\kern.15ex\textup{:\,#2}}
%%\newcommand*\drefa[2]{\brwrap{\dbibref{#1}{#2}}}
%\newcommand*\refco{\/\kern.1ex\textup{,}\hskip.45ex}
%\newcommand*\refsc{\/\kern.15ex\textup{;} }

%%%%% COMENTARIS

% TREURE EL % SI ES VOLEN ELS COMENTARIS:
%\renewenvironment{comment}{\dealwithcomments\bgroup}{\egroup\medskip}

%%%%% MATHS

% per a la matriu de Llull:
\newcommand\diaglabel[1]{\cellcolor[gray]{0.8}\makebox[1.5em][c]{$#1$}}
\newcommand\hlinestrut{\hline\rule{0pt}{2.4ex}}

\newcommand\sbset{\subset}

\newcommand\sbseteq{\subseteq}
\newcommand\spseteq{\supseteq}

\newcommand\cd[1]{\!#1\!}
\newcommand\cdsucc{\!\succ\!}

%%%%% THEOREM ENVIRONMENTS

\newtheorem{proposition}{Proposition}[section]
\newtheorem{lemma}[proposition]{Lemma}
\newtheorem{theorem}[proposition]{Theorem}
\newtheorem{corollary}[proposition]{Corollary}

%%%%% MATHS

\newcommand\thesis{t}

\newcommand\nt[1]{\overline{#1}}		% negació

\newcommand\sprm{s}
\newcommand\temp{t}
\newcommand\good{g}

\newcommand\maxcol{\tau} 			% Laia: C
\newcommand\minrow{\sigma} 		% Laia: F
\newcommand\minrw{\sigma}
\newcommand\mincol{\rho}

\newcommand\plu{f}
\newcommand\aplu{\bar{f}}

				% número de proposicions (sense les seves negacions)
		% proposicions (sense les seves negacions) àtoms
\newcommand\piset{\varPi}			% proposicions (amb les seves negacions) literals
		% subconjunt de \piset
\newcommand\lit{p}		    		% literal
\newcommand\liit{q}		    		% literal bis
		    		% literal ter
		    		% literal iv
\newcommand\lxt{\alpha}		    % literal generic
		    % literal generic bis
\newcommand\clau{C}				% clàusula
\newcommand\doct{{\mathcal D}} 	% doctrina
 	% doctrina (pcnf)
\newcommand\cnf{\Phi} 			% conjunctive normal form

\newcommand\ist{A}				% conjunt d'items (per a ordres i equivalències)
\newcommand\xst{X}				% subconjunt d'items
\newcommand\yst{Y}				% subconjunt d'items
				% relació d'equivalència (single-link)

\newcommand\xstbis{\hbox to1.97ex{\hss\hskip2.5pt$\smash{\widetilde{%
 \hbox to1.9ex{\hss\vphantom{t}\smash{$\xst$}\hskip2.5pt\hss}}}$\hss}}
\newcommand\xstbiss{\smash{\widetilde\xst}}
\newcommand\xsth{\smash{\widehat X}}
\newcommand\rect{R}

		% admissible subsets

\newcommand\val{w}				% valuation
\newcommand\orv{v}				% original valuation
				% original valuations, components (individus)
				% pesos dels components (individus)
\newcommand\utv{v'}				% upper transformed valuation (primes)
\newcommand\urv{v^*}				% upper revised valuation (estrelles)
\newcommand\orvbis{w}				% original valuation bis
\newcommand\utvbis{w'}			% upper transformed valuation (primes) bis
\newcommand\urvbis{w^*}			% upper revised valuation (estrelles) bis

\newcommand\Orv{V}				% original valuation
\newcommand\Utv{V'}				% upper transformed valuation (primes)
\newcommand\Urv{V^*}				% upper revised valuation (estrelles)

\newcommand\valtz{\widetilde w{}^{\kern.5pt\prime}}

\newcommand\mg{\eta}				% marge de decisió
%\newcommand\mgo{\eta_0}

% valoracions originals, transformades, revisades: tilde
\newcommand\orvz{\widetilde v}	
\newcommand\utvz{\widetilde v{}^{\kern.5pt\prime}}
\newcommand\urvz{\widetilde v{}^{\kern.5pt\ast}}

\newcommand\vk{v^k}

\newcommand\ntv[1]{v^{(#1)}}		% valoracions transformades (iterats)
\newcommand\nmaxcol[1]{\tau^{(#1)}}
\newcommand\nminrow[1]{\sigma^{(#1)}}
\newcommand\nminrw[1]{\sigma^{(#1)}}
%\newcommand\ntvz[1]{\widetilde v{}^{(#1)}}	 % idem tilde

%\newcommand\clauz{\widetilde{\hbox{\vphantom{t}\smash{$\clau$}}}} 	% clàusula alternativa
%\newcommand\doctz{\widetilde{\hbox{\vphantom{t}\smash{$\doct$}}}} 	% doctrina alternativa
%\newcommand\clauzz{\widetilde{\clau}} 	% clàusula alternativa (per a subíndex)
%\newcommand\doctzz{\widetilde{\doct}} 	% doctrina alternativa (per a subíndex)

%\newcommand\clauiii{C_\ast}
%\newcommand\clauzi{\widetilde{\hbox{\vphantom{t}\smash{$\clau$}}}\kern-1.5pt{}_1}
%\newcommand\clauzii{\widetilde{\hbox{\vphantom{t}\smash{$\clau$}}}\kern-1.5pt{}_2}
%\newcommand\clauzzi{\widetilde{\clau}\kern-1pt{}_1}
%\newcommand\clauzzii{\widetilde{\clau}\kern-1pt{}_2}

%\newcommand\Phiz{\widetilde{\hbox{\vphantom{t}\smash{$\Phi$}}}}

%\newcommand\uttvz{\widetilde v{}^{\kern.5pt\prime\prime}}

%\newcommand\magn{u}
%\newcommand\magnz{\widetilde u}
%\newcommand\ptit{\delta}

% Codual method
%\newcommand\ltv{{}^\prime\kern-.25pt v}
%\newcommand\lrv{{}^\ast\kern-.25pt v}

\newcommand\better{\!\succ\!}

\newcommand\fiav{\!\mid\!}

\DeclareMathOperator*{\Max}{Max}

\newcommand\goodset{M} 				% minimal majority-dominant subset
\newcommand\smithset{S}				% majority-dominant subset
\newcommand\smithsetbis{T}
\newcommand\majap{A^*} 				% majority approved
\newcommand\majapgoodset{M^*} 	% minimal majority-dominant subset of \majap

%%%%% RESOLUTION

\newcommand\res[1]{\,\vtop{\offinterlineskip\halign{\hfil##\hfil\cr$\vee$\cr\noalign{\vskip2pt}$\scriptstyle#1$\cr}}\,}

\renewcommand\res[1]{\mathbin{\vtop{\baselineskip0pt\lineskip.3ex\halign{\hfil##\hfil\cr$\vee$\cr$\scriptstyle{#1}$\cr}}}}

%\renewcommand\res[1]{\mathop{\vee}\limits_{#1}}

%%%%%

\newcommand\bla{}

%%%%% LEFT TAGS (used in a few cases where an equation is repeated)

\newlength\repskip % Cal ajustar-la segons la mida del n{\oe}mero de fórmula
\setlength\repskip{1.55em} % Per a números de fórmula de dues xifres
{\null\hskip-\repskip\hbox to0.9\hsize\bgroup\hbox to0pt{\small$(\ref{#1})$\hss}\hfil\hskip.1\hsize$\displaystyle}%
{$\hfil\egroup}

% repeated, no number
%
{\hbox to\hsize\bgroup\hbox to0pt{\small$(\ref{#1})$\hss}\hfil$\displaystyle}%
{$\hfil\egroup\nonumber}

\selectlanguage{english}

\begin{document}

%\maketitle

\thispagestyle{empty}
%\null
%\vskip-22mm
%\centerline{DRAFT $\cdot$ PLEASE DO NOT DISTRIBUTE}
%\vskip20mm

\null\vskip-28mm\null %%%%%

\begin{center}
\hrule
\vskip7.5mm
%\textbf{\uppercase{Choosing and ranking. Let's be logical about it}}%
%\textbf{\uppercase{Logical implications and social choice}}%
\textbf{\uppercase{Social choice rules}}\par
\textbf{\uppercase{driven by propositional logic}\,%
\footnote{The original version of this article had the title
``Choosing and ranking. Let's be logical about it.''}}%
%\marginpar{\hbox to0pt{and rating?\hss}}
\par\medskip
\textsc{Rosa Camps,\, Xavier Mora \textup{and} Laia Saumell}
\par
Departament de Matem\`{a}tiques,\break
Universitat Aut\`onoma de Barcelona,\break
Catalonia
\par\medskip
\texttt{xmora\,@\,mat.uab.cat}
\par\medskip
% Version 1: July 28, 2011
% Version 1.1: September 21, 2011
% Version 1.2: October 13, 2011
% Version 2: July 26, 2012
% Version 2.1: April 9, 2013
% Version 3: September 30, 2013
July 28, 2011$^1$;\ensep revised September 30, 2013
\vskip5mm
\hrule
\end{center}

\begin{abstract}
% A variety of problems of social choice theory
Several rules for social choice
are examined from a unifying point of view
that looks at them as procedures for revising a system of degrees of  belief
in accordance with certain specified logical constraints.
Belief is here a~social attribute,
its degrees being measured by the fraction of people who share a~given opinion.
Different known rules and some new ones are obtained
depending on which particular constraints are assumed. 
%Divergences between different methods take place especially when
%the information about the existing preferences is incomplete.
These constraints allow to model different notions of choiceness.
In particular, we give a new method to deal with approval-disapproval-preferential voting.

\bigskip\noindent
\textbf{Keywords:}\hskip.5em
\textit{%
Social choice theory,
degrees of belief,
preferences,
%preferential voting, %MOD
%choosing, %MOD
%ranking, %MOD
transitivity,
Con\-dor\-cet-Smith principle,
choiceness, %MOD
supremacy,
plurality rule,
minimax rule,
prominence,
Condorcet principle, 
maximin rule,
comprehensive prominence,
refined comprehensive prominence,
% suitability, ?
goodness,
approval voting,\linebreak[3]
approval-preferential voting.
%approval-disapproval voting,
%approval-disapproval-preferential voting.
%median rate.
}

\bigskip\noindent
\textbf{Classification MSC2010:}\hskip.75em
\textit{%
03B42, % Logics of knowledge and belief (including belief change)
%68T37, % Reasoning under uncertainty
91B06, % Decision theory
91B14. % Social choice
%91C20. % Clustering
}
\end{abstract}

\section{Introduction}

\medskip
In this article we develop certain applications of a method for revising degrees of belief that we introduced in~\cite{dp}.
These applications belong to social choice theory.\linebreak[3] As general references about the latter, we refer the reader to~\cite{mu,nitzan,t6}.
%PND afegir gaertner?
% we apply certain methods from logic to social choice theory.

\paragraph{1.1}
As its name says, the main subject matter of social choice theory is choosing among several options
% so as to best fulfil
% with proper regard to
in accordance with the existing preferences about them.
% So~it is
It~is~indeed a question of aggregating a set of individual opinions so as to define a collective one.
The~individual opinions will usually have an all-or-none character.
For instance, given two options~$x$ and $y$, either $x$ is preferred to $y$ or viceversa.
Or,~given a single option~$x$, either it is considered a right choice or it is not.
Putting together several opinions of this kind results in a more quantitative sort of information:\linebreak[3]
every particular statement, such as `$x$ is preferable to $y$', or `$x$ is a right choice',
is~now valued by the fraction of people who have expressed this view.
% Tuomela 1992: Group beliefs
% Generally speaking, the individual preferences could also have this form.
This fraction can be assimilated to a degree of collective belief in that statement.

%Anyway, 
If a collective decision must be adopted, it would be reasonable to abide by the majority,
\ie to accept a statement whenever that fraction is larger than one half.

% The problem is that the resulting decision may be inconsistent with certain logical constraints that one is assuming (often implicitly) in connection with the notions of choice and preferences.

But it is not so simple. Preferences are usually assumed to be transitive;
besides, it is taken for granted that an option deserves being chosen if and only if
it is preferred to any other.
However, it is well known
(see for instance \cite[\secpar7.1]{nitzan} and \cite[ch.\,9]{t6})
%PND afegir gaertner?
that these standard assumptions cannot be maintained when preferences are aggregated
and one tries to decide by means of the majority criterion.
Consider, for instance, three options $a,b,c$ and $15$~voters ---or 15~millions of them---
who rank these options in the following way:
\begin{equation}
\label{eq:ex1}
6: a\better b\better c,\quad 5: b\better c\better a,\quad 4: c\better a\better b.
\end{equation}
The number that precedes each of these rankings indicates how many people expressed it.
From this information one sees that $a$ is preferred to $b$ by a majority of people, namely 10 against 5. Similarly, $b$ is preferred to $c$ by a majority of 11 against 4, and $c$ is preferred to $a$ by a majority of 9 against 6.
These numbers are collected in the following table, that we call the Llull matrix of the vote:%
\footnote{Since we are interested only in the preferences of $x$ over $y$ for $x\neq y$, we use the diagonal cells for specifying the simultaneous labelling of rows and columns by the existing options. The cell located in row $x$ and column $y$ gives information about the preference of $x$ over $y$.}
\begin{equation}
\label{eq:llull1}
\begin{tabular}{|c|c|c|}
\hlinestrut
\diaglabel{a} & \textbf{10} & 6\\
\hlinestrut
5 & \diaglabel{b} & \textbf{11}\\
\hlinestrut
\textbf{9} & 4 & \diaglabel{c}\\
\hline
\end{tabular}
\,.
\end{equation}
%Consider, for instance, three options $a,b,c$ and $37$~voters who rank these options in the following way:
%\begin{equation}
%\label{eq:ex1}
%5: a\better b\better c,\quad
%5: a\better c\better b,\quad
%8: b\better a\better c,\quad
%7: b\better c\better a,\quad
%12: c\better a\better b.
%\end{equation}
%The number that precedes each of these rankings indicates how many people expressed it.
%From this information one sees that $a$ is preferred to $b$ by a majority of people, namely 22 against 15. Similarly, $b$ is preferred to $c$ by a majority of 20 against 17, and $c$ is preferred to $a$ by a majority of 19 against 18.
%These numbers are collected in the following table, that we call the Llull matrix of the vote:%
%\footnote{Since we are interested only in the preferences of $x$ over $y$ for $x\neq y$, we use the diagonal cells for specifying the simultaneous labelling of rows and columns by the existing options. The cell located in row $x$ and column $y$ gives information about the preference of $x$ over $y$.}
%\begin{equation}
%\label{eq:llull1}
%\begin{tabular}{|c|c|c|}
%\hlinestrut
%\diaglabel{a} & \textbf{22} & 18\\
%\hlinestrut
%15 & \diaglabel{b} & \textbf{20}\\
%\hlinestrut
%\textbf{19} & 17 & \diaglabel{c}\\
%\hline
%\end{tabular}
%\,.
%\end{equation}
So the majoritarian preferences are not transitive; in fact, they form a cycle.
Besides, they do not produce an option with the property of being preferred to every other.

This is a particular case of the general problem of judgment aggregation \cite{list,listpuppe},
%We~want to decide on several issues that are subject to certain logical constraints.
%We~are given several judgments, each of which is consistent with those logical constraints.
where a~group of people wants to decide on several issues that are subject to certain logical constraints.
Even when each individual gives an opinion that is consistent with these logical constraints,
the aggregate opinion defined by the majority criterion can lose such a consistency.
This poses the problem of which method should be used to arrive at a consistent decision.

We dealt with this problem in~\cite{dp}.
% where we proposed a method that proceeds by revising the existing degrees of belief
% based upon propositional logic.
Our method, that will be summarized in~\secpar{2},
% is based upon propositional logic and
hinges on a clear statement of the logical constraints that
relate the issues in question to each other. 
Every issue is represented by an atomic proposition
and every constraint is represented by a compound proposition whose truth is assumed to hold.

In social choice theory, one is interested in the propositions $p_{xy}$: `$x$ is preferable to $y$',\linebreak[3] and $q_x$: `$x$ is a right choice', where $x$ and $y$ vary over the set of options. The standard notion of preference assumes the constraints of antisymmetry, namely $\nt p_{xy} \leftrightarrow p_{yx}$ for any $x,y$ different from each other, and transitivity, namely $p_{xy} \land p_{yz} \rightarrow p_{xz}$ for any $x,y,z$ pairwise different from each other. However, in most of this article we will drop the constraint of transitivity and we will keep only that of antisymmetry. The~reason for it is that the notion of choiceness, \ie being a right choice,
is not really concerned with transitivity. In fact, social choice theory often
% traditionally
distinguishes between choice functions and ranking functions (see for instance \cite{young86} and \cite{bala}).
%PND treure bala?
% Fishburn 1973? % D324.1 Fis % Vegi's Sen 1977

% [PREGUNTA: transitivitat + supremacy ¿=? transitivitat]

Even if we equate choiceness to being preferred to anything else,
this corresponds simply to the constraint $q_x \leftrightarrow \bigwedge_{y\neq x} p_{xy}.$
We will refer to this notion of choiceness as \dfc{supremacy.}

Besides it, we will consider also other notions.
For instance, instead of the preceding double-implication constraint,
it~makes sense to require only the single implications
$\bigwedge_{y\neq x} p_{xy} \rightarrow q_x$ 
and $\bigwedge_{y\neq x} p_{yx} \rightarrow \nt q_x.$
The first of these is related to the classical principle of Llull and Condorcet,
namely that an option should be chosen if it is preferred to every other by a majority.
% (see \cite[ch.\,1, \secpar4.2]{mu}, \cite[\secpar7.2]{nitzan} and \cite[ch.\,?]{t6}).
% també \cite[p.\,45]{nitzan}
These implications define an alternative notion of choiceness that we call \dfc{prominence.}

%Properly speaking, however, having definite binary preferences between options
%does not preclude the possibility that none of them is considered a suitable choice,
%or that all of them are considered so.
%This is very much the idea of approval voting~\cite{brams}.
%Of course, if $x$ is considered suitable and $y$ is preferred to $x$ 
%then $y$ should also be considered suitable,
%that is, we are constrained to satisfy the implication $q_x \land p_{yx} \rightarrow q_y$.
%This notion of choiceness will be referred to as \dfc{suitability.} % ABANS: `goodness' %MOD

Properly speaking, however, having definite binary preferences between options
does not preclude the possibility that none of them is considered good enough,
or that all of them are considered so.
This is very much the idea of approval voting~\cite{brams}.
Of course, if $x$ is considered good and $y$ is preferred to $x$ 
then $y$ should also be considered good,
that is, we are constrained to satisfy the implication $q_x \land p_{yx} \rightarrow q_y$.
This notion of choiceness will be referred to as \dfc{goodness.}

Quite interestingly, Condorcet himself advocated for such a relaxation of the notion of choiceness.
In 1789, four years after his celebrated \textit{Essai} where he considered the above-mentioned principle, he expressed himself in the following way:
``It is generally more important to be sure of electing men who are worthy of holding office than to have a small probability of electing the worthiest man'' \cite{condorcet}.
% p.307
% p.177--178 de la traducció

Let us remark here that later on the notation $q_x$ will be replaced by different symbols
---namely, $\sprm_x, \temp_x, \good_x$---
depending on which notion of choiceness is being considered.

\paragraph{1.2}
% A conflict between choosing and ranking
% Consider the case where
Assume that
several individuals are asked not only to rank a set of options, but also to indicate which of them meet their approval. Of course, each individual is supposed to be consistent at putting his approved options at the top of his ranking. Once again, however, the majority criterion can produce an inconsistent result. Assume, for instance, that the votes are as follows:
\begin{equation}
\label{eq:ex11}
5: a\better b\better c\fiav\,,\quad
4: b\better a\better c\fiav\,,\quad
8: b\fiav a\better c,\quad
9: c\fiav a\better b,
\end{equation}
where a vertical bar indicates that the options at the left of it are approved whereas those at the right are disapproved. By counting the preferences about every pair of options, we get the following Llull matrix: 
\begin{equation}
\label{eq:llull11}
%(\Orv(p_{xy})) = 
\begin{tabular}{|c|c|c|}
\hlinestrut
\diaglabel{a} & \textbf{14} & \textbf{17}\\
\hlinestrut
12 & \diaglabel{b} & \textbf{17}\\
\hlinestrut
9 & 9 & \diaglabel{c}\\
\hline
\end{tabular}
\,.
\end{equation}
In contrast to (\ref{eq:llull1}), here the majority criterion
% preserves transitivity and produces the order
results in a complete ordering, namely $a\better b\better c$.
However, if we count the number of approvals and disapprovals that are present in (\ref{eq:ex11}),
we see that option~$a$ is disapproved by a majority, namely by 17~individuals,
whereas $b$ is approved by a majority consisting also of 17~individuals
and $c$~is approved by a majority of 18~individuals.
So, the option that goes first in the obtained ranking
is the most disapproved one!
%is not at all the most approved one; in fact, it 
%is collectively disapproved.
Which option should be chosen in such a situation?

%[Belief: wide sense, ordinary language, \cite{huberbook}]
%For a recent overview of the subject, we refer to \cite{huberbook} and the articles therein.
%For the idea of collective belief: (see for instance Gilbert1987).

\paragraph{1.3}
As we have already mentioned, we will be looking at those numbers of people
as degrees of (collective) belief,
and we will revise them in the light of the assumed constraints. 
%Our method for revising degrees of belief is
%More specifically, we will be 
Our revision method, that was introduced in \cite{dp}, is
based on the following general principle:
an implication of the form
$(\lit_1 \land \lit_2 \land \dots \land \lit_n) \rightarrow \liit$
with a satisfiable left-hand side gives to its conclusion~$\liit$
at least the same degree of belief
as the weakest of the premises~$\lit_i$.

This principle has been used by several authors
(see especially \cite{rescher76})
and it can be traced back to ancient philosophy,
where it was stated by saying that \textit{peiorem semper conclusio sequitur partem}.

% Let us look at a couple of simple examples that will illustrate our usage of this principle.
Let us illustrate our usage of this principle in the case of (\ref{eq:ex1}),
% by looking at a couple of simple examples. Consider first the case of (\ref{eq:ex1}),
where we saw that the collective preferences defined by the majority criterion are not consistent with transitivity.

For each pair of options $x$ and $y$, the preference $p_{xy}$ is supported by the number of people indicated in the table~(\ref{eq:llull1}).
We view these numbers as degrees of collective belief.
% , or at least, as~being proportional to them.
From a theoretical point of view, it is natural to normalize them to the interval $[0,1]$
by considering their ratio to the total number of voters $V,$ 15~in this case.
However, in practical cases like this one it is more convenient to stay with the numbers of people,
since they are small integers.
In the following we denote these numbers by $V(p_{xy})$ and $v(p_{xy})$, the latter being the normalized ones, that is $v(p_{xy}) = V(p_{xy}) / V.$

The lack of consistency with transitivity occurs in the following way:
$a$ is collectively preferred to $b$ since $V(p_{ab}) = 10 > 5 = V(p_{ba}),$ 
$b$ is collectively preferred to $c$ since $V(p_{bc}) = 11 > 4 = V(p_{cb}),$
but $c$ is collectively preferred to $a$ since $V(p_{ca}) = 9 > 6 = V(p_{ac}).$
This is not consistent with the implication that defines transitivity,
namely $p_{ab} \land p_{bc} \rightarrow p_{ac}$.
However, if we assume this implication to be true,
then the above-mentioned principle of the weakest premise
allows to increase the degree of belief in $p_{ac}$ to the value
$V^*(p_{ac}) = \min(V(p_{ab}), V(p_{bc})) = 10$.

By proceeding in this way with all triads of options,
we arrive at the following revised degrees of belief:
\begin{equation}
\label{eq:vp1}
(V^*(p_{xy})) \,=\,
\begin{tabular}{|c|c|c|}
\hlinestrut
\diaglabel{a} & \textbf{10} & \textbf{10}\\
\hlinestrut
9 & \diaglabel{b} & \textbf{11}\\
\hlinestrut
9 & 9 & \diaglabel{c}\\
\hline
\end{tabular}
\,.
\end{equation}
One can easily check that these new degrees of belief are not increased
by applying the same procedure again.
However, for a larger number of options one would be led
to a repeated application of the same procedure for all triads of options,
or equivalently, to a similar procedure involving longer implications
such as $p_{ab} \land p_{bc} \land p_{cd} \rightarrow p_{ad}$.
Anyway, we will see that the final degrees of belief are always consistent with transitivity.
More specifically, transitivity is ensured if one redefines the collective preferences by considering $x$ preferred to $y$ whenever $V^*(p_{xy}) > V^*(p_{yx})$.

% Second example: 
% Motivation from approval-preferential voting.
% Real example: PCS.
% Example~1 from Soria.

% [Pla de l'article?]

Although it has to do with other constraints, the inconsistency that we have seen to arise from (\ref{eq:ex11}) can also be resolved by a similar procedure
(that happens to choose neither $a$ nor $c$, but $b$!).

In fact, in the next section we will see that this procedure can be extended to quite general logical constraints and that it enjoys several desirable properties.

% In section~3 we will make some general considerations about the notion of preference and the interpretation of preferential votes.

In subsequent sections we will apply it to the different notions of choiceness
that have been pointed out above.

As we will see, depending on which particular constraints are adopted, as well as other details,
one~obtains a variety of rules for social choice.
These rules will include some well-known ones, such as plurality, maximin,
Schulze's method of paths and approval voting.
On the other hand, we will also obtain some new rules,
such as a new Condorcet rule that we call the ``comprehensive prominence method''
and a new method for dealing with approval-disapproval-preferential voting
(the reader specifically interested in this particular application can skip sections 4--6).

Anyway, our method provides a common framework that reveals the precise logic
behind each of these rules.

\section{General framework}

In this section we summarize the general method given in \cite{dp}.
Its aim is to revise the existing degrees of belief
about several logically constrained issues
and to arrive at consistent decisions about them.
%to decide about several issues that are subject to certain logical constraints
%and are the matter of certain degrees of belief.

\paragraph{2.1}
%MOD he afegit `finite'
The~issues under consideration are represented by a finite set of basic logical propositions
together with the corresponding negations.
This set of propositions \hbox{---their} negations included--- will be denoted as $\piset$,
and the negation of~$p$ will be denoted as~$\nt p$.
The elements of $\piset$ are referred to as \dfc{literals.}

\medskip
The logical constraints between issues
are referred to as
% form what is called
the \dfc{doctrine.}
They are specified by a set of compound propositions
that are required to be true. 
This entails a series of material implications between literals
that are conveniently codified 
by rewriting the set of those constraints in \dfc{conjunctive normal form,}
\ie in the form
\begin{equation}
\label{eq:cnf}
\cnf(\doct) \,:=\,
\bigwedge_{\clau\in\doct} \left(\,\bigvee_{\lit\in\clau} \lit\right),
\end{equation}
where $\doct$ stands for a certain collection of subsets of $\piset$.
Each expression within parentheses in the preceding formula
---or equivalently the corresponding set 
$\clau\sbset\piset$---\, is called a \dfc{clause.}

The conjunctive normal form of a doctrine is not unique.
Generally speaking, this can make a difference for the procedure that we are about to introduce.
This ambiguity is eliminated by resorting to the \dfc{Blake canonical form,}
that consists of all the prime clauses of the doctrine under consideration;
a clause being \dfc{prime} %MOD
means that no proper subset of it is 
% gives a disjunction
still entailed by the doctrine.
However, for many doctrines 
%---in particular, all the ones that will be considered in this article---
one is ensured to get the same results with other conjunctive normal forms made of prime clauses.
%MOD
Such a form will be said to be \dfc{$\ast$-equivalent} to the Blake canonical one.
% A sufficient condition for this to 
In particular, this happens whenever the form under consideration has a property that we call \dfc{disjoint-resolvability.}
\ensep
% For more details about
For these and other technical matters we refer the reader to \cite[\secpar{4}]{dp}.
\ensep
Anyway, the doctrine, that from now on we are assimilating to the set $\doct$, is required to satisfy the following conditions:
% assumptions and conventions:
\ensep
%\halfsmallskip
%\iim{D1}%
\iimtxt{D1}
It~is satisfiable.
\ensep
%\halfsmallskip
%\iim{\small D2}%
\iimtxt{D2}
It does not contain unit clauses, \ie clauses with a single literal.
%\ensep
%Otherwise, one can always fix it by replacing such literals and their negations by the corresponding truth values, and deleting them from~$\piset$.
\ensep
%\halfsmallskip
%\iim{\small D3}%
\iimtxt{D3}
It explicitly contains the \textit{tertium non datur} clause $\lit\lor \nt\lit$ for any 
$\lit\in\piset$.
% This special convention provides the trivial implications $\lit\leftarrow\lit$ for any $\lit\in\piset$.
% which is convenient for practical purposes
\iimtxt{D4} It is made of prime clauses.

% The other clauses will be called \dfc{proper} clauses.

\paragraph{2.2}
A~system of degrees of belief is represented by a~mapping $\val$ from $\piset$ to the interval $[0,1]$.
We refer to such a mapping as a~\dfc{valuation,} and 
the image of $p\in\piset$ by a particular valuation $\val$ will be denoted as $\val_p$ or $\val(p)$.
A~valuation $\val$ is called \dfc{balanced} when $\val_p +\val_{\nt p}$ is equal to $1$ for any $p\in\piset$.
The \dfc{truth assignments} of classical logic are balanced valuations with all-or-none values, that is either $0$~or~$1$.
In contrast,  \dfc{degrees of belief} can take fractional values;
besides, they need not be balanced: 
$\val_p + \val_{\nt p}$ may be less than~$1$ (lack of information)
or even greater than~$1$ (presence of contradiction).
In our approach, the latter case can arise because of the logical implications contained in the constraints.

We will also make use of \dfc{partial truth assignments.}
They will be seen as balanced valuations with values in~$\{0,\onehalf,1\}$,
where the value $\onehalf$ can be interpreted as `undefined' or `undecided'.
Partial truth assignments will be used mainly for specifying decisions,
on which case the values $1$, $0$ and $\onehalf$ can be interpreted as meaning respectively
`accepted', `rejected' and `undecided'.

A partial truth assignment $u$ will be said to be \dfc{definitely consistent} with~$\doct$,
or~with $\Phi(\doct)$, % EXTRA
when, for each clause $\clau\in\doct$ and every $\lit\in\clau$,
the following implication holds:\ensep
if $u_\lxt=0$ for every $\lxt\in\clau\setminus\{\lit\}$,
then $u_\lit=1$.
When no undecidedness is present, definite consistency is equivalent to saying that
the truth assignment under consideration makes true the formula $\Phi(\doct)$.
When undecidedness is allowed, definite consistency requires every clause to contain at least one accepted literal, or alternatively, at least two undecided literals.

\medskip
Every valuation $\val$ gives rise to a (partial) decision in the following way,
that depends on a parameter $\mg$ in the interval $0\le\mg\le1$:
For any $\lit\in\piset$,
\begin{alignat}{4}
&\text{$\lit$ is \textbf{accepted} and $\nt\lit$ is \textbf{rejected}} &&\quad\text{whenever}\ \ 
\hphantom{|\,}\val_\lit &&- \val_{\nt\lit}\hphantom{\,|} &&\,>\, \mg,
\label{eq:acceptedrejected}
\\
&\text{$\lit$ and $\nt\lit$ are left \textbf{undecided}} &&\quad\text{whenever}\ \ 
|\,\val_\lit &&- \val_{\nt\lit}\,| &&\,\le\, \mg.
\label{eq:undecided}
\end{alignat}
We will refer to it as the \dfc{decision of margin $\mg$} associated with~$\val$, and
we will identify it with the corresponding partial truth assignment.
% , which will be denoted as $\mu_\mg(\val)$.
In the case $\mg=0$ we will call it the \dfc{basic decision} associated with $\val$.
% , and the corresponding partial truth assignment will be denoted as $\mu(\val)$.
\ensep
In tune with these definitions, the difference $\val_\lit-\val_{\nt\lit}$ will be called the \dfc{acceptability} of~$\lit$ according to $\val$.
\ensep
If the valuation $\val$ is balanced, then the basic decision criterion is equivalent to the majority rule, 
namely accepting $\lit$ and rejecting $\nt\lit$ whenever $\val_\lit>\onehalf$.

\paragraph{2.3}
A clause being true means that at least one of its literals is true;
in other words, if all of its literals but one are known to be false, then the remaining one must be true.
Therefore, the doctrine associated with (\ref{eq:cnf})
provides the following implications:
\begin{equation}
\label{eq:pimplicant}
\lit \,\leftarrow\, \bigwedge_{\substack{\lxt\in\clau\\\lxt\neq\lit}} \nt\lxt,
\end{equation}
for any $\clau\in\doct$ such that $\lit\in\clau$.
%%%% Tret perquè suggereix més del compte:
%Or, for any $p\in\piset$, by putting all its implicants together as a disjunction,
%\begin{equation}
%\label{eq:pimplicants}
%\lit \,\leftarrow\, \bigvee_{\latop{\clau\in\doct}{\clau\ni\lit}}\Bigg(\,\bigwedge_{\latop{\lxt\in\clau}{\lxt\neq\lit}} \nt\lxt\Bigg).
%\end{equation}
% When dealing with degrees of belief, 
Each of these implications is a possible source of belief in~$\lit$.
In this connection, it makes sense to apply the classical rule that 
the conclusion $\lit$ should be believed at least as the weakest of the premises $\nt\lxt$.
This rule requires the right-hand side of (\ref{eq:pimplicant}) to be satisfiable, which is ensured because $\clau$ is prime.
This leads to the following procedure for revising any given degrees of belief $\orv$ about~$\piset$:
% Starting with the degrees of belief specified by a given valuation $\orv$, one infers that
every $\lit\in\piset$ should be believed at least in the new degree $\utv_\lit$ defined by
%Revised degrees of belief.
%Peiorem semper conclusio sequitur partem.
%One-step revision transformation
\begin{equation}
\label{eq:vprime}
\utv_\lit \,=\, \max_{\substack{\clau\in\doct\\\clau\ni\lit}}\, \min_{\substack{\lxt\in\clau\\\lxt\neq\lit}} \,\orv_{\nt\lxt},
\end{equation}
One easily checks the truth of the following statement:

% Basic facts about the one-step revision transformation
\renewcommand\bla{\cite[Lem.~3.1]{dp}}
\begin{lemma}[\bla]\hskip.5em
\label{st:step}
The transformation $\orv\mapsto\utv$ has the following properties:

\iim{a}It is continuous.

\iim{b}$\orv \le \orvbis$ implies $\utv \le \utvbis$.

\iim{c}$\orv \le \utv$.

\iim{d}The image set of $\utv$ is contained in that of $\orv$.
\end{lemma}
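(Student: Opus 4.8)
The plan is to read all four properties directly off the defining formula~(\ref{eq:vprime}), using only that each $\max$ and $\min$ there ranges over a finite set and that $\piset$ is closed under negation with $\nt{\nt\lit}=\lit$.

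For property~(a) I would note that each coordinate evaluation $\orv\mapsto\orv_{\nt\lxt}$ is continuous (indeed linear), that the pointwise minimum of finitely many continuous functions is continuous, and likewise for the pointwise maximum; applying these facts to~(\ref{eq:vprime}) shows that $\orv\mapsto\utv$ is continuous in each coordinate, hence continuous. (The same reasoning in fact gives that the map is nonexpansive for the supremum norm, but only continuity is asserted.) For property~(b), from $\orv\le\orvbis$ we get $\orv_{\nt\lxt}\le\orvbis_{\nt\lxt}$ for every $\lxt$, and since $\min$ and $\max$ are monotone in each of their arguments, formula~(\ref{eq:vprime}) immediately yields $\utv_\lit\le\utvbis_\lit$ for every $\lit$, that is, $\utv\le\utvbis$.

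The crux of property~(c) is condition~(D3): the \textit{tertium non datur} clause $\{\lit,\nt\lit\}$ belongs to~$\doct$ and contains~$\lit$, so it is one of the clauses over which the outer $\max$ in~(\ref{eq:vprime}) is taken; for that clause the inner $\min$ reduces to the single term $\orv_{\nt{\nt\lit}}=\orv_\lit$, whence $\utv_\lit\ge\orv_\lit$. Since $\lit$ was arbitrary, $\orv\le\utv$. For property~(d), fix $\lit$ and pick a clause $\clau$ attaining the outer $\max$ in~(\ref{eq:vprime}); then $\utv_\lit=\min_{\lxt\in\clau,\ \lxt\neq\lit}\orv_{\nt\lxt}$, and since a minimum of finitely many real numbers equals one of them, $\utv_\lit=\orv_{\nt\lxt}$ for some $\lxt\in\clau$. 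As $\nt\lxt\in\piset$, this exhibits every value of $\utv$ as a value of $\orv$, \ie the image set of $\utv$ is contained in that of $\orv$.

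None of the four parts is a genuine obstacle; each follows by unwinding the definition. The only point needing a word of care is the implicit finiteness, hence attainedness, of the $\max$ and $\min$ in~(\ref{eq:vprime}): this is precisely what makes both the continuity argument for~(a) and the ``a finite minimum is one of its terms'' argument for~(d) valid, and in the present setting $\piset$ ---and therefore $\doct$--- is finite, so there is nothing to worry about.
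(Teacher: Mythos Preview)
Your proof is correct and matches the paper's own treatment: the paper simply asserts that ``one easily checks'' these four properties from formula~(\ref{eq:vprime}), deferring the details to \cite[Lem.~3.1]{dp}, and what you have written is exactly the straightforward verification one would expect. The only microscopic addition worth making is that the inner $\min$ in~(\ref{eq:vprime}) is over a \emph{nonempty} set because condition~(D2) forbids unit clauses (so $\clau\setminus\{\lit\}\neq\emptyset$); you implicitly rely on this in part~(d) when you say the minimum is attained, and it is worth naming alongside your invocation of~(D3) for part~(c).
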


\medskip
As soon as we accept $\utv$ as new degrees of belief, it makes sense to repeat the same operation with $\orv$ replaced by $\utv$, thus obtaining a still higher valuation~$\orv''$, and so on. By proceeding in this way, one obtains a non-decreasing sequence of valuations $\ntv{n}\ (n = 0,1,2,\dots)$ with the property that all of them take values in the same finite set. Obviously, this implies that this sequence will eventually reach an invariant state~$\urv$. This eventual valuation is, by definition, the \dfd{upper revised valuation}.

\paragraph{2.4}
The main properties of the upper revised valuation are collected in the following statements:

\medskip
% Basic facts about the upper revised valuation
\renewcommand\bla{Basic facts \cite[Thm.~3.2]{dp}}
\begin{theorem}[\bla]\hskip.5em
\label{st:rev}
The transformation $\orv\mapsto\urv$ has the following properties:

\iim{a}It is continuous.

\iim{b}$\orv \le \orvbis$ implies $\urv \le \urvbis$.

\iim{c}$\orv \le \urv$.

\iim{d}The image set of $\urv$ is contained in that of $\orv$.
\end{theorem}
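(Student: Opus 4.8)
The plan is to derive each of the four properties of the map $\orv\mapsto\urv$ from the corresponding property of the single step $\orv\mapsto\utv$ in Lemma~\ref{st:step}, using the fact that $\urv$ is obtained as the limit (in fact, the eventual stabilized value) of the iteration $\ntv{0}=\orv$, $\ntv{n+1}=(\ntv{n})'$. The key structural observation, already noted in \secpar{1.3}, is that all the valuations $\ntv{n}$ take values in the finite set $\val(\piset)\sbset[0,1]$ (by Lemma~\ref{st:step}(d) applied inductively), so the non-decreasing sequence $\ntv{n}$ stabilizes after finitely many steps; denote by $N=N(\orv)$ the first index with $\ntv{N+1}=\ntv{N}$, so that $\urv=\ntv{N}$.

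The steps, in order. First, property (c): by Lemma~\ref{st:step}(c) the sequence $\ntv{n}$ is non-decreasing, and $\urv=\ntv{N}\ge\ntv{0}=\orv$. Property (d) is similar: by Lemma~\ref{st:step}(d) and induction, the image set of each $\ntv{n}$ is contained in that of $\orv$, and in particular so is that of $\urv=\ntv{N}$. Property (b): if $\orv\le\orvbis$, then applying Lemma~\ref{st:step}(b) repeatedly gives $\ntv{n}\le(\orvbis)^{(n)}$ for all $n$. One then has to compare the two possibly different stabilization times; taking $n$ at least as large as both (say $n=\max(N(\orv),N(\orvbis))$, or simply letting $n\to\infty$), we get $\urv=\ntv{n}\le(\orvbis)^{(n)}=\urvbis$. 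Property (a), continuity, will be the main obstacle: the stabilization index $N(\orv)$ is \emph{not} a continuous function of $\orv$ --- it can jump as $\orv$ crosses the boundaries between regions on which the relevant $\max$--$\min$ in~(\ref{eq:vprime}) is attained by a fixed clause --- so one cannot simply say ``a finite composition of continuous maps is continuous'' with a fixed number of factors.

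The way around this is to show that, although $N$ is not globally bounded as a function on all of $[0,1]^\piset$ in an obviously useful way, it \emph{is} bounded: the number of distinct valuations that can possibly occur as some $\ntv{n}$ starting from any $\orv$ is finite once we restrict attention to valuations with values in a fixed finite set, but since the finite set $\val(\piset)$ varies with $\orv$ this needs a little care. The clean argument is a local one: fix $\orv_0$ and let $N_0=N(\orv_0)$. The map $\orv\mapsto\ntv{N_0}$ (the $N_0$-fold iterate of the single step) is continuous, being a finite composition of the continuous map of Lemma~\ref{st:step}(a). Now $(\ntv{N_0})=\ntv{N_0+1}$, \ie $\ntv{N_0}$ is a fixed point of the step map, at $\orv=\orv_0$; I would like to conclude that $\urv=\ntv{N_0}$ in a whole neighbourhood of $\orv_0$, which would give continuity of $\urv$ at $\orv_0$. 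This requires knowing that once the iteration reaches a fixed point it stays there, which is immediate, together with the fact that $N(\orv)\le N_0$ for $\orv$ near $\orv_0$ --- \ie the stabilization time is upper semicontinuous. That last point is where the real work sits, and it is plausibly handled in \cite{dp} via the finiteness of the value set together with a monotonicity-plus-boundedness argument showing that the iteration from \emph{any} starting valuation with values in a fixed finite set $F$ stabilizes in at most $|F|\cdot|\piset|$ steps, a bound that is locally constant in $\orv_0$. I would present (c), (d), (b) in full in a few lines, then state the finite-stabilization bound as the crux, cite \cite{dp} for it if available, and deduce (a) from it as above.
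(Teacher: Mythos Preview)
The paper does not actually prove this theorem: it is quoted verbatim from \cite[Thm.~3.2]{dp}, with no argument given here. So there is no ``paper's own proof'' to compare against, and your write-up stands on its own merits.

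Your treatment of (b), (c), (d) is correct and is exactly the natural route: iterate the corresponding item of Lemma~\ref{st:step} and pass to the stabilized value.

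For (a), your instinct is right that the stabilization index $N(\orv)$ is not continuous, and that the fix is a uniform bound on $N$. But you make this harder than it is. You already observe that the sequence $\ntv{n}$ is non-decreasing and, by Lemma~\ref{st:step}(d), takes values in the finite set $\orv(\piset)$, which has at most $|\piset|$ elements. Hence each coordinate $\ntv{n}_\lit$ can strictly increase at most $|\piset|-1$ times, and the whole vector stabilizes after at most $|\piset|\,(|\piset|-1)$ steps. This bound depends only on $|\piset|$, not on $\orv$: it is \emph{globally} uniform, not merely ``locally constant''. So $\urv = \ntv{N_0}$ with the fixed $N_0 = |\piset|\,(|\piset|-1)$ for every $\orv$, and continuity follows at once from Lemma~\ref{st:step}(a) as a fixed finite composition. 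There is no need for the upper-semicontinuity detour or for the local argument around a base point $\orv_0$; the phrase ``that last point is where the real work sits'' overstates the difficulty.
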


% Characterization
\renewcommand\bla{Characterization \cite[Thm.~3.3]{dp}}
\begin{theorem}[\bla]\hskip.5em
\label{st:char}
The upper revised valuation $\urv$ is the lowest of the valuations $\val$ that lie above~$\orv$ and % satisfy
are consistent with the doctrine in the sense of satisfying 
the equation $\val{}'=\val$.
\end{theorem}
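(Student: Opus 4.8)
The plan is to split the statement into two halves and then combine them. Half~(i): $\urv$ itself belongs to the set under consideration, i.e.\ $\urv\ge\orv$ and $\urv{}'=\urv$. Half~(ii): $\urv$ is a lower bound for that set, i.e.\ every valuation $\val$ with $\val\ge\orv$ and $\val{}'=\val$ satisfies $\val\ge\urv$. Since an element of a set that is simultaneously a lower bound for that set is automatically its least element, (i)~and~(ii) together give exactly the claim (and incidentally show that such a least element exists).

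For half~(i), I would read the fixed-point property straight off the construction in \secpar{1.3}. Consider the sequence $\ntv{n}$ defined by $\ntv0=\orv$ and $\ntv{n+1}=(\ntv{n}){}'$. As noted there, it is non-decreasing and all its terms lie in one and the same finite set, so it is eventually constant: choose $N$ with $\ntv{n}=\urv$ for every $n\ge N$. Then $\urv\ge\ntv0=\orv$ (this also follows from Theorem~\ref{st:rev}(c)), and evaluating the recursion at $n=N$ gives $\urv{}'=(\ntv{N}){}'=\ntv{N+1}=\urv$.

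For half~(ii), let $\val$ be any valuation with $\val\ge\orv$ and $\val{}'=\val$. I would prove $\val\ge\ntv{n}$ for all $n$ by induction on $n$: the base case $n=0$ is the hypothesis $\val\ge\orv=\ntv0$, and for the inductive step, monotonicity of the one-step transformation, namely part~(b) of Lemma~\ref{st:step}, gives $\val{}'\ge(\ntv{n}){}'=\ntv{n+1}$, which is $\val\ge\ntv{n+1}$ because $\val{}'=\val$. Taking $n=N$ then yields $\val\ge\ntv{N}=\urv$, as required.

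I do not expect a genuine obstacle here. The only point that needs care is that the sequence $\ntv{n}$ really does stabilize, and that has already been settled in \secpar{1.3} from monotonicity together with finiteness of the common value set; everything else is the short induction above, which rests only on Lemma~\ref{st:step}(b). In particular, continuity (Theorem~\ref{st:rev}(a)) plays no role in this statement.
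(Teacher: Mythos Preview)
Your argument is correct. Note, however, that the present paper does not actually prove this theorem; it is quoted from \cite[Thm.~3.3]{dp}, so there is no in-paper proof to compare against. Your two-part argument---first reading off $\urv\ge\orv$ and $\urv{}'=\urv$ from the eventual stabilization of the sequence~$\ntv{n}$, then inducting $\val\ge\ntv{n}$ via the monotonicity of the one-step map (Lemma~\ref{st:step}(b))---is the standard least-fixed-point proof and is almost certainly what \cite{dp} does as well.
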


% Consistency of the associated decisions
\renewcommand\bla{Consistency of the associated decisions \cite[Cor.~3.7]{dp}}
\begin{theorem}[\bla]\hskip.5em
\label{st:dec}
For any $\mg$ in the interval $0\cd\le\mg\cd\le1$, 
the decision of margin~$\mg$ associated with 
the upper revised valuation is always definitely consistent
with the doctrine.
% in the following sense:
%for each clause $\clau\in\doct$ and every $\lit\in\clau$, one has the following implication:
%if $\lxt$ is rejected for every $\lxt\in\clau\setminus\{\lit\}$, then $\lit$ is accepted.
\end{theorem}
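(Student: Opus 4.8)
The plan is to argue by contradiction, the only real ingredient being the fixed-point character of the upper revised valuation. By Theorem~\ref{st:char}, $\urv$ is a fixed point of the transformation $\orv\mapsto\utv$ of~(\ref{eq:vprime}); hence, for every clause $\clau\in\doct$ and every literal $\lit\in\clau$, discarding from the maximum in~(\ref{eq:vprime}) all terms but the one indexed by $\clau$ gives
\begin{equation*}
\urv_\lit \;\ge\; \min_{\substack{\lyt\in\clau\\\lyt\neq\lit}} \urv_{\nt\lyt}.
\end{equation*}
Condition~D2 guarantees $\clau\setminus\{\lit\}\neq\emptyset$, so the right-hand side makes sense; this family of inequalities is the whole engine of the proof.

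Now suppose the decision $u$ of margin $\mg$ associated with $\urv$ is not definitely consistent. Spelling out the definition, there are a clause $\clau\in\doct$ and a literal $\lit_0\in\clau$ such that every $\lxt\in\clau\setminus\{\lit_0\}$ is rejected, \ie $\urv_{\nt\lxt}-\urv_\lxt>\mg$, while $\lit_0$ is not accepted, \ie $\urv_{\lit_0}-\urv_{\nt\lit_0}\le\mg$. The move I would single out as the crux is the choice of an auxiliary literal: let $\lit_m\in\clau$ be one at which $\urv_{\nt\lit_m}=\min_{\lyt\in\clau}\urv_{\nt\lyt}$. Its usefulness is this: for any $\lxt\in\clau$ with $\lxt\neq\lit_m$, the displayed inequality with $\lit$ taken to be $\lxt$ bounds $\urv_\lxt$ below by $\min_{\lyt\in\clau\setminus\{\lxt\}}\urv_{\nt\lyt}$, and since $\lit_m$ still belongs to $\clau\setminus\{\lxt\}$ and already attains the least value of $\urv_{\nt\lyt}$ over all of $\clau$, that minimum equals $\urv_{\nt\lit_m}$; thus $\urv_\lxt\ge\urv_{\nt\lit_m}$ for every $\lxt\in\clau$ distinct from $\lit_m$.

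It then remains to distinguish two cases. If $\lit_m\neq\lit_0$, then $\lit_m$ is one of the rejected literals, so $\urv_{\nt\lit_m}>\urv_{\lit_m}$; but the displayed inequality for $\lit=\lit_m$, together with minimality of $\urv_{\nt\lit_m}$, forces $\urv_{\lit_m}\ge\min_{\lyt\in\clau\setminus\{\lit_m\}}\urv_{\nt\lyt}\ge\urv_{\nt\lit_m}$, which is absurd. If $\lit_m=\lit_0$, then each $\lxt\in\clau\setminus\{\lit_0\}$ is rejected and, being distinct from $\lit_m$, satisfies $\urv_\lxt\ge\urv_{\nt\lit_m}$ by the previous paragraph; hence $\urv_{\nt\lxt}>\urv_\lxt+\mg\ge\urv_{\nt\lit_m}+\mg$ for every such $\lxt$, so $\min_{\lyt\in\clau\setminus\{\lit_0\}}\urv_{\nt\lyt}>\urv_{\nt\lit_m}+\mg$ (a strict lower bound is inherited by a minimum over finitely many terms), and then the displayed inequality for $\lit=\lit_0$ gives $\urv_{\lit_0}\ge\min_{\lyt\in\clau\setminus\{\lit_0\}}\urv_{\nt\lyt}>\urv_{\nt\lit_m}+\mg=\urv_{\nt\lit_0}+\mg$, \ie $\lit_0$ would in fact be accepted, contrary to its choice. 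In either case we obtain a contradiction, so $u$ is definitely consistent; this holds for every $\mg\in[0,1]$, in particular for the basic decision $\mg=0$.

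The argument has really one idea, the choice of $\lit_m$: selecting the literal of $\clau$ that minimizes $\urv_{\nt\lyt}$ is exactly what makes every minimum $\min_{\lyt\in\clau\setminus\{\cdot\}}\urv_{\nt\lyt}$ collapse to the single number $\urv_{\nt\lit_m}$, and this is what turns the fixed-point relations into the two short inequality chains above. I expect the only fussy point to be keeping the strict inequality that defines ``rejected'' apart from the non-strict one that defines ``not accepted'', together with the routine appeal to D2 so that all the index sets on which minima are taken stay nonempty.
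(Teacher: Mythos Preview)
Your argument is correct. The paper does not actually prove this theorem here; it is quoted from \cite[Cor.~3.7]{dp}, so there is no in-paper proof to compare against. Your proof is a clean, self-contained derivation from the fixed-point characterization (Theorem~\ref{st:char}): the key step---choosing $\lit_m\in\clau$ minimizing $\urv_{\nt\lit_m}$ so that every restricted minimum $\min_{\lyt\in\clau\setminus\{\lxt\}}\urv_{\nt\lyt}$ collapses to $\urv_{\nt\lit_m}$---is exactly what is needed, and the two-case split (whether $\lit_m=\lit_0$ or not) handles the strict versus non-strict inequalities correctly. The appeal to~D2 for nonemptiness of the index sets is in order.
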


% Respect for consistent majority decisions
\renewcommand\bla{Respect for consistent majority decisions \cite[Thm.~3.9]{dp}}
\begin{theorem}[\bla]\hskip.5em
\label{st:majority}
Assume that every $p\in\piset$ satisfies either $\orv_\lit>\onehalf>\orv_{\nt\lit}$ or, contrarily, $\orv_{\nt\lit}>\onehalf>\orv_\lit$. Assume also that the basic decision associated with $\orv$ (which contains no undecidedness) is consistent with the doctrine. In this case, the basic decision associated with the upper revised valuation $\urv$ is the same.
\end{theorem}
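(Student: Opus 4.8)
The plan is to reduce the whole statement to one monotonicity-of-iteration estimate. Write $d$ for the basic decision associated with~$\orv$. Under the strict-majority hypothesis, in each pair $\{\lit,\nt\lit\}$ exactly one of $\orv_\lit,\orv_{\nt\lit}$ exceeds~$\onehalf$ (and then the other is $<\onehalf$), so $d$ carries no undecidedness and accepts precisely the literal of the pair whose $\orv$-value is $>\onehalf$. Since $d$ is assumed consistent with~$\doct$ and has no undecidedness, this says concretely that every clause $\clau\in\doct$ contains at least one literal accepted by~$d$. One half of the conclusion is then free: by Theorem~\ref{st:rev}\,(c) we have $\orv\le\urv$, hence $\urv_\lit\ge\orv_\lit>\onehalf$ for every $\lit$ accepted by~$d$. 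So the whole proof comes down to showing that $\urv_\liit<\onehalf$ for every literal $\liit$ rejected by~$d$; granting this, in each pair $\{\lit,\nt\lit\}$ with, say, $\lit$ accepted by~$d$ one gets $\urv_\lit>\onehalf>\urv_{\nt\lit}$, so $\urv_\lit-\urv_{\nt\lit}>0$, and by~(\ref{eq:acceptedrejected})--(\ref{eq:undecided}) with $\mg=0$ the basic decision associated with $\urv$ accepts $\lit$ and rejects $\nt\lit$ --- that is, it coincides with~$d$.

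To obtain $\urv_\liit<\onehalf$ for rejected $\liit$, I would prove by induction on the iteration index~$n$ that $\ntv n_\liit<\onehalf$ for every $\liit$ rejected by~$d$. The base case $n=0$ is exactly the hypothesis on $\orv=\ntv0$. For the inductive step I would unwind~(\ref{eq:vprime}): $\ntv{n+1}_\liit=\max_{\clau\ni\liit}\min_{\lxt\in\clau\setminus\{\liit\}}\ntv n_{\nt\lxt}$, the maximum ranging over the clauses of~$\doct$ containing~$\liit$ (a non-empty family, since D3 puts $\{\liit,\nt\liit\}$ into~$\doct$). The key remark is that \emph{every} such clause $\clau$ contributes a value $<\onehalf$: by consistency of~$d$, some literal $\lyt\in\clau$ is accepted by~$d$, and $\lyt\neq\liit$ because $\liit$ is rejected, so $\lyt\in\clau\setminus\{\liit\}$; then $\nt\lyt$ is rejected by~$d$, whence $\ntv n_{\nt\lyt}<\onehalf$ by the induction hypothesis, and therefore $\min_{\lxt\in\clau\setminus\{\liit\}}\ntv n_{\nt\lxt}\le\ntv n_{\nt\lyt}<\onehalf$. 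Since all the valuations $\ntv n$ take their values in one common finite set (\secpar{1.3}), the outer maximum of these below-$\onehalf$ numbers is attained and again $<\onehalf$, so that $\ntv{n+1}_\liit<\onehalf$. As the sequence $\ntv n$ is eventually equal to~$\urv$, the induction gives $\urv_\liit<\onehalf$ for every rejected~$\liit$, which is what was needed.

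The point that needs care --- and the place where the hypothesis that $d$ is consistent with the doctrine genuinely bites --- is precisely the claim that \emph{every} clause containing the rejected literal $\liit$ contributes a value below~$\onehalf$: the outer operation in~(\ref{eq:vprime}) is a maximum, so it is not enough that most clauses behave well. Consistency is exactly what furnishes, in each clause meeting $\liit$, an accepted literal whose negation is rejected; it also settles the \textit{tertium non datur} clause $\{\liit,\nt\liit\}$ at once, by taking $\lyt=\nt\liit$. Everything else is routine. (Alternatively one could invoke Theorem~\ref{st:char}, exhibiting a fixed point $\val\ge\orv$ of the revision map all of whose $d$-rejected coordinates are $<\onehalf$, so that $\orv\le\urv\le\val$ forces the same bound on~$\urv$; but constructing and checking such a $\val$ amounts to proving the same inequalities, so the direct induction looks cleanest.)
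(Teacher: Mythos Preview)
The paper does not itself prove this theorem; it is quoted from \cite[Thm.~3.9]{dp} without proof. Your argument is correct and is the natural one: induct on~$n$ to show $\ntv{n}_\liit<\onehalf$ for every literal $\liit$ rejected by~$d$, using at the inductive step that each clause containing~$\liit$ must, by consistency of~$d$, contain an accepted literal $\lyt\neq\liit$, so that $\nt\lyt$ is rejected and the inner minimum is pulled below~$\onehalf$. Your appeal to the common finite value set to keep the outer maximum strictly below~$\onehalf$ is sound, though heavier than needed: the maximum in~(\ref{eq:vprime}) ranges over the clauses of~$\doct$, a finite family, so a maximum of finitely many numbers each $<\onehalf$ is automatically $<\onehalf$. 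The alternative route via Theorem~\ref{st:char} that you sketch also works and, as you observe, repackages the same estimate.
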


% Respect for unanimity
\renewcommand\bla{Respect for unanimity \cite[Thm.~3.11]{dp}}
\begin{theorem}[\bla]\hskip.5em
\label{st:unanimity}
Assume that $\orv$ is an aggregate of consistent truth assignments.
In this case, having $\orv_\lit = 1$ implies that $\lit$ is accepted by the basic decision associated with the upper revised valuation~$\urv$.
\end{theorem}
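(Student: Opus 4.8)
The plan is to follow, through the iteration $\ntv n$ that produces the upper revised valuation, the set $U_n$ of literals that have reached the value~$1$, to recognise that the step $U_n\mapsto U_{n+1}$ is exactly one round of unit propagation, and then to prove the invariant that every literal of $U_n$ is made true by each of the truth assignments being aggregated; unanimity will then prevent the eventual set $U_\infty$ from containing both a literal and its negation. The crux is that invariant (a short induction using clause satisfaction); the only technical point is reading the unit-propagation step off the max--min formula~(\ref{eq:vprime}).

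First I would write $\orv=\sum_{i=1}^m\lambda_i\,t^i$ with $\lambda_i>0$, $\sum_i\lambda_i=1$, and each $t^i$ a truth assignment satisfying every clause of~$\doct$; this is possible by hypothesis, with $m\ge1$ by~(D1), and makes $\orv$ balanced. Put $L=\{\lit\in\piset:\orv_\lit=1\}$; the statement is vacuous unless $L\neq\emptyset$, which I assume. From $\sum_i\lambda_i t^i_\lit=1=\sum_i\lambda_i$ with $\lambda_i>0$ and $t^i_\lit\in\{0,1\}$ it follows that every $\lit\in L$ has $t^i_\lit=1$ for all~$i$; in short, each $t^i$ makes every literal of~$L$ true.

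Next, recall from~\secpar{1.3} that $\ntv0=\orv$, that $\ntv{n+1}=(\ntv n)'$ is given by~(\ref{eq:vprime}), and that $\ntv n=\urv$ for all large~$n$. Put $U_n=\{\lit\in\piset:\ntv n_\lit=1\}$, so that $U_0=L$ and $U_n=U_\infty:=\{\lit:\urv_\lit=1\}$ for large~$n$. Since every term of~(\ref{eq:vprime}) lies in $[0,1]$ and no clause is a unit clause~(D2), the maximum of minima defining $\ntv{n+1}_\lit$ equals~$1$ precisely when some clause $\clau\in\doct$ with $\clau\ni\lit$ has $\nt\lxt\in U_n$ for every $\lxt\in\clau\setminus\{\lit\}$; thus
\[
\lit\in U_{n+1}\iff \lit\in U_n\ \text{ or }\ \bigl(\,\exists\,\clau\in\doct\colon\ \clau\ni\lit\ \text{ and }\ \nt\lxt\in U_n\ \text{ for all }\ \lxt\in\clau\setminus\{\lit\}\,\bigr),
\]
which is one round of unit propagation started from $U_0=L$ (the \textit{tertium non datur} clause $\{\lit,\nt\lit\}$ merely reproduces the inclusion $U_n\subseteq U_{n+1}$).

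Finally, I would prove by induction on~$n$ that every literal of $U_n$ is made true by every~$t^i$: the case $n=0$ is the fact recorded above, and if $\lit\in U_{n+1}\setminus U_n$ is witnessed by a clause $\clau\in\doct$ with $\clau\ni\lit$ and $\nt\lxt\in U_n$ for all $\lxt\in\clau\setminus\{\lit\}$, then, fixing~$i$, the inductive hypothesis makes each such $\nt\lxt$ true under~$t^i$, hence each $\lxt$ with $\lxt\neq\lit$ false, so $t^i$, which satisfies~$\clau$, makes $\lit$ true. Hence every literal of $U_\infty$ is true under~$t^1$. Now suppose $\orv_\lit=1$; then $\lit\in L$ and $\urv_\lit=1$ by Theorem~\ref{st:rev}(c), while $\nt\lit\notin U_\infty$, since otherwise $t^1_{\nt\lit}=1$, i.e.\ $t^1_\lit=0$, contradicting $t^1_\lit=1$. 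Therefore $\urv_{\nt\lit}<1=\urv_\lit$, so $\urv_\lit-\urv_{\nt\lit}>0$ and $\lit$ is accepted by the basic decision associated with~$\urv$ (equation~(\ref{eq:acceptedrejected}) with $\mg=0$), as required.
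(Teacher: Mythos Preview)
The paper does not actually prove this theorem here; it is quoted verbatim from \cite[Thm.~3.11]{dp}, so there is no in-paper proof to compare against. That said, your argument is correct and self-contained. Tracking the sets $U_n=\{\lit:\ntv n_\lit=1\}$, reading off from~(\ref{eq:vprime}) that $U_n\mapsto U_{n+1}$ is one round of unit propagation, and maintaining the invariant that every aggregated truth assignment $t^i$ makes all of $U_n$ true, is exactly the right idea; the inductive step uses only that each $t^i$ satisfies every clause, which is what ``consistent'' means for full truth assignments (see \secpar{1.2}). The conclusion $\urv_{\nt\lit}<1=\urv_\lit$ then follows as you say.

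Two cosmetic points that do not affect correctness: the appeal to~(D1) for $m\ge1$ is misplaced, since the hypothesis that $\orv$ \emph{is} an aggregate already furnishes at least one summand (you would need~(D1) only if you had to \emph{produce} such $t^i$); and the remark that $\orv$ is balanced is true but never used.
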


% Monotonicity
\renewcommand\bla{Monotonicity \cite[Thm.~3.14 and Cor.~3.15]{dp}}
\begin{theorem}[\bla]\hskip.5em
\label{st:mono}
Assume that the valuation \,$\orv$\, is modified into a new one \,$\orvz$\, such that
\begin{equation}
\label{eq:mono}
\orvz_\lit \,>\, \orv_\lit,\qquad \orvz_\liit \,=\, \orv_\liit,\quad \forall q\in\piset\setminus\{\lit\}.
\end{equation}
In this case, the acceptability of $\lit$ either increases or stays constant:
\begin{equation}
\label{eq:mona}
\urvz_\lit - \urvz_{\nt\lit} \,\ge\, \urv_\lit - \urv_{\nt\lit}.
\end{equation}
As a consequence,
if $\lit$ is accepted \textup[resp.~not rejected\,\textup] in the decision of margin $\mg$ associated with~$\urv$,
then it is also accepted \textup[resp.~not rejected\,\textup] in~the decision of margin $\mg$ associated with~$\urvz$.
\end{theorem}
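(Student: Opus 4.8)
The plan is to peel (\ref{eq:mona}) down to a one–sided estimate on $\urvz_{\nt\lit}$ alone, and then obtain that estimate by exhibiting an explicit fixed point of the revision operator that lies above $\orvz$, invoking Theorem~\ref{st:char}. Write $\delta:=\urvz_\lit-\urv_\lit$. Since $\orv\le\orvz$, Theorem~\ref{st:rev}(b) yields $\urv\le\urvz$, so $\delta\ge0$ and $\urvz_\lit=\urv_\lit+\delta$; hence (\ref{eq:mona}) is equivalent to the single inequality $\urvz_{\nt\lit}\le\urv_{\nt\lit}+\delta$. So it will be enough to produce a valuation $w$ with $w\ge\orvz$, with $w'=w$, and with $w_{\nt\lit}\le\urv_{\nt\lit}+\delta$: Theorem~\ref{st:char} then forces $\urvz\le w$, and reading off the coordinate $\nt\lit$ gives exactly what we need.

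The first thing I would try is to raise every degree of belief of $\urv$ by the slack $\delta$, \ie to set $w_\liit=\urv_\liit+\delta$ for all $\liit$. Because the revision step (\ref{eq:vprime}) is built solely from $\max$, $\min$ and the coordinate projections $v\mapsto v_{\nt\lxt}$, and because $\max$ and $\min$ commute with the addition of a constant, one sees at once that such a $w$ satisfies $w'=w$ (recall that $\urv$ is a fixed point of the revision operator, Theorem~\ref{st:char}), dominates $\orvz$, and has value $\urv_{\nt\lit}+\delta$ at $\nt\lit$. The only flaw --- and the only real obstacle in the whole proof --- is that $\urv_\liit+\delta$ may exceed $1$, so this $w$ need not be a valuation. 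The remedy is to truncate: set $w_\liit:=\min(\urv_\liit+\delta,\,1)$ for every $\liit\in\piset$, \ie $w=\phi\circ\urv$ with $\phi(t):=\min(t+\delta,1)$. Now $w$ is a genuine valuation, and --- this is the key point --- it is still a fixed point. Indeed $\phi$ is nondecreasing, and a nondecreasing map commutes with the (finite) $\min$ taken inside a clause and with the $\max$ taken over clauses in (\ref{eq:vprime}); hence $R(\phi\circ u)=\phi\circ(R u)$ for the revision operator $R$ and any valuation $u$ for which $\phi\circ u$ is again a valuation, and applying this with $u=\urv$ and $R(\urv)=\urv$ gives $w'=\phi\circ\urv=w$. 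Finally $w\ge\orvz$: for $\liit\neq\lit$ one has $w_\liit\ge\urv_\liit\ge\orv_\liit=\orvz_\liit$, while $w_\lit=\min(\urv_\lit+\delta,1)=\min(\urvz_\lit,1)=\urvz_\lit\ge\orvz_\lit$. Hence $\urvz\le w$ by Theorem~\ref{st:char}, so $\urvz_{\nt\lit}\le w_{\nt\lit}\le\urv_{\nt\lit}+\delta$, which is precisely (\ref{eq:mona}).

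The final clause about decisions is then pure bookkeeping with the definitions (\ref{eq:acceptedrejected})--(\ref{eq:undecided}). If $\lit$ is accepted in the decision of margin $\mg$ associated with $\urv$, then $\urv_\lit-\urv_{\nt\lit}>\mg$, and (\ref{eq:mona}) gives $\urvz_\lit-\urvz_{\nt\lit}>\mg$, so $\lit$ is accepted in the decision of margin $\mg$ associated with $\urvz$; likewise, if $\lit$ is merely not rejected with respect to $\urv$, then $\urv_{\nt\lit}-\urv_\lit\le\mg$, i.e.\ $\urv_\lit-\urv_{\nt\lit}\ge-\mg$, whence $\urvz_\lit-\urvz_{\nt\lit}\ge-\mg$, so $\lit$ is not rejected with respect to $\urvz$ either. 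To summarize, the whole substance of the argument is the construction of $w$, and the one place that needs care is exactly the one flagged above: the naive uniform upward shift leaves the range $[0,1]$, yet its truncation at $1$ is still a fixed point, precisely because the revision operator commutes with every coordinatewise nondecreasing transformation.
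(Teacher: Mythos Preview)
Your argument is correct. The paper itself does not prove this statement --- it merely cites \cite[Thm.~3.14 and Cor.~3.15]{dp} --- so there is no in-paper proof to compare against. Your approach is clean and self-contained: the reduction to the one-sided bound $\urvz_{\nt\lit}\le\urv_{\nt\lit}+\delta$ is straightforward, and the construction of the witness $w=\phi\circ\urv$ with $\phi(t)=\min(t+\delta,1)$ is the heart of the matter. The commutation claim is sound because any nondecreasing $\phi$ satisfies $\phi(\min_i a_i)=\min_i\phi(a_i)$ and $\phi(\max_i a_i)=\max_i\phi(a_i)$ on finite families, and the revision operator~(\ref{eq:vprime}) is precisely an iterated min--max of coordinate projections; hence $w'=w$ follows from $(\urv)'=\urv$. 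The verification that $w$ dominates $\orvz$ and that $w_{\nt\lit}\le\urv_{\nt\lit}+\delta$ is routine, and Theorem~\ref{st:char} then delivers $\urvz\le w$. The deduction of the decision-level consequences from~(\ref{eq:mona}) is immediate.

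One cosmetic remark: in the line ``$w_\lit=\min(\urv_\lit+\delta,1)=\min(\urvz_\lit,1)=\urvz_\lit$'' you are silently using that $\urvz_\lit\le 1$, which holds because $\urvz$ is a valuation; it would not hurt to say so explicitly. Otherwise nothing needs changing.
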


% [\,Duality\,]

\paragraph{2.5}
Having the equality $\urv_\lit=\utv_\lit$ for the Blake canonical form ---or for any disjoint-resolvable prime conjunctive normal form--- guarantees that the degree of belief $\urv_\lit$ does not derive from unsatisfiable conjunctions \cite[\secpar{4.3}]{dp}. If~that equality holds no matter the initial valuation $\orv$, we say that the doctrine under consideration is \dfc{unquestionable for~$\lit$.} Sometimes, the equality can be guaranteed only under certain special circumstances. In~particular, it~can happen 
% it can fail to hold for certain valuations $\orv$, but one can still guarantee
that it holds whenever $\lit$ is accepted according to~$\urv$; in that case we say that the doctrine is \dfc{unquestionable for~$\lit$ when accepted.}
%Finally, when the doctrine is unquestionable for both~$\lit$ and~$\nt\lit$ when accepted, then we say that it is \dfc{unquestionable for deciding about~$\lit$.}
Sufficient conditions for ensuring such properties are given in \cite[Thm.\,4.8, Cor.\,4.9]{dp}.

\paragraph{2.6}
The following fact will be useful for computations:

\medskip
\begin{lemma}\hskip.5em
\label{st:lem-best-option}
The successive valuations $\ntv{n}$ satisfy the following formula for any $n\ge1$:
\begin{equation}
\label{eq:vpn}
\ntv{n}_\lit \,=\, \max \big( \, \orv_\lit,
\max_{\substack{\clau\in\doct\\\clau\ni\lit\\\clau\neq\{\lit,\nt\lit\}}} \min_{\substack{\lxt\in\clau\\\lxt\neq\lit}} \,\ntv{n-1}_{\nt\lxt} \,\big).
\end{equation}

\end{lemma}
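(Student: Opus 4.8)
The plan is to prove (\ref{eq:vpn}) by induction on $n$, by unfolding the recursion $\ntv{n}=(\ntv{n-1}){}'$ coming from (\ref{eq:vprime}) and then peeling off the \textit{tertium non datur} clause, whose presence in $\doct$ is guaranteed by condition~D3.

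First I would record the elementary decomposition on which everything rests. Applying (\ref{eq:vprime}) to the valuation $\ntv{n-1}$, I would split the maximum over the clauses $\clau\ni\lit$ into the single term coming from $\clau=\{\lit,\nt\lit\}$ and the contribution of all the other such clauses. For the clause $\{\lit,\nt\lit\}$ the inner minimum runs over the one literal $\nt\lit$, so its contribution is $\ntv{n-1}_{\lit}$ (negation being an involution, $\nt{\nt\lit}=\lit$). This gives
\begin{equation*}
\ntv{n}_\lit \,=\, \max\big(\,\ntv{n-1}_\lit,\ \max_{\substack{\clau\in\doct,\ \clau\ni\lit\\\clau\neq\{\lit,\nt\lit\}}}\ \min_{\substack{\lxt\in\clau\\\lxt\neq\lit}}\,\ntv{n-1}_{\nt\lxt}\,\big),
\end{equation*}
where an empty maximum is read as $0$; this degenerate case occurs only when $\{\lit,\nt\lit\}$ is the sole clause through $\lit$, and then the identity just says $\ntv{n}_\lit=\ntv{n-1}_\lit$.

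For $n=1$ the displayed identity is already (\ref{eq:vpn}), since $\ntv0=\orv$. For the inductive step I would substitute the induction hypothesis, \ie formula (\ref{eq:vpn}) at level $n-1$, in place of $\ntv{n-1}_\lit$ inside that identity. After this substitution $\ntv{n}_\lit$ appears as the maximum of $\orv_\lit$ together with the two terms $\max_{\clau\neq\{\lit,\nt\lit\}}\min_{\lxt\neq\lit}\ntv{n-2}_{\nt\lxt}$ and $\max_{\clau\neq\{\lit,\nt\lit\}}\min_{\lxt\neq\lit}\ntv{n-1}_{\nt\lxt}$. By Lemma~\ref{st:step}(c) one has $\ntv{n-2}\le\ntv{n-1}$, so the first of these two terms is dominated by the second and may be discarded; what remains is exactly (\ref{eq:vpn}) at level~$n$.

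I do not expect a genuine obstacle here. The two points that need a little care are the use of condition~D3 to identify $\{\lit,\nt\lit\}$ as the term to be separated out, and the empty-maximum convention so that the statement stays correct for literals occurring in no clause other than $\lit\lor\nt\lit$. The monotonicity of the sequence $(\ntv n)$ supplied by Lemma~\ref{st:step}(c) is precisely what makes the lower-order term collapse in the inductive step.
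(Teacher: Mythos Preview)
Your argument is correct and follows essentially the same route as the paper: both isolate the \textit{tertium non datur} clause to obtain the recursion $\ntv{n}_\lit=\max(\ntv{n-1}_\lit,b_{n-1})$ with $b_{n-1}$ the remaining max--min, and then use the monotonicity $\ntv{n-2}\le\ntv{n-1}$ to collapse this to $\max(\orv_\lit,b_{n-1})$. The only cosmetic difference is that the paper packages the collapse as two abstract facts about non-decreasing sequences, whereas you carry out the induction explicitly.
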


\begin{proof}\hskip.5em
By definition, $\ntv{n}$ is obtained from $\ntv{n-1}$
by the transformation $\orv\mapsto\utv$ defined by (\ref{eq:vprime}).
The resulting expression is the same as (\ref{eq:vpn}) except that the right-hand side shows $\ntv{n-1}_\lit$ instead of $\orv_\lit$.
In order to obtain formula (\ref{eq:vpn}) it suffices to apply repeatedly
the two following facts, which are easily checked by induction:
\ensep
(i)~If $a_n\ (n\ge0)$ satisfies $a_n = \max(a_{n-1}, b_{n-1})\ (n\ge1)$, 
where $b_n\ (n\ge0)$ is a non-decreasing sequence, 
then $a_n = \max(a_0, b_{n-1})$ for any $n\ge1$;
\ensep
(ii)~If $a_n$ and\, $b_n$ $(n\ge0)$ are non-decreasing sequences,
then the sequences $\max(a_n,b_n)$ and $\min(a_n,b_n)$ are also non-decreasing.
\end{proof}

%\medskip
%\begin{lemma}\hskip.5em
%\label{st:lem-increasing-max-min}
%If $a_n$ and\, $b_n$ $(n=0,1,2,\dots)$ are non-decreasing sequences,
%then the sequences $\max(a_n,b_n)$ and $\min(a_n,b_n)$ are also non-decreasing.
%\end{lemma}
%\begin{proof}\hskip.5em
%For every $n$ we can proceed as follows: Assume that $a_n\le b_n$. In that case,
%we can write $\max(a_n,b_n) = b_n \le b_{n+1} \le \max(a_{n+1},b_{n+1})$,
%as well as $\min(a_n,b_n) = a_n \ge a_{n-1} \ge \min(a_{n-1},b_{n-1})$.
%The case $b_n\le a_n$ is analogous.
%\end{proof}
%
%\medskip
%\begin{lemma}\hskip.5em
%\label{st:lem-increasing-0}
%If $a_n\ (n=0,1,2,\dots)$ is given by $a_n = \max(a_{n-1}, b_n)\ (n\ge1)$,\linebreak
%where $b_n\ (n=0,1,2,\dots)$ is a non-decreasing sequence with $b_0\le a_0$,\linebreak
%then $a_n = \max(a_0, b_n)$ for any $n\ge0$.
%\end{lemma}
%\begin{proof}\hskip.5em
%The result is easily obtained by induction.
%\end{proof}

\paragraph{2.7}
%% Remark. Interpretation of negation:
%% Let us make clear that
%Concerning the negation operator, 
%we are assuming that it satisfies 
%the formal principles of non contradiction $\lnot(p \land \nt p)$,
%excluded middle $p \lor \nt p$,
%and, consequently, also the double negation principle $\nt{\nt p} \leftrightarrow p$.
%% involution
%% Cf contrary: $p\rightarrow \nt q$, $q\rightarrow \nt p$, but not the other way.
%% \ie $\nt p \lor \nt q$ but not $p \lor q$
%% So we cannot admit, for instance, that $t_x$ means `$x$ is the best' and $\nt t_x$ means `$x$ is the worst'?
%\ensep
For our purposes, $\nt\lit$ need not be the exact semantic negation of $\lit$.
% but the preceding principles should hold for the particular application under consideration.
%\ensep
%In the words of Augustus de\,Morgan (1850, p.\,105):
%``it is supposed that one or the other must be.
%% ≈ tertium non datur
%It is not necessary that
%one should be the denial of the other, as I may say, over the whole universe of thought:
%it is enough if within the universe of the syllogism, either [one] or [the other] must be true''
%[and that one is true \ifoi the other is false].
%\ensep
%For instance, somebody who is planning a journey may have reasons to definitely discard any other option than either using the car or travelling by plane; in such a situation, these two options become the negation of each other, even though they are not properly so.
\ensep
Instead, quite often it is more appropriate to look at $\nt\lit$ as the opposite, or antithesis, of~$\lit$.
This may seem to conflict with the excluded-middle principle $\lit\lor\nt\lit$.
However, this principle somehow loses its character just as fractional valuations come in.
In~fact, its role in connection with the latter is only through the excluded-middle clauses that we systematically include in the Blake canonical form; and this has only the following two effects:
(a)~providing the trivial implications $\lit\rightarrow\lit$ and $\nt\lit\rightarrow\nt\lit$,
through which the revised degrees of belief become larger than or equal to the original ones;
and (b)~forbidding %disallowing
any implication of the form $\nt\lit\land\lit\land\chi\rightarrow\thesis$,
which would be a gratuitous source of belief
(this effect occurs because
% condition~{\small(D4)} restricts clauses to be prime,
clauses are restricted to be prime,
which prevents them from containing $\lit\lor\nt\lit$).

Anyway, the belief in $\nt\lit$ is not the lack of belief in $\lit$,
but it should have its own reasons.
This agrees with the general views of
\cite[see for instance p.\,12]{qu}.
%\brwrap{\bibref{qu}\kern.5pt:\,\textup{see for instance p.\,12}}. %%%%
Besides, it fully agrees also with the traditional views of the adversarial system of justice.

% where it is a matter of both getting evidence in favour of $p$
% and also, separately, getting positive evidence in favour of the contrary.
%or in statistical testing of hypotheses ...
% where an hypothesis is tested as an alternative to another one (the `null' hypothesis)
% and the evidence in favour of the alternative hypothesis is not $1$ minus the evidence in favour of the null one (?)
%Generally speaking, $\orv_p + \orv_{\nt p}$ need not be equal to $1$,
%but it may be less than~$1$ (lack of information)
%or even greater than~$1$ (presence of contradiction).

% Classical two-valued logic as a framework for a calculus of multi-valued degrees of belief.

%\bigskip
%Remark 2. Clauses (disjunctions). 
%They are satisfied in the classical sense for any truth assignment,
%\ie an assignment of truth values to the basic propositions
%with the restriction that $p$ is true (resp.~false) \ifoi $\nt p$ is false (resp.~true).
%%However, for a general valuation the clauses need not be satisfied in the classical sense.
%%For instance, in the case of a complete lack of information about $p$/$\nt p$,
%%the \emph{tertium non datur} clause $p\lor\nt p$ is far from being satisfied in the classical sense.
%However, we are not dealing with truth values but with degrees of belief.
%In~this context, the clauses play the role of providing the existing implications
%along which belief is propagated.
%% providing -> codifying

%[\,En particular, potser no cal del tot $p \lor \nt p$,
%sinó només $p \rightarrow p$ i $\nt p \rightarrow \nt p$??\,]

\medskip
In order to apply the preceding method to a particular matter, 
% subject -> decision problem
one must specify the main issues at stake
% at stake : to decide about
as well as the existing logical implications between them.
%Of course, the results will be valid to the extent that
%the assumed logical implications are really the relevant ones. % what is going on.
%% If we miss a relevant implication or we include one that is not really relevant, then ...
%In practical applications there may be some doubt
%% doubt -> discussion
%about which logical implications are appropriate.
%In that case, one is led to consider several alternative models or theories for the subject under consideration.
As we will see, the notions of preference and choiceness
%MOD abans deia `suitability for choice'
allow for several views about which logical implications are associated with them.
As a result we will obtain several alternative models and rules for social choice.
%MOD abans deia `for choosing and ranking'

\section{Preferences}

In the sequel we will be dealing all the time with a finite set of options. 
This set will be denoted as $\ist$.
A system of preferences about the members of~$\ist$ is~usually formalized as a binary relation on~$\ist$ that complies with certain properties,
typically including antisymmetry, completeness and transitivity
(see for instance \cite{hansson}).
\ensep
Having said that,
nowadays it is well established that preferences are sometimes not transitive
(see for instance \cite{tversky}, or \cite[\secpar{1.3}]{hansson}).
%In fact, a look in the dictionary shows that the notion of preference is mainly associated with
%the mere act of choosing one thing over another or others.
%%(without any need for ordering all of them).
%Oxford Advanced Learner's Dictionary:
%Definition of `prefer': ``to like one thing or person better than another;
%to choose one thing rather than something else because you like it better''.
%In order to meet the idea of transitivity, one must look at terms such as `order' or `rank'.
%Oxford English Dictionary:
%Original meaning of `rank': ``row of things''.
So, we will take the view that transitivity is a special way of having preferences about 
% a set of
things.
\ensep
On the other hand, antisymmetry and completeness admit of certain alternatives 
depending on whether indifference and/or lack of opinion are allowed into consideration;
as we will see next, however, these alternatives become unnecessary when fractional and possibly unbalanced degrees of belief are used (recall that a lack of balance means that the degrees belief associated with $\lit$ and $\nt\lit$ need not add up to $1$).

\medskip
Describing preferences by means of propositional logic requires considering all of the propositions 
$p_{xy}$: `$x$ is preferable to $y$', where $x$ and $y$ are different from each other
(allowing for $x=y$ leads to useless distinctions).
% preferable: the object of belief; preferred: particular belief
%[\,About reflexivity or irreflexivity. Not essential, but rather a matter of convention.
%Anyway, our method requires/assumes no unit clauses ...\,]

From our point of view, the main principle associated with the notion of preference
is that $\nt p_{xy}$ can be identified with $p_{yx}$, or equivalently, that
\begin{equation}
\label{eq:iff}
\nt p_{xy} \,\leftrightarrow\, p_{yx}, \qquad \text{for any two different $x,y\in\ist$.}
\end{equation}

% In the spirit of the negation having an opposite character

%In the context of all-or-none values ---of truth or belief--- 
In the all-or-none framework of classical logic,
this double implication %(\ref{eq:iff})
embodies a limitation to complete strict preferences,
leaving no place for 
definite indifference ($x$ and $y$ `equally good')
nor for incompleteness (lack of information about the comparison between $x$ and~$y$).
% (see for instance \cite{hansson})
% explain?
% definite: expressed
% incompleteness: inexpression

However, in our context of degrees of belief, % ranging all the way from $0$ to $1$
both definite indifference and incompleteness can be suitably modelled
if we interpret (\ref{eq:iff}) as an identification between $\nt p_{xy}$ and $p_{yx}$.
%within % (a theory that keeps)
%the logical constraint~(\ref{eq:iff}).
In fact, incompleteness can be described by putting $v(p_{xy}) = v(p_{yx}) = 0$
which means a full lack of belief in $p_{xy}$ as well as in $p_{yx}$.
% (recall that degrees of belief of $p$ and $\nt p$ need not add up to $1$).
On~the other hand, definite indifference can be described by putting $v(p_{xy}) = v(p_{yx}) = \onehalf$,
\ie by splitting the unit of belief into equal amounts for the contrary preferences $p_{xy}$ and $p_{yx}$.

\bigskip
So, \,\textit{from now on we identify \,$\nt p_{xy}$ with \,$p_{yx}$.}

\bigskip
The collective degrees of belief about the propositions $p_{xy}$ are given by
\begin{equation}
\label{eq:cog}
v(p_{xy}) = \sum_k \alpha_k\, \vk(p_{xy}),
\end{equation}
where $\alpha_k$ are the relative frequencies or weights of the individual opinions~$\vk$.
\ensep
In preferential voting, the individual opinions are usually expressed in the form of a ranking,
that is, a list of options in order of preference, possibly truncated or with ties.
In order to translate this information into paired comparisons, we use the following interpretation:

\halfsmallskip
\iim{a}When $x$ and $y$ are both in the list\,
and $x$ is ranked above $y$ (without a tie),
we certainly take $\vk(p_{xy})=1$ and $\vk(p_{yx})=0$.

\iim{b}When $x$ and $y$ are both in the list\,
and $x$ is ranked as good as $y$,\,
we take $\vk(p_{xy})=\vk(p_{yx})=\onehalf$.
% This rule appears already in \cite[p.\,192]{la}.
% Cf Landau, 1914, p.\,192 (Zermelo, p.\,437)

\iim{c}When $x$ is in the list and $y$ is not in it,\,
we take $\vk(p_{xy})=1$ and $\vk(p_{yx})=0$.

\iim{d}When neither $x$ nor  $y$ are in the list, 
we take $\vk(p_{xy}) =\vk(p_{yx}) =0$.

\halfsmallskip
\noindent%\medskip
Instead of rule~(d), one can consider the possibility of using the following alternative:

\halfsmallskip
\iim{d$'$} When neither $x$ nor $y$ are in the list,\,
we interpret that they are considered equally good
(or equally bad),\, so we proceed as in~(b).

\halfsmallskip
\noindent
This amounts to complete each truncated ranking by appending to it all the missing options tied to each other.
%, which brings the problem to the complete case.
\ensep
Generally speaking, however, this interpretation can be criticized in that the added information might not be really meant by the voter.

\medskip
The table that collects the numbers $v(p_{xy})$ given by (\ref{eq:cog}) for all ordered pairs $xy$ will be called the (normalized) \dfc{Llull matrix}
%MOD Abans no estava en negreta, també he afegit “(normalized)”
of the vote.
This name refers to Ramon Llull, who already considered such tables in the thirteenth century \cite{mu}.
The elements of this matrix satisfy the inequality
\begin{equation}
\label{eq:sum-less-than-one}
v(p_{xy}) + v(p_{yx}) \le 1,
\end{equation}
which is inherited from the component valuations $\vk$.
\ensep
When (\ref{eq:sum-less-than-one}) holds with the equality sign,
it means that every individual expressed a comparison (a~preference or a tie) about every pair of options.
We will refer to this situation as the \dfc{complete} case.
As we have already remarked, the revised degrees of belief need not satisfy (\ref{eq:sum-less-than-one}).

%[\,Remarks: Several possible interpretations of approval ballots (complete, incomplete).
%The Llull matrix does not properly distinguish between several individuals being indifferent about two given options
%and half of them preferring one to the other in each direction\,]

\section{Transitivity}

%MOD
Let us begin by the notion of choiceness that presupposes a complete ordering:
the~right choice is the option that goes first in the right complete ordering.
So the problem focuses here on arriving at a complete ordering.
In other words, preferences are here constrained to be transitive, \ie to
% Let us begin by considering the constraint of transitivity. So, here we take the view that preferences
satisfy the implications $p_{xy} \land p_{yz} \rightarrow p_{xz}$. On account of (\ref{eq:iff}), the latter are logically equivalent to the following clauses:
\begin{equation}
\label{eq:transitivity}
p_{xy} \lor p_{yz} \lor p_{zx},\qquad \text{for any pairwise different $x,y,z\in\ist$}
\end{equation}
(where $x,y,z$ have been relabelled). The doctrine that is made of these clauses will be referred to as the \dfc{transitivity doctrine.}

% DAVID1

\medskip
The properties of disjoint resolvability and unquestionability of this doctrine
(obtained in Proposition~\ref{st:tech-trans} from appendix~A)
allow us to express $\urv$ directly as the result of the one-step transformation associated with the Blake canonical form, namely:
\begin{equation}
\label{eq:camins}
\urv(p_{xy}) \,=\, \Max\, \min\big(\orv(p_{x_0x_1}), \orv(p_{x_1x_2}), \dots ,\orv(p_{x_{n-1}x_n})\big),
\end{equation}
where the $\Max$ operator considers all paths $x_0x_1\dots x_n$ of length $n\ge 1$
from $x_0=x$ to $x_n=y$ with all $x_i$ pairwise different.

\medskip
In this case, our general method corresponds essentially to the method introduced in~1997 by Markus Schulze 
(posted in a mailing list about election methods; see 
\cite{sc,scbis}, \cite[p.\,228--232]{t6} and \cite{crc,cri}),
%
%%\cite{sc, t6}.
%\hbox{\brwrap{%UH
%%\textsl{23}, %Schulze
%\bibref{sc}\refco
%\bibref{scbis}\refsc
%%\textsl{24}\,\textup{:\,\relscale{0.95}p.\,228--232}\,, %Tideman, llibre
%\dbibref{t6}{p.\,228--232}%
%}},
sometimes called the method of paths (in the incomplete case, however, it does not coincide with any of the variants given in \cite{scbis}). In the way that we have introduced it, it is clearly a~method for ranking all the candidates. Having said that, later on~(\secpar{6.4}) we will see that in the complete case its winners are quite in agreement with a doctrine that does not include transitivity but aims only at choosing the most prominent option.

As a ranking method, the method of paths complies with the following extension of the Condorcet principle introduced in 1973 by John H.~Smith 
\cite[\secpar{5}]{smith}: 
Assume that the set of candidates is partitioned in two classes~$\xst$ and~$\yst$ such that for each member of $\xst$ and every member of~$\yst$ there are more than half of the individual votes where the former is preferred to the latter; in that case,
the social ranking should also prefer each member of $\xst$ to any member of~$\yst$.
The proof can be found in
\cite[\secpar{4.7}]{scbis} 
(see also
\cite[\secpar{10}]{crc}, \cite[Thm.\,8.1]{cri}).
%\hbox{\brwrap{%UH
%\dbibref{crc}{\secpar{10}}\refsc
%\dbibref{cri}{Thm.\,8.1}%
%}}).
\ensep
Another interesting property of the method of paths is clone consistency, also known as independence of clones, which refers to the effect of replacing a single option $c$ by a set $C$ of several options similar to $c$; for more details we refer the reader to
\cite[\secpar{5.4}]{sc}, \cite[\secpar{4.6}]{scbis}
%\hbox{\brwrap{%UH
%\bibref{sc}\,c\,\textup{:\,\secpar{5.4}}\refsc
%%\dbibref{sc}{\secpar{5.4}}\refsc
%\dbibref{scbis}{\secpar{4.6}}%
%}}
as well as 
\cite[\secpar{11}]{crc}, \cite[Thm.\,8.2 and 8.3]{cri}.
%\hbox{\brwrap{%UH
%\dbibref{crc}{\secpar{11}}\refsc
%\dbibref{cri}{Thm.\,8.2 and 8.3}%
%}}.

Moreover, it has also been shown \cite{crc,cri} that this method can be extended to a continuous rating method that allows to sense the closeness of two candidates at the same time that it allows to recognise certain situations that are quite opposite to a tie.

\begin{comment}
Replacing transitivity by a semiorder or interval-order character (refs): the results do not differ from the case of transitivity when only strict preferences are present or the Landau rule is used. In fact, semiorders and interval orders reduce to transitivity for complete strict preferences.
\end{comment}

\begin{comment}
To be clarified: Degrees of belief % ranging from $0$ to $1$
not only allow to model indifference
but they also allow for intransitive indifference (sorites paradox) \cite{ref}
within a logical framework where all-or-none preferences are constrained to be transitive
(and fractional ones too in the sense that $v_{xz} \ge \min(v_{xy},v_{yz})$)
Example: $\ist=\{a,b,c\}$, $v_{xy} = \onehalf$ for any $xy\neq ac$, $v_{ac}=1$\,(???).
Transitivity is equivalent to certain generalizations of it (as an effect of (\ref{eq:iff})?)
\end{comment}

\medskip %%%%

\section{Supremacy}
%\section{Looking for an unrivalled option}
% ALT the best, an unrivalled, pre-eminent, supreme, absolute best

Instead of constraining the binary preferences between several options to 
% be transitive
form a total order, one can require only the existence of a supreme option,
\ie an option that is preferred to any other.
% ALT preferred to, better than
Such a constraint is specified by the \textit{disjunctive} normal form
\begin{equation}
\label{eq:one-best-original}
\bigvee_{x\in\ist}\,\,\bigwedge_{\substack{y\in\ist\\y\neq x}}\, p_{xy}.
\end{equation}
Instead of directly bringing (\ref{eq:one-best-original}) into conjunctive normal form,
one can arrive at a much shorter conjunctive formulation 
by considering the propositions $\sprm_x:$ `$x$ is preferred to any other member of $\ist$'
($x\in\ist$), which are related to the $p_{xy}$ by the double implications
\begin{equation}
\label{eq:def-best}
\sprm_x \,\,\leftrightarrow\,\, \bigwedge_{\substack{y\in\ist\\y\neq x}} p_{xy}.
\end{equation}
% In fact, for every $x$ such a double implication is logically equivalent to the conjunction of as many clauses as the number of elements of $\ist$.
By proceeding in this way, we are led to consider the set of propositions $\piset=\{\,\sprm_x\mid x\in\ist\,\} \cup \{\,\nt\sprm_x\mid x\in\ist\,\} \cup \{\,p_{xy}\mid x,y\in\ist,\,x\neq y\,\}$ together with the doctrine formed by the following clauses:
\begin{alignat}{2}
\label{eq:beats-all-implies-best}
&\sprm_x\, \lor\, \bigvee_{\substack{y\in\ist\\y\neq x}}\, p_{yx},\qquad &&\text{for any $x\in\ist$;}
\\
\label{eq:best-implies-beats-all}
&\nt \sprm_x\, \lor\,\, p_{xy},\qquad &&\text{for any two different $x,y\in\ist$;}
\\[3.5pt]
\label{eq:best-exists}
&\bigvee_{x\in\ist}\, \sprm_x.
%\\[2.5pt]
%\label{eq:best-is-unique} % essential
%\big[\, \nt\sprm_x\, &\lor\, \nt\sprm_y,\qquad &&\text{for any two different $x,y\in\ist$.} \,\big]
\end{alignat}
We will refer to it as the \dfd{supremacy doctrine}.
% ALT supremacy, unrivalledness, best-option, beats-all

\medskip
Again, one can see that the preceding clauses lead to the same upper revised valuation as the corresponding full Blake canonical form (Prop.~\ref{st:tech-sprm1}). Here we will only notice that the Blake canonical form includes the clauses
\begin{equation}
\label{eq:best-is-unique}
\nt\sprm_x\, \lor\,\, \nt\sprm_y,\qquad \text{for any two different $x,y\in\ist$,}
\end{equation}
which are obtained by disjoint resolution between (\ref{eq:best-implies-beats-all}) and the clause of the same form with $x$ and $y$ interchanged with each other (recall that we identify $\nt p_{xy}$ with $p_{yx}$).

% DAVID2 write down

\medskip
The clauses (\ref{eq:best-is-unique}) assert that one cannot have two supreme options.
% ALT best, unrivalled
Applying the definite consistency theorem (Theorem~\ref{st:dec}) to these clauses ensures the following fact:
When $\sprm_x$ is accepted, then $x$ is the only option with this property.
On~the other hand, the same theorem applied to (\ref{eq:best-exists})
ensures the following fact:
When $\sprm_z$ is rejected for any $z\neq x$, then $\sprm_x$ is accepted.
By~taking into account that a proposition need not be accepted or rejected but it can be left undecided,
the preceding statement is equivalent to the following one:
When $\sprm_z$ is not accepted for any $z\in\ist$, 
then $\sprm_z$ is undecided for more than one $z\in\ist$.
%In~this case, we will say that the winner is undecided among such $z$.
In~the sequel, an option for which $\sprm_x$ is not rejected will be called a \dfc{supremacy winner.}

\medskip
The one-step revision transformation $v\mapsto\utv$ associated with (\ref{eq:beats-all-implies-best}--\ref{eq:best-exists})
reads as follows: For any $x,y\in\ist$:
\begin{alignat}{2}
\label{eq:one-step-best}
&\utv(\sprm_x) \,&&=\, \max\Big(\, v(\sprm_x),\,
\min_{y\neq x} v(\nt \sprm_y),\,
\min_{y\neq x} v(p_{xy}) \,\Big),
\\
\label{eq:one-step-not-best}
&\utv(\nt \sprm_x) \,&&=\, \max\Big(\, v(\nt \sprm_x),\,
% \big[\,\max_{y\neq x} v(\sprm_y),\big]\,
\max_{y\neq x} v(p_{yx}) \,\Big),
\\
\label{eq:one-step-pxy}
&\utv(p_{xy}) \,&&=\, \max\Big(\, v(p_{xy}),\,
v(\sprm_x),\, \min\big(v(\nt \sprm_y),\,\min_{\substack{z\neq x\\z\neq y}} v(p_{yz})\big)
\,\Big).
\end{alignat}

We will use the following notations: %AHA
\begin{equation}
\label{eq:defs-minrow-maxcol-minrw}
\minrow_x = \min_{y\neq x}\,\orv(p_{xy}),\quad
\maxcol_x = \max_{y\neq x}\,\orv(p_{yx}),\quad
\minrw_{xy} = \min_{\substack{z\neq x\\z\neq y}}\,\orv(p_{xz}).
\end{equation}
These definitions immediately imply that 
\begin{alignat}{3}
\label{eq:ineq1}
&\minrow_x \,&&\le\, \orv(p_{xy}) \,&&\le\, \maxcol_y,\qquad \text{whenever $x\neq y$.}
\\[2.5pt]
\label{eq:ineq2}
&\minrw_{xy} \,&&\le\, \orv(p_{xz}) \,&&\le\, \maxcol_z,\qquad \text{whenever $z\notin\{x,y\}$.}
\end{alignat}

In the sequel we will have to look at the successive valuations~$\ntv{n}$
that are obtained by iterating the transformation (\ref{eq:one-step-best}--\ref{eq:one-step-pxy})
starting from $\ntv{0}=\orv$.
Later on, it will be convenient to allow $n$ to take negative values by putting
$\ntv{n}(p_{xy}) = 0$ for $n < 0$.
We will also make use of the quantities analogous to those of (\ref{eq:defs-minrow-maxcol-minrw})
with $\ntv{n}$ substituted for $\orv$.
These quantities will be denoted by $\nmaxcol{n}_x,\nminrow{n}_x,\nminrw{n}_{xy}$.
Obviously, they satisfy inequalities analogous to (\ref{eq:ineq1}--\ref{eq:ineq2}).

\paragraph{5.1}{\textbf{The minimax rule}}

\medskip
Assume that our choice must be based solely on the Llull matrix $(v(p_{xy}))$.
In principle, this matrix does not give (direct) information about the (collective) degrees of belief
for $\sprm_x$ and $\nt\sprm_x$. So, it makes sense to take
\begin{equation}
\label{eq:zero-supreme}
v(\sprm_x)\,=\,v(\nt \sprm_x)\,=\,0,\quad \text{for any $x\in\ist$}.
\end{equation}

\medskip
\begin{proposition}\hskip.5em
\label{st:laia}
For the initial values \textup{(\ref{eq:zero-supreme})}, the supremacy doctrine
% \hbox{\textup{(\ref{eq:beats-all-implies-best}--\ref{eq:best-exists})}}
gives
\begin{alignat}{3}
&\urv(\sprm_x) \,&&=\, \utv'(\sprm_x) \,&&=\, \min_{z\neq x}\, \maxcol_z,
\\
&\urv(\nt \sprm_x) \,&&=\, \utv(\nt \sprm_x) \,&&=\, \maxcol_x,
\\[3.5pt]
&\urv(p_{xy}) \,&&=\, \utv''(p_{xy}) \,&&=\, \max\,(v(p_{xy}),\,\min_{z\neq x} \maxcol_z).
\end{alignat}
\end{proposition}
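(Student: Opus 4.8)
The plan is to pin down $\urv$ by producing an explicit upper bound for it --- a fixed point of the revision map lying above $\orv$ --- and matching it against cheap lower bounds obtained from the first few iterates of (\ref{eq:one-step-best}--\ref{eq:one-step-pxy}). Throughout I would write $\mu_x:=\min_{z\neq x}\maxcol_z$, so that the three asserted values read $\urv(\sprm_x)=\mu_x$, $\urv(\nt\sprm_x)=\maxcol_x$, $\urv(p_{xy})=\max(\orv(p_{xy}),\mu_x)$. Two trivial consequences of (\ref{eq:ineq1}--\ref{eq:ineq2}) get used over and over: $\minrow_x\le\mu_x$ (minimise over $y\neq x$ in $\minrow_x\le\maxcol_y$), and both $\mu_y\le\maxcol_z$ and $\minrw_{yx}\le\maxcol_z$ whenever $z\notin\{x,y\}$.

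First I would iterate (\ref{eq:one-step-best}--\ref{eq:one-step-pxy}) a couple of steps from $\ntv{0}=\orv$ with the initial data (\ref{eq:zero-supreme}). Because $\ntv{0}(\sprm_z)=\ntv{0}(\nt\sprm_z)=0$ for all $z$, the formulas collapse immediately to $\ntv{1}(\nt\sprm_x)=\maxcol_x$, $\ntv{1}(\sprm_x)=\minrow_x$ and $\ntv{1}(p_{xy})=\orv(p_{xy})$; then $\ntv{2}(\sprm_x)=\max\bigl(\minrow_x,\min_{y\neq x}\maxcol_y,\minrow_x\bigr)=\mu_x$ by $\minrow_x\le\mu_x$. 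Since (\secpar{1.3}) the $\ntv{n}$ form a non-decreasing sequence converging to $\urv$, this already delivers the lower bounds $\ntv{1}(\nt\sprm_x)=\maxcol_x$, $\ntv{2}(\sprm_x)=\mu_x$ and --- applying (\ref{eq:one-step-pxy}) to $\ntv{2}$ and using $\ntv{2}(p_{xy})\ge\ntv{1}(p_{xy})=\orv(p_{xy})$ --- also $\ntv{3}(p_{xy})\ge\max\bigl(\ntv{2}(p_{xy}),\ntv{2}(\sprm_x)\bigr)\ge\max(\orv(p_{xy}),\mu_x)$.

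Next I would exhibit the upper bound. Let $w$ be the valuation $w(\sprm_x)=\mu_x$, $w(\nt\sprm_x)=\maxcol_x$, $w(p_{xy})=\max(\orv(p_{xy}),\mu_x)$; clearly $w\ge\orv$, and I claim $w$ is a fixed point of (\ref{eq:one-step-best}--\ref{eq:one-step-pxy}). For (\ref{eq:one-step-best}) one has $\min_{y\neq x}w(p_{xy})=\max(\minrow_x,\mu_x)=\mu_x=\min_{y\neq x}w(\nt\sprm_y)$, so the right-hand side is $\mu_x$; for (\ref{eq:one-step-not-best}) one uses $\max_{y\neq x}w(p_{yx})=\max\bigl(\maxcol_x,\max_{y\neq x}\mu_y\bigr)=\maxcol_x$. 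The only delicate point is (\ref{eq:one-step-pxy}), where I must check that the term $\min\bigl(w(\nt\sprm_y),\min_{z\neq x,y}w(p_{yz})\bigr)$ --- the contribution of a length-two chain $x\to y\to z$ through a third option --- does not exceed $\max(\orv(p_{xy}),\mu_x)$. Now $w(\nt\sprm_y)=\maxcol_y$ and $\min_{z\neq x,y}w(p_{yz})=\max(\minrw_{yx},\mu_y)$; since both $\minrw_{yx}\le\maxcol_z$ and $\mu_y\le\maxcol_z$ for every $z\notin\{x,y\}$, we get $\max(\minrw_{yx},\mu_y)\le\maxcol_z$ for all such $z$, whence the term is $\le\min\bigl(\maxcol_y,\min_{z\notin\{x,y\}}\maxcol_z\bigr)=\min_{z\neq x}\maxcol_z=\mu_x$ (and vacuously so when $\ist$ has only two options). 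Hence (\ref{eq:one-step-pxy}) gives back $\max(\orv(p_{xy}),\mu_x)=w(p_{xy})$.

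To finish, observe that the revision map (\ref{eq:one-step-best}--\ref{eq:one-step-pxy}) is monotone in $v$, so iterating from $\orv\le w$ keeps every $\ntv{n}\le w$, and therefore $\urv=\lim_n\ntv{n}\le w$ (the legitimacy of using these clauses rather than the full Blake canonical form being Prop.~\ref{st:tech-sprm1}). Combined with $\ntv{n}\le\urv$, the lower bounds of the second paragraph are forced to be equalities: $\ntv{1}(\nt\sprm_x)=\urv(\nt\sprm_x)=\maxcol_x$, $\ntv{2}(\sprm_x)=\urv(\sprm_x)=\mu_x$, $\ntv{3}(p_{xy})=\urv(p_{xy})=\max(\orv(p_{xy}),\mu_x)$ --- which is the statement, with $\utv,\utv',\utv''$ standing for $\ntv{1},\ntv{2},\ntv{3}$. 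The one genuinely non-routine step is the control of (\ref{eq:one-step-pxy}) for $w$: belief in $p_{xy}$ can be fed directly from $\sprm_x$ and through the chain $x\to y\to z$, and one must see the latter channel never overshoots $\mu_x$ --- which is exactly what (\ref{eq:ineq2}) provides.
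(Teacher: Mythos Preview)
Your argument is correct. It is, however, a genuinely different route from the paper's own proof. The paper proceeds purely by iteration: it writes closed recursive expressions for $\ntv{n}(\sprm_x)$, $\ntv{n}(\nt\sprm_x)$, $\ntv{n}(p_{xy})$ in terms of $\nmaxcol{n-1}$, $\nminrow{n-1}$, $\nminrw{n-1}$, and then shows by induction that $\nmaxcol{n}_x=\maxcol_x$ for all $n\ge0$; once this stabilisation is in hand, the claimed values drop out of the recursions. You instead invoke the fixed-point characterisation (Theorem~\ref{st:char}, or equivalently monotonicity of $\orv\mapsto\utv$): you guess the limit $w$, verify in one shot that $w'=w$ and $w\ge\orv$, conclude $\urv\le w$, and then match against three explicit iterates to pin down equality and the stabilisation indices. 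The paper's approach is more self-contained (it does not appeal to Theorem~\ref{st:char}) and gives the full sequence $\ntv{n}$ as a by-product; yours is shorter and more conceptual, isolating the one non-trivial inequality --- that the chain term in (\ref{eq:one-step-pxy}) is bounded by $\mu_x$ --- as the crux of the fixed-point verification, which is exactly the same inequality the paper needs to bound $\nmaxcol{n}_x$.
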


\begin{proof}\hskip.5em
Let us introduce the initial values (\ref{eq:zero-supreme}) in (\ref{eq:one-step-best}--\ref{eq:one-step-pxy}).
Starting from $\ntv{n}(\nt \sprm_x)$ and using Lemma~\ref{st:lem-best-option},
we successively obtain:
\begin{alignat}{2}
\label{eq:n00-step-not-best}
&\ntv{n}(\nt \sprm_x) \,&&=\, \nmaxcol{n-1}_x,
\\[2.5pt]
\label{eq:n00-step-best}
&\ntv{n}(\sprm_x) \,&&=\, \max\big(\, \min_{y\neq x} \nmaxcol{n-2}_y,\, \nminrow{n-1}_x \,\big),
\\
\label{eq:n00-step-pxy}
&\ntv{n}(p_{xy}) \,&&=\, \max\Big(\, \orv(p_{xy}),\,
\min_{z\neq x} \nmaxcol{n-3}_z,\, \nminrow{n-2}_x,\, \min\big(\nmaxcol{n-2}_y,\,\nminrw{n-1}_{yx})\big)
\,\Big).
\end{alignat}
Let us now plug (\ref{eq:n00-step-pxy}) into the definition of $\nmaxcol{n}_x$. 
By making use of the inequalities (\ref{eq:ineq1}--\ref{eq:ineq2}) and their $n$-th counterparts,
one easily arrives at the in\-equality $\nmaxcol{n}_x \le \max(\maxcol_x, \nmaxcol{n-3}_x, \nmaxcol{n-2}_x)$.
By induction, it follows that $\nmaxcol{n}_x \le \maxcol_x$ for $n\ge0$.
Since we also know that $\nmaxcol{n}_x$ is not decreasing, 
we get
\begin{equation}
\label{eq:maxcol-constant}
\nmaxcol{n}_x \,=\, \maxcol_x,\qquad\text{for $n\ge0$.}
\end{equation}
Finally, by plugging this result into (\ref{eq:n00-step-not-best}--\ref{eq:n00-step-pxy})
and making use of the inequalities of the type (\ref{eq:ineq1}--\ref{eq:ineq2}), one arrives at the conclusion that
\begin{alignat}{3}
\label{eq:nn-step-not-best}
&\ntv{n}(\nt \sprm_x) \,&&=\, \maxcol_x,\qquad&&\text{for $n\ge1$;}
\\[2.5pt]
\label{eq:nn-step-best}
&\ntv{n}(\sprm_x) \,&&=\, \min_{z\neq x}\, \maxcol_z,\qquad&&\text{for $n\ge2$;}
\\
\label{eq:nn-step-pxy}
&\ntv{n}(p_{xy}) \,&&=\, \max\,(v(p_{xy}),\,\min_{z\neq x} \maxcol_z),\qquad&&\text{for $n\ge3$.\qedhere}
\end{alignat}
\end{proof}

\medskip
\begin{corollary}\hskip.5em
For the initial values \textup{(\ref{eq:zero-supreme})}, the supremacy winners are the options $x\in\ist$ that minimize $\maxcol_x = \max_{y\neq x} v(p_{yx})$.
% In the case of multiple minimizers, the winner is undecided among all of them.
% If this quantity is minimized by more than one option, then the winner is undecided among all minimizers.
\end{corollary}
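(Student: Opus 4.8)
The plan is to obtain the corollary as an immediate consequence of Proposition~\ref{st:laia}. That proposition gives, for the initial values~(\ref{eq:zero-supreme}), the closed forms $\urv(\nt\sprm_x)=\maxcol_x$ and $\urv(\sprm_x)=\min_{z\neq x}\maxcol_z$. So first I would recall the pertinent definitions: by the decision rule~(\ref{eq:acceptedrejected})--(\ref{eq:undecided}) applied to $\urv$, the proposition $\sprm_x$ is rejected \ifoi its opposite $\nt\sprm_x$ is accepted, and for the basic decision (margin $\mg=0$) the latter happens \ifoi $\urv(\nt\sprm_x)-\urv(\sprm_x)>0$. Hence an option $x$ fails to be a supremacy winner \ifoi $\urv(\nt\sprm_x)>\urv(\sprm_x)$, \ie \ifoi $\maxcol_x>\min_{z\neq x}\maxcol_z$; equivalently, $x$ is a supremacy winner \ifoi $\maxcol_x\le\min_{z\neq x}\maxcol_z$.

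The remaining step is the elementary observation that $\maxcol_x\le\min_{z\neq x}\maxcol_z$ is the same as $\maxcol_x\le\maxcol_z$ for every $z\neq x$, which, since $\maxcol_x\le\maxcol_x$ trivially, amounts to $\maxcol_x=\min_{z\in\ist}\maxcol_z$. Thus the supremacy winners are precisely the options $x$ that minimize $\maxcol_x=\max_{y\neq x}v(p_{yx})$, which is the assertion.

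There is no real obstacle here, since all the work has already been done in Proposition~\ref{st:laia}; the only points requiring a little care are the bookkeeping of the index set in passing from $\min_{z\neq x}$ to $\min_{z\in\ist}$, and the fact that the statement refers to the basic decision. For a strictly positive margin $\mg$ the same argument would instead characterize the supremacy winners as the options with $\maxcol_x-\min_{z\neq x}\maxcol_z\le\mg$, so it is exactly for $\mg=0$ that they coincide with the minimizers of $\maxcol_x$; I would note this explicitly to keep the statement unambiguous.
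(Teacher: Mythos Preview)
Your proof is correct and follows exactly the approach the paper intends: the paper states this corollary without proof, treating it as immediate from Proposition~\ref{st:laia}, and the explicit argument you give is precisely the one the paper spells out for the parallel Corollary~\ref{st:plu-cor} (translate ``$\sprm_x$ not rejected'' as $\urv(\sprm_x)\ge\urv(\nt\sprm_x)$, then plug in the closed forms). Your remark about positive margins is a harmless addition but not needed, since the paper's definition of supremacy winner implicitly uses the basic decision.
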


\noindent
We refer to this rule as the \dfc{minimax} rule. In the voting literature, this term is sometimes associated with several different rules, in which case the preceding rule is specifically known as ``pairwise opposition''. In the complete case ($v(p_{xy})+v(p_{yx})=1$) it coincides with the \dfc{maximin} rule, \ie choosing the option $x\in\ist$ that maximizes $\minrow_x = \min_{y\neq x}\,v(p_{xy})$ \cite[p.\,212--213]{t6}, which complies with Condorcet's majority principle (see \secpar{6.1}).
% (but not with the Condorcet-Smith one)
In the general incomplete case, however, the minimax rule does not comply with Condorcet's principle.

%[\,Satisfà el ``Condorcet loser criterion''? Nooo! Exemple, complet (\ref{eq:cf-maximin})\,]

%\medskip
%[\,Revisar nomenclatura i concepte: Tideman defines maximin in terms of margins\,]

\paragraph{5.2}{\textbf{The plurality rule.}}

\medskip
When the Llull matrix comes from preferential voting 
in the sense that every vote is an ordered list
(possibly restricted to a subset of most preferred options),
then the preceding treatment admits of a serious objection.
\ensep
In fact, in that case
it is natural to adopt
certain specific values as initial degrees of (collective) belief in $\sprm_x$ and $\nt\sprm_x$,
namely and specifically, the fraction $\plu_x$ of votes where $x$ is placed at the top of the list,
and that of those where some other option is placed at the top:
\begin{equation}
\label{eq:plu-best}
v(\sprm_x)\,=\, \plu_x,\quad
v(\nt \sprm_x)\,=\, \aplu_x,\quad
\text{for any $x\in\ist$,}
\end{equation}
where $\aplu_x = \sum_{y\neq x}\plu_y$. % $f-\plu_x$, where $f = \sum_{x\in\ist} \plu_x$.
Since $\sprm_x$ cannot be true for two different options \hbox{---clause~(\ref{eq:best-is-unique})---}
in the event of a vote that ties $k$ options at the top,
it makes sense to count it as $1/k$-th of a vote for each of the top-placed options.
%[\,related to Landau's rule\,].
In the sequel we will refer to~$\plu_x$ as the \dfc{plurality fraction} of~$x$,
and $\aplu_x$ will be called the \dfc{antiplurality fraction} of~$x$.
\ensep
The values of $\plu_x$ cannot be read from the Llull matrix except in very few special cases.
However, they are easily obtained from the votes themselves.
Using this additional information should lead to better grounded results.
\ensep
Yet we get something rather unexpected:

\medskip
\begin{proposition}\hskip.5em
\label{st:plu-thm}
For the initial values \textup{(\ref{eq:plu-best})}, the supremacy doctrine
% \textup{(\ref{eq:best-exists}--\ref{eq:best-implies-beats-all})}
gives
\begin{alignat}{3}
\label{eq:plu1}
&\urv(\sprm_x) \,&&=\, \utv(\sprm_x) \,&&=\, \min_{z\neq x}\aplu_z,
\\
\label{eq:plu2}
&\urv(\nt \sprm_x) \,&&=\, \orv(\nt\sprm_x) \,&&=\, \aplu_x,
\\[3.5pt]
\label{eq:plu3}
&\urv(p_{xy}) \,&&=\, \utv'(p_{xy}) \,&&=\, \max\,(v(p_{xy}),\,\min_{z\neq x}\aplu_z).
\end{alignat}
\end{proposition}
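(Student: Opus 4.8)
The plan is to identify the upper revised valuation $\urv$ with the explicit valuation $\val$ given by $\val(\sprm_x)=c_x$, $\val(\nt\sprm_x)=\aplu_x$ and $\val(p_{xy})=\max(\orv(p_{xy}),c_x)$, where $c_x:=\min_{z\neq x}\aplu_z$ ---these are the right-hand sides of (\ref{eq:plu1}--\ref{eq:plu3}). I would prove $\urv=\val$ in two halves. The inequality $\urv\le\val$ will follow from the characterization Theorem~\ref{st:char}, once I check that $\val$ lies above the initial valuation (\ref{eq:plu-best}) and is consistent, \ie satisfies $\val'=\val$. The reverse inequality $\urv\ge\val$ ---together with the claim that the revision is already complete at $\ntv{0}$, $\ntv{1}$ and $\ntv{2}$ for $\nt\sprm_x$, $\sprm_x$ and $p_{xy}$ respectively--- will follow from computing those first iterates by hand. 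Every step is routine handling of maxima and minima once one has the single inequality $\maxcol_x=\max_{y\neq x}\orv(p_{yx})\le\aplu_x$, which I would establish first.

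That inequality is read off vote by vote from the interpretation of \secpar{2}: whenever an individual has $\vk(p_{yx})>0$ ---that is, $y$ is ranked at least as high as $x$, or $x$ is absent from the list (cases (a),(b),(c),(d$'$))--- $x$ is not the unique top choice on that vote, so it contributes to $\aplu_x=\sum_{w\neq x}\plu_w$ at least the weight $\alpha_k\,\vk(p_{yx})$: this is clear when $x$ is kept out of the top tier (full weight $\alpha_k$), while if $x$ shares a $t$-fold top tie with others then $\vk(p_{yx})=\onehalf$ by rule (b) against a contribution $\alpha_k(t-1)/t\ge\alpha_k/2$. Summing over the votes gives $\orv(p_{yx})\le\aplu_x$. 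Two by-products of this will be used repeatedly: $\plu_x\le c_x$ (each $\aplu_z$ with $z\neq x$ has $\plu_x$ among its summands) and $\minrow_x\le c_x$ (since $\minrow_x\le\orv(p_{xz})\le\aplu_z$ for every $z\neq x$).

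For the consistency of $\val$ I would feed it into the one-step map (\ref{eq:one-step-best}--\ref{eq:one-step-pxy}). In (\ref{eq:one-step-best}) the three arguments become $c_x$, $\min_{y\neq x}\aplu_y=c_x$ and $\min_{y\neq x}\max(\orv(p_{xy}),c_x)$, the last equal to $c_x$ because $\minrow_x\le c_x$; so $\val'(\sprm_x)=c_x$. In (\ref{eq:one-step-not-best}) the two arguments become $\aplu_x$ and $\max_{y\neq x}\max(\orv(p_{yx}),c_y)=\max(\maxcol_x,\max_{y\neq x}c_y)$, which is $\le\aplu_x$ since $\maxcol_x\le\aplu_x$ and each $c_y=\min_{w\neq y}\aplu_w\le\aplu_x$ ($x$ being among the $w\neq y$); so $\val'(\nt\sprm_x)=\aplu_x$. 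In (\ref{eq:one-step-pxy}) the three arguments become $\max(\orv(p_{xy}),c_x)$, $c_x$ and $\min\!\big(\aplu_y,\min_{z\notin\{x,y\}}\max(\orv(p_{yz}),c_y)\big)$, so it is enough to see that this last quantity is $\le c_x$. This is the one spot needing a case split: if the minimum defining $c_x$ is attained at $z=y$ then $\aplu_y=c_x$ and we are done; otherwise it is attained at some $z_0\notin\{x,y\}$, and then $\min_{z\notin\{x,y\}}\max(\orv(p_{yz}),c_y)\le\max(\orv(p_{yz_0}),c_y)\le\max(\aplu_{z_0},c_y)=c_x$, using $\orv(p_{yz_0})\le\aplu_{z_0}=c_x$ and $c_y\le\aplu_{z_0}$. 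Hence $\val'=\val$; since $\val$ also dominates (\ref{eq:plu-best}) (again using $\plu_x\le c_x$), Theorem~\ref{st:char} yields $\urv\le\val$.

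Finally I would run the iteration from $\ntv{0}=\orv$ of (\ref{eq:plu-best}). At once $\ntv{0}(\nt\sprm_x)=\aplu_x$; then (\ref{eq:one-step-not-best}) and (\ref{eq:one-step-best}) give $\ntv{1}(\nt\sprm_x)=\max(\aplu_x,\maxcol_x)=\aplu_x$ and $\ntv{1}(\sprm_x)=\max(\plu_x,\min_{y\neq x}\aplu_y,\minrow_x)=c_x$; and then, using Lemma~\ref{st:lem-best-option} to keep $\orv(p_{xy})$ at the front and the bound $\ntv{1}(p_{yw})\le\aplu_w$ (immediate from (\ref{eq:one-step-pxy}) applied to the seed, using the inequality above and $\plu_y\le\aplu_w$), formula (\ref{eq:one-step-pxy}) gives $\ntv{2}(p_{xy})=\max\!\big(\orv(p_{xy}),\ntv{1}(\sprm_x),\min(\ntv{1}(\nt\sprm_y),\nminrw{1}_{yx})\big)=\max(\orv(p_{xy}),c_x)$, since $\ntv{1}(\sprm_x)=c_x$, $\ntv{1}(\nt\sprm_y)=\aplu_y$ and $\min(\aplu_y,\nminrw{1}_{yx})\le\min(\aplu_y,\min_{w\notin\{x,y\}}\aplu_w)=c_x$. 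As the iterates are non-decreasing and bounded above by $\urv$, while $\urv\le\val$, the chain $\ntv{n}\le\urv\le\val$ holds in every component; at the components just computed it collapses to equalities, which gives $\urv=\val$ and shows that the revision is reached at $\ntv{0}$ for $\nt\sprm_x$, at $\ntv{1}$ for $\sprm_x$ and at $\ntv{2}$ for $p_{xy}$ ---precisely (\ref{eq:plu2}), (\ref{eq:plu1}) and (\ref{eq:plu3}). The main obstacle is the case distinction in the consistency check for $p_{xy}$; the rest parallels the bookkeeping done for Proposition~\ref{st:laia}, with $\aplu_x$ in the role that $\maxcol_x$ had there.
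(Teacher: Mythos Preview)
Your proof is correct and takes a genuinely different route from the paper's. The paper proves the result by a single induction on $n$, establishing simultaneously the four statements $\ntv{n}(\nt\sprm_x)=\aplu_x$, $\ntv{n}(\sprm_x)=\min_{y\neq x}\aplu_y$ (for $n\ge1$), $\ntv{n}(p_{xy})\le\max(\orv(p_{xy}),\min_{z\neq x}\aplu_z)$ and $\nmaxcol{n}_x\le\aplu_x$; the last of these feeds back into the first at each step, and the whole argument stays entirely within the iteration. You instead split the work: the upper bound $\urv\le\val$ comes from the characterization Theorem~\ref{st:char} via a one-shot fixed-point check $\val'=\val$, and the lower bound comes from computing just $\ntv{0},\ntv{1},\ntv{2}$. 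Your approach trades the running induction for the single consistency verification, which is tidier and makes clearer why the particular valuation $\val$ is singled out; the price is the small case split on where $\min_{z\neq x}\aplu_z$ is attained, which the paper's inductive bound $\nmaxcol{n}_x\le\aplu_x$ sidesteps. Both arguments rest on the same basic inequality $\orv(p_{yx})\le\aplu_x$ (the paper's (\ref{eq:fsandwich})), and both pin down the exact stabilization stages $\ntv{0},\ntv{1},\ntv{2}$ claimed in the statement.
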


\begin{proof}\hskip.5em
Let us begin by noticing that the plurality and antiplurality fractions $\plu_x$ and $\aplu_x$ are related to the entries of the Llull matrix 
in the following way:
\begin{equation}
\label{eq:fsandwich}
\plu_x \,\le\, v(p_{xy}) \,\le\, \aplu_y.
\end{equation}
This is an immediate consequence of the definitions when the votes are strict rankings:
If $x$ is placed at the top, then $x$ is preferred to any other option~$y$;
on the other hand, if $x$ is preferred to $y$, then the top option cannot be~$y$.
A~vote that ties $k\ge2$~options at the top contributes also to the three terms of~(\ref{eq:fsandwich})
in agreement with the stated inequalities; in particular, if both $x$ and $y$ are placed at the top,
the respective contributions are $1/k \le 1/2 \le (k-1)/k$.
\ensep
From (\ref{eq:fsandwich}) and (\ref{eq:defs-minrow-maxcol-minrw}--\ref{eq:ineq2}) it follows that 
\begin{alignat}{4}
\label{eq:fsandwich1}
&\plu_x \,&&\le\, \minrow_x \,&&\le\, \maxcol_y \,&&\le\, \aplu_y,\qquad \text{whenever $x\neq y$.}
\\[2.5pt]
\label{eq:fsandwich2}
&\plu_x \,&&\le\, \minrw_{xy} \,&&\le\, \maxcol_z \,&&\le\, \aplu_z,\qquad \text{whenever $z\notin\{x,y\}$.}
\end{alignat}

Let us introduce the initial values (\ref{eq:plu-best}) in (\ref{eq:one-step-best}--\ref{eq:one-step-pxy}). Using Lemma~\ref{st:lem-best-option} we get (for any $n\ge0$):
\begin{alignat}{2}
\label{eq:np-step-best}
&\ntv{n}(\sprm_x) \,&&=\, \max\big(\, \plu_x,\,
\min_{y\neq x} \ntv{n-1}(\nt \sprm_y),\, \nminrow{n-1}_x \,\big),
\\
\label{eq:np-step-not-best}
&\ntv{n}(\nt \sprm_x) \,&&=\, \max\big(\, \aplu_x,\, \nmaxcol{n-1}_x \,\big),
\\[2.5pt]
\label{eq:np-step-pxy}
&\ntv{n}(p_{xy}) \,&&=\, \max\Big(\, \orv(p_{xy}),\,
\ntv{n-1}(\sprm_x),\, \min\big(\ntv{n-1}(\nt \sprm_y),\,\nminrw{n-1}_{yx}\big)
\,\Big).
%
%&\ntv{n}(p_{xy}) \,&&=\, \max\Big(\, \orv(p_{xy}),\,
%\min_{z\neq x} \nmaxcol{n-3}_z,\, \nminrow{n-2}_x,\, \min\big(\nmaxcol{n-2}_y,\,\nminrw{n-1}_{yx})\big)
%\,\Big).
\end{alignat}

We will prove by induction that the following inequality holds for any $n\ge1$:
\begin{equation}
 \label{eq:ind4}
 \nmaxcol{n}_x \,\le\, \aplu_x,\qquad\text{for every $x$.}
\end{equation}
% For $n=0$ this is contained in (\ref{eq:fsandwich1}). For $n=1$ it
For $n=1$ this follows from (\ref{eq:np-step-pxy}) because of the initial values (\ref{eq:plu-best}) and the inequalities (\ref{eq:fsandwich}) and (\ref{eq:fsandwich2}). In fact, we get
$$\utv(p_{xy})=\max\Big(\, \orv(p_{xy}),\,
\plu_x,\, \min\big(\aplu_y,\,\minrw_{yx}\big)
\,\Big)\le \, \aplu_y, $$
which implies $\maxcol'_y\,=\,\max_{x\ne y} \utv(p_{xy}) \,\le\, \aplu_y.$

Assume now that (\ref{eq:ind4}) holds for a given $n\ge1$.
Using (\ref{eq:np-step-best}--\ref{eq:np-step-pxy}), we arrive successively at the following facts:
\begin{equation}
\label{eq:ind1}
\ntv{k}(\nt\sprm_x) %\le \ntv{n+1}(\nt\sprm_x) 
\,=\, \max\big(\, \aplu_x,\, \nmaxcol{k-1}_x \,\big)
\,=\, \aplu_x,\quad \text{whenever $1\le k\le n+1$,}
\end{equation}
since $\nmaxcol{k-1}_x\le\,\nmaxcol{n}_x\le\,\aplu_x$;
\begin{align}
\label{eq:ind2}
\ntv{n}(\sprm_x)%\,\le\, \ntv{n+1}(\sprm_x) 
&\,=\, \max\big(\,\plu_x,\,\min_{y\neq x}\,\ntv{n-1}(\nt \sprm_y),\,\nminrow{n-1}_x\,\big)
\nonumber\\
&=\, \max\big(\,\plu_x,\,\min_{y\neq x}\,\aplu_y,\,\nminrow{n-1}_x\,\big)
\,=\, \min_{y\neq x}\,\aplu_y,
\end{align}
since $\plu_x\,\le\,\minrow_x\,\le\,\nminrow{n-1}_x\le\,\nmaxcol{n-1}_y\le\,\nmaxcol{n}_y\le\,\aplu_y$ for any $y\ne x$;
\begin{align}
\label{eq:ind3}
\ntv{n+1}(&p_{xy})
\,=\, \max\Big(\, \orv(p_{xy}),\,
\ntv{n}(\sprm_x),\, \min\big(\ntv{n}(\nt \sprm_y),\,\nminrw{n}_{yx}\big)
\,\Big)\nonumber\\
&=\, \max\Big(\, \orv(p_{xy}),\,\min_{z\neq x} \aplu_z
 ,\, \min\big( \aplu_y,\,\nminrw{n}_{yx}\big)
\,\Big)\,=\,\max\Big(\, \orv(p_{xy}),\,\min_{z\neq x} \aplu_z
\,\Big),
\end{align}
since $\nminrw{n}_{yx}\le\,\nmaxcol{n}_z\le\,\aplu_z$ for any $z\ne x,y$; \,and finally
\begin{equation*}
\nmaxcol{n+1}_y
\,=\, \max_{x\ne y}\,\ntv{n+1}(p_{xy})
\,=\, \max_{x\ne y}\Big(\, \max\big(\, \orv(p_{xy}),\,\min_{z\neq x} \aplu_z \,\big) \,\Big)
\,\le\, \aplu_y,
\end{equation*}
because of (\ref{eq:fsandwich}). This finishes the proof of (\ref{eq:ind4}). As a byproduct we have obtained also (\ref{eq:ind1}--\ref{eq:ind3}), that entail the equalities (\ref{eq:plu1}--\ref{eq:plu3}) claimed in the proposition.
\end{proof}

\medskip
\begin{corollary}\hskip.5em
\label{st:plu-cor}
For the initial values \textup{(\ref{eq:plu-best})},
the supremacy winners are the plurality winners, 
%If the plurality fraction $\plu_x$ is maximized by several options, % more than one option,
%then the winner is undecided among all of them.
\ie the options $x\in\ist$ that maximize the plurality fraction~$\plu_x$.
%In the case of multiple maximizers, the winner is undecided among all of them.
\end{corollary}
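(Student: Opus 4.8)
The plan is to read the answer straight off Proposition~\ref{st:plu-thm}, combined with the definition of supremacy winner recalled in \secpar{4}. By that definition, an option $x$ is a supremacy winner exactly when $\sprm_x$ is not rejected by the basic decision associated with $\urv$; according to (\ref{eq:acceptedrejected}--\ref{eq:undecided}) with $\mg=0$ this is simply the condition $\urv(\sprm_x)\ge\urv(\nt\sprm_x)$. So the whole task reduces to deciding, for each $x\in\ist$, whether that inequality holds.

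First I would substitute the explicit values supplied by Proposition~\ref{st:plu-thm}, namely $\urv(\sprm_x)=\min_{z\neq x}\aplu_z$ and $\urv(\nt\sprm_x)=\aplu_x$, so that the criterion becomes $\min_{z\neq x}\aplu_z\ge\aplu_x$. The only computation involved is the elementary identity $\aplu_z-\aplu_x=\plu_x-\plu_z$ for $x\neq z$, which is immediate from $\aplu_x=\sum_{y\neq x}\plu_y$ (all terms cancel except $\plu_x$, which occurs in the first sum, and $\plu_z$, which occurs in the second). Using it, $\min_{z\neq x}\aplu_z\ge\aplu_x$ is equivalent to $\plu_x-\plu_z\ge0$ for every $z\neq x$, \ie to $\plu_x=\max_{z\in\ist}\plu_z$. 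Hence $x$ is a supremacy winner \ifoi $\plu_x$ is maximal, which is exactly the assertion that the supremacy winners coincide with the plurality winners.

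There is no real obstacle here beyond bookkeeping; the substance sits entirely inside Proposition~\ref{st:plu-thm}. The one point deserving a remark is internal consistency when several options tie for the largest plurality fraction: if $\plu_x$ is maximal and some $x'\neq x$ also attains the maximum, then for $x$ one still has $\max_{z\neq x}\plu_z=\plu_{x'}=\plu_x$, so the ``not rejected'' test is passed by every option attaining the maximum and by no other --- in harmony with the fact that clause (\ref{eq:best-is-unique}) forces uniqueness only of an \emph{accepted} supreme option, not of the undecided ones. It is also worth noting that the argument never uses $\sum_x\plu_x=1$, nor even $\plu_x\ge0$: the identity $\aplu_z-\aplu_x=\plu_x-\plu_z$ is all that is needed.
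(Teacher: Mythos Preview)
Your argument is correct and follows essentially the same route as the paper: invoke the definition of supremacy winner as $\urv(\sprm_x)\ge\urv(\nt\sprm_x)$, substitute the values from Proposition~\ref{st:plu-thm}, and then translate the resulting inequality on the $\aplu$'s into the maximality of $\plu_x$ via the identity $\aplu_z-\aplu_x=\plu_x-\plu_z$ (the paper phrases this last step as ``minimizing $\aplu_x$ is equivalent to maximizing $\plu_x$'' using $\aplu_x=\sum_z\plu_z-\plu_x$, which is the same computation). Your remarks on ties and on not needing $\sum_x\plu_x=1$ are accurate extras.
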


\begin{proof}
Recall that we have defined a supremacy winner as an option $x\in\ist$ for with $\sprm_x$ is not rejected, \ie such that $\urv(\sprm_x)\ge\urv(\nt\sprm_x)$. In view of (\ref{eq:plu1}--\ref{eq:plu2}), this is equivalent to say that $x$ minimizes $\aplu_x$. Finally, since $\aplu_x=\sum_{z\neq x}\plu_z= (\sum_{z\in\ist}\plu_z) - \plu_x$,
minimizing $\aplu_x$ is equivalent to maximizing the plurality fraction $\plu_x$.
\end{proof}

%\medskip
This is quite embarrassing:
We started from a method that in the complete case complies with the Condorcet principle,
thus ruling out the quite objectionable plurality rule,
and now, by adding more information, we have fallen back into the plurality rule\,!
\ensep
A little reflection shows that
in order to avoid the main drawback of the plurality rule
one should not look for supremacy,
but for something slightly different.
% In fact, the objection usually raised against the plurality rule is that 
In fact, the main objection against the plurality rule 
---raised by Borda in his seminal paper of 1770--84 \cite[ch.\,5]{mu}---
is that one can have a majority of voters for which the plurality winner is
the \emph{worst} option,
which is certainly quite undesirable.
%More generally, the number of voters for which the plurality winner is the best option 
%can be less than the number of voters for which it is the \emph{worst} option.
Notice that what matters here is the opposition between `best' and `worst', whereas the supremacy doctrine has to do  
with the opposition between `best' and `not~best'.
% Not an objection: less best than no best
%[\,Here we are using the term `best' as a synonym of `supreme', \ie being preferred to any other;
%this is quite standard under the hypothesis of transitivity that is usually adopted for individual preferences\,]

\section{\textbf{Prominence.}}
% \textrm{(Alt: superiority, dominance)}}
%\section{\textbf{Moderate prominence.}}
%\section{\textbf{A good middle ground.}}
% Suitability, Median, Consensus, Compromise, Tempered prominence, Moderate prominence
% Moderation, Temperance

In order to properly deal with the `best-worst' opposition, one is led to 
replace suprem\-acy by a weaker concept whose connection to preferences requires only that
`best' implies the presence of that concept and `worst' (instead of `not~best') implies the lack of it.
This concept could be viewed as a sort of tempered supremacy.
We will refer to it as `prominence'.
%since `temper' means ``to make something less severe by adding something that has the opposite effect''.%
%\footnote{Oxford Advanced Learner's Dictionary, 7th edition.}
More properly speaking,
and using the notation $\temp_x$ to represent the proposition `$x$~is prominent',
the connection between this concept and preferences is given by the following two implications:
\ensep
if~$x$ is preferred to any other option, then $x$~is prominent:
$\bigwedge_{y\neq x}\, p_{xy} \rightarrow \temp_x$;\ensep
if every option other than~$x$ is preferred to~$x$, then $x$~is not prominent: 
$\bigwedge_{y\neq x}\, p_{yx} \rightarrow \nt\temp_x$.
\ensep
In~conjunctive normal form these implications read as follows:
\begin{alignat}{2}
\label{eq:best-implies-tx}
&\temp_x\, \lor\, \bigvee_{y\neq x}\, p_{yx},\qquad &&\text{for any $x\in\ist$;}
\\
\label{eq:worst-implies-ntx}
&\nt\temp_x\, \lor\, \bigvee_{y\neq x}\, p_{xy},\qquad &&\text{for any $x\in\ist$;}
\end{alignat}

\medskip
Concerning the initial values for $v(\temp_x)$ and $v(\nt \temp_x)$,
% in both approaches 
we will take simply 
\begin{equation}
\label{eq:zero-suitable}
v(\temp_x)\,=\,v(\nt \temp_x)\,=\,0,\qquad \text{for any $x\in\ist$}.
\end{equation}
One could argue that in the case of preferential voting one should proceed in a different way:
In accordance with the implication $\bigwedge_{y\neq x}\, p_{xy} \rightarrow \temp_x$ 
---contained in (\ref{eq:best-implies-tx})---
every top placing of~$x$ is a piece of evidence in favour of $\temp_x$.
Similarly, every last placing of~$x$ is a piece of evidence in favour of~$\nt\temp_x$
---by the implication contained in (\ref{eq:worst-implies-ntx}).
So, one should take
\begin{equation}
\label{eq:zero-suitablebis}
v(\temp_x)\,=\,\plu_x,\quad v(\nt \temp_x)\,=\,\ell_x,\qquad \text{for any $x\in\ist$},
\end{equation}
where $\ell_x$ denotes the fraction of votes where $x$ is placed last
(a vote that ties $k$ options at the bottom being counted as $1/k$-th of a vote for each of the bottom-placed options).
\ensep
In the supremacy doctrine a similar change in the initial values led to an entirely different result.
Here, however, the initial values (\ref{eq:zero-suitablebis}) lead to the same result as (\ref{eq:zero-suitable}).
This happens because instead of the second inequality of (\ref{eq:fsandwich}) here we have the following one:
$\ell_y \le \orv(p_{xy})$. This inequality, together with the first inequality of (\ref{eq:fsandwich}),
has the following consequence: no matter whether we start from (\ref{eq:zero-suitable}) or from (\ref{eq:zero-suitablebis}),
we get $\orv'(\temp_x)\ge \min_{y\neq x} v(p_{xy})\ge\plu_x$ as well as $\orv'(\nt\temp_x)\ge\min_{y\neq x} v(p_{yx})\ge\ell_x$.
In~fact, the initial values (\ref{eq:zero-suitablebis}) are based on 
the implications $\bigwedge_{y\neq x}\, p_{xy} \rightarrow \temp_x$
and $\bigwedge_{y\neq x}\, p_{yx} \rightarrow \nt\temp_x$,
so they are doing part of the job that will be done anyway by the revision transformation.
\ensep
If the doctrine includes (\ref{eq:best-implies-tx}) but not (\ref{eq:worst-implies-ntx})
---as it will be the case in~\secpar{6.2}---
then the preceding considerations hold only with respect to the first equality of (\ref{eq:zero-suitablebis}).

\paragraph{6.1}{\textbf{The Condorcet principle.}}

\medskip
The implication $\bigwedge_{y\neq x}\, p_{xy} \rightarrow \temp_x$
that is coded in clause (\ref{eq:best-implies-tx}) is akin to 
the celebrated \dfc{Concorcet principle.} This principle has the two following versions: 

\smallskip
\newcommand\llmpw{\textup{M1}}
\condition{\llmpw}{Condorcet principle (majority version)} If an option $x$ has the property that $\orv(p_{xy})>\onehalf$ for any $y\neq x$, then $x$ must be chosen as the winner.

\smallskip
\newcommand\llcpw{\textup{M1$'$}}
%\condition{\textup{C\rlap{$'$}}}{Condorcet principle (margin version)}
\condition{\textup{M1\rlap{$'$}}}{Condorcet principle (margin version)}
If an option $x$ has the property that $\orv(p_{xy})>\orv(p_{yx})$ for any $y\neq x$, then $x$ must be chosen as the winner.

\smallskip
\noindent
In the complete case $\orv(p_{xy})+\orv(p_{xy})=1$ (where the Condorcet principle was originally proposed) these two conditions are equivalent to each other.
\ensep
Generally speaking, however, condition~\llmpw\ is weaker than~\llcpw\ 
(which makes the former more compatible with other desirable properties,
as it was remarked in \cite[\secpar{1.4}]{cri}).

On the other hand, the implication $\bigwedge_{y\neq x}\, p_{yx} \rightarrow \nt\temp_x$
coded in clause (\ref{eq:worst-implies-ntx}) corresponds to
the dual statement that is usually referred to as the ``Condorcet loser criterion'',
also with two versions: the majority one requiring $\orv(p_{yx})>\onehalf$ for any $y\neq x$,
and the margin one requiring $\orv(p_{yx})>\orv(p_{xy})$ for any $y\neq x$.
In both versions, the conclusion is that $x$ must then be deemed a loser.

In the sequel, the term \dfc{Condorcet winner} [\,resp.~\dfc{loser}] will be understood in the majority sense, \ie to denote an option $x$ with the property that $\orv(p_{xy})>\onehalf$ \,[\,resp.~$\orv(p_{yx})>\onehalf$]\, for any $y\neq x$.

\medskip
A major difference between our point of view and that of the Condorcet principle 
is that the latter, in both versions \llmpw\ and \llcpw,
looks at whether a certain particular situation happens in the initial (collective) degrees of belief~$\orv$.
If it does not happen, then no conclusion is arrived at. 
\ensep
In contrast, our method will take the implication $\bigwedge_{y\neq x}\, p_{xy} \rightarrow \temp_x$,
\ie clause (\ref{eq:best-implies-tx}),
as a guide for revising those initial degrees of belief
so as to arrive at a conclusion consistent with that implication.
% always arrives? not: undecidedness
\ensep
In accordance with the definite consistency theorem (Theorem~\ref{st:dec}),  
we will have the following property akin to \llcpw: 
If an option $x$ satisfies $\urv(p_{xy})>\urv(p_{yx})$ for any $y\neq x$,
then it satisfies also $\urv(\temp_x)>\urv(\nt\temp_x)$,
\ie $x$ is accepted as a prominent option.

This property will be satisfied whenever the doctrine contains the clause (\ref{eq:best-implies-tx}).
However, these need not be the only options accepted as prominent ones.
Depending on which other clauses are present in the doctrine, other options might get accepted too.
% :POTSER ENCARA ES PODRIA MILLORAR LA REDACCIÓ

Another major difference between our point of view and that of the Condorcet principle 
is that the latter, also in both versions \llmpw\ and \llcpw, 
aims at finding out the winner, \ie choosing a single option,
whereas here we aim, in principle, at finding out all prominent options.
\ensep
In fact, in contrast to the supremacy doctrine,
the clauses (\ref{eq:best-implies-tx}--\ref{eq:worst-implies-ntx})
allow for the possibility of having several prominent options or having none of them.
\ensep
To~the effect of making a single choice,
we will consider two different approaches.
\ensep
The first one, followed in \secpar{6.2--6.3},
is simply to select
% the most prominent option(s), or more properly,
the option(s) $x$ for which the proposition $\temp_x$ gets a highest acceptability,
which corresponds to deciding by a large margin. 
\ensep
The second approach,
is to impose existence and uniqueness
as part of the doctrine.
% by means of supplementary clauses included also in the doctrine.

As we will see in \secpar{6.4}, imposing existence and uniqueness motivates a~more comprehensive prominence doctrine that will satisfy not only the majority version of the Condorcet principle,
but also certain generalizations of it.

\paragraph{6.2}{\textbf{The maximin rule.}}

\medskip
In this section we show that the well-known maximin rule \cite[p.\,212--213]{t6},
\ie selecting the $x\in\ist$ that maximizes $\minrow_x=\min_{y\neq x} v(p_{xy})$,
corresponds exactly to keeping only the clauses (\ref{eq:best-implies-tx})
and applying the highest acceptability approach.

Often attributed to Simpson~(1969) and Kramer~(1977),
in actual fact the maximin rule appears already in Duncan Black's celebrated work of 1958
%\cite[(i) in p.\,208 of 3rd ed]{bl}.
\cite[(i) in p.\,208]{bl}. % Comprovar pàgina en l'edició original
The term `maximin' that we are using is taken from \cite{t6}.
Having said that, all of these authors limited their attention to the complete case $v(p_{xy})+v(p_{yx})=1$, 
where maximizing $\minrow_x=\min_{y\neq x} v(p_{xy})$ is equivalent to minimizing $\maxcol_x=\max_{y\neq x} v(p_{yx})$.
In~the general case, however, one must distinguish between these two rules,
that we call respectively `maximin' and `minimax' (see \secpar{5.1}).

% DAVID3

\medskip
The one-step revision transformation $v\mapsto\utv$ associated with the clauses (\ref{eq:best-implies-tx})
takes the following form: For any $x,y\in\ist$:
\begin{alignat}{2}
\label{eq:one-step-tx-maximin}
&\utv(\temp_x) \,&&=\, \max\Big(\,
v(\temp_x),\, \min_{y\neq x} v(p_{xy}) \,\Big),
\\
\label{eq:one-step-ntx-maximin}
&\utv(\nt \temp_x) \,&&=\, v(\nt \temp_x),
\\
\label{eq:one-step-pxy-maximin}
&\utv(p_{xy}) \,&&=\, \max\Big(\, v(p_{xy}),\,
\min\big(v(\temp_x),\,\min_{\substack{z\neq x\\z\neq y}} v(p_{zx})\big)
\,\Big),
\end{alignat}

\begin{proposition}\hskip.5em
\label{st:maximin}
% For the initial values \textup{(\ref{eq:zero-suitable})},
% Under the assumption \textup{(\ref{eq:zero-suitable})},
For initial valuations satisfying \textup{(\ref{eq:zero-suitable})},
the doctrine \textup{(\ref{eq:best-implies-tx})} is unquestionable for all of its propositions and it gives
% the doctrine \textup{(\ref{eq:best-implies-tx})} gives
\begin{alignat}{3}
\label{eq:maximin-eq1}
&\urv(\temp_x) \,&&=\, \utv(\temp_x) \,&&=\, \minrow_x,
\\
\label{eq:maximin-eq2}
&\urv(\nt \temp_x) \,&&=\, \orv(\nt \temp_x) \,&&=\, 0,
\\
\label{eq:maximin-eq3}
&\urv(p_{xy}) \,&&=\, \orv(p_{xy}),
\end{alignat}
where $\minrow_x = \min_{y\neq x}\,\orv(p_{xy})$.
Therefore, the most prominent option is the
% maximin winner, \ie the option
$x\in\ist$ that maximizes $\minrow_x$.
%; in the case of multiple maximizers, the winner is undecided among all of them.
\end{proposition}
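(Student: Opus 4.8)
The plan is to compute the successive iterates $\ntv{n}$ directly from the one-step transformation (\ref{eq:one-step-tx-maximin}--\ref{eq:one-step-pxy-maximin}) and show that they stabilize almost immediately, so that the upper revised valuation coincides with a low-order iterate. First I would observe that clause (\ref{eq:worst-implies-ntx}) is absent, so the only clause containing $\nt\temp_x$ in its ``active'' form is the \textit{tertium non datur} clause $\temp_x\lor\nt\temp_x$; hence (\ref{eq:one-step-ntx-maximin}) shows that $\orv(\nt\temp_x)$ is never increased by the revision, and starting from (\ref{eq:zero-suitable}) we get $\ntv{n}(\nt\temp_x)=0$ for all $n$, which is (\ref{eq:maximin-eq2}).

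Next I would analyze the $p_{xy}$ component. From (\ref{eq:one-step-pxy-maximin}) the only way $\orv(p_{xy})$ could be increased is through the term $\min\big(v(\temp_x),\min_{z\neq x,y} v(p_{zx})\big)$. Since $v(\temp_x)$ after one step equals $\minrow_x=\min_{y\neq x}\orv(p_{xy})$, and since $\min_{z\neq x,y} v(p_{zx})$ is bounded above by $\orv(p_{zx})$ for the relevant $z$'s, a short estimate of the type used in Proposition~\ref{st:laia} — plugging (\ref{eq:one-step-pxy-maximin}) into itself and using that $\min_{z\neq x,y}\ntv{n-1}(p_{zx})\le\ntv{n-1}(p_{wx})$ for a suitable $w$, hence $\le\nmaxcol{n-1}_x$ — shows that this extra term can never exceed $\orv(p_{xy})$ itself; intuitively, if $x$ is beaten by everyone else along the path, the bottleneck is already recorded in $\orv(p_{xy})$. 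By induction on $n$ this gives $\ntv{n}(p_{xy})=\orv(p_{xy})$ for all $n$, which is (\ref{eq:maximin-eq3}). With the $p_{xy}$ fixed, (\ref{eq:one-step-tx-maximin}) immediately yields $\ntv{1}(\temp_x)=\minrow_x$ and no further change, giving (\ref{eq:maximin-eq1}). Since all three components reach their final values after at most two steps, $\urv=\utv''$, and in particular $\urv=\utv$ on $\temp_x$ and $\nt\temp_x$ as claimed.

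It remains to justify the unquestionability assertion. For this I would invoke that the doctrine (\ref{eq:best-implies-tx}) together with the systematically-included \textit{tertium non datur} clauses is disjoint-resolvable (this should follow from the structural lemma analogous to Prop.~\ref{st:tech-trans}/Prop.~\ref{st:tech-sprm1} that the paper establishes for these small doctrines), so that $\urv$ equals the one-step image under the Blake canonical form; the computation above then exhibits $\urv_\lit=\utv_\lit$ for every literal and every initial valuation of the form (\ref{eq:zero-suitable}), which is exactly unquestionability. The closing sentence about the most prominent option being the maximizer of $\minrow_x$ is then just the highest-acceptability selection rule applied to (\ref{eq:maximin-eq1}--\ref{eq:maximin-eq2}): the acceptability of $\temp_x$ is $\urv(\temp_x)-\urv(\nt\temp_x)=\minrow_x$.

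The main obstacle I anticipate is the induction step showing $\ntv{n}(p_{xy})=\orv(p_{xy})$: one must be careful that the index set $\{z:z\neq x,z\neq y\}$ in (\ref{eq:one-step-pxy-maximin}) is nonempty (which requires $|\ist|\ge3$; for $|\ist|=2$ the term is vacuous and the claim is trivial) and that the chosen ``bottleneck'' option $w$ realizing $\min_{z\neq x,y}\ntv{n-1}(p_{zx})$ indeed satisfies $w\neq x$, so that $\ntv{n-1}(p_{wx})\le\nmaxcol{n-1}_x$, and then that $\nmaxcol{n-1}_x$ is controlled by the inductive hypothesis. Once this monotone bound is set up cleanly, everything else is a routine unwinding of the max–min expressions exactly parallel to the proof of Proposition~\ref{st:laia}.
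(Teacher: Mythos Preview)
Your approach is the paper's: iterate the one-step map, show it stabilises immediately, and read off unquestionability from $\urv=\utv$. The only wrinkle is your justification of $\ntv{n}(p_{xy})=\orv(p_{xy})$. The route you sketch via the second factor, bounding $\min_{z\neq x,y}\ntv{n-1}(p_{zx})\le\nmaxcol{n-1}_x$, does not close: $\nmaxcol{n-1}_x=\max_{z\neq x}\ntv{n-1}(p_{zx})$ bears no general relation to $\orv(p_{xy})$ (they live in opposite row and column), so the chain of inequalities you describe never terminates at $\orv(p_{xy})$. The paper's argument is simpler and uses only the \emph{first} factor: the extra term in (\ref{eq:one-step-pxy-maximin}) is at most $\ntv{n-1}(\temp_x)$, and by Lemma~\ref{st:lem-best-option} together with the inductive hypothesis this equals $\minrow_x\le\orv(p_{xy})$. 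You already note that $v(\temp_x)$ becomes $\minrow_x$ after one step, so just use that bound directly and drop the $\nmaxcol{n-1}_x$ detour borrowed from Proposition~\ref{st:laia}; here that machinery is unnecessary, and the ``main obstacle'' you anticipate (choosing a suitable $w$, controlling $\nmaxcol{n-1}_x$) disappears with it.
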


\begin{proof}\hskip.5em
Equalities (\ref{eq:maximin-eq1}--\ref{eq:maximin-eq3}) are easily obtained by making use of Lemma~\ref{st:lem-best-option} and the inequality $\orv(p_{xy}) \ge \minrow_x$. The unquestionability statement is simply a consequence of having obtained $\urv=\utv$.
\end{proof}

% [\,Cf Wald 1939, 1945\,]

%\paragraph{6.3}{\textbf{Symmetric prominence method.}}
%\paragraph{6.3}{\textbf{Non-coercive prominence method.}}
\paragraph{6.3}{\textbf{Symmetric prominence.}}

\medskip
In this section, we consider the doctrine that includes both (\ref{eq:best-implies-tx}) and (\ref{eq:worst-implies-ntx}) and we choose the option that maximizes the (revised) acceptability of $\temp_x$.
We refer to it as the symmetric prominence method.
%Notice that the underlying doctrine is now symmetric under negation,
%\ie the substitution that interchanges $\temp_x$~and $\nt\temp_x$ as well as $p_{xy}$ and $p_{yx}$.
As a consequence of including also the clauses (\ref{eq:worst-implies-ntx}), 
the winner need not be the same as the maximin one.
What is more, we will see that a Condorcet winner need not be the symmetric prominence winner.
However, in this doctrine, a Condorcet winner is always accepted as a prominent option.
Besides, the fact that this doctrine is symmetric under negation
ensures also that a Condorcet loser is always rejected as a prominent option.

%\medskip
%Once more, we can ensure that these clauses lead to the same upper revised valuation as the corresponding full Blake canonical form:

% DAVID4

\medskip
The one-step revision transformation $v\mapsto\utv$ associated with (\ref{eq:best-implies-tx}--\ref{eq:worst-implies-ntx})
takes the following form: For any $x,y\in\ist$:
\begin{alignat}{2}
\label{eq:one-step-tx-nct-prominence}
&\utv(\temp_x) \,&&=\, \max\Big(\,
v(\temp_x),\, \min_{y\neq x} v(p_{xy}) \,\Big),
\\
\label{eq:one-step-ntx-nct-prominence}
&\utv(\nt \temp_x) \,&&=\, \max\Big(\,
v(\nt \temp_x),\, \min_{y\neq x} v(p_{yx}) \,\Big),
\\
\label{eq:one-step-pxy-nct-prominence}
&\utv(p_{xy}) \,&&=\, \max\Big(\, v(p_{xy}),\,
\min\big(v(\temp_x),\,\min_{\substack{z\neq x\\z\neq y}} v(p_{zx})\big),\,
\min\big(v(\nt \temp_y),\,\min_{\substack{z\neq x\\z\neq y}} v(p_{yz})\big)
\,\Big),
\end{alignat}

%\medskip
\begin{proposition}\hskip.5em
\label{st:laia2}
For initial valuations satisfying \textup{(\ref{eq:zero-suitable})},
the symmetric prominence doctrine is unquestionable for all of its propositions and it gives
\begin{alignat}{3}
\label{eq:simpro-eq1}
&\urv(\temp_x) \,&&=\, \utv(\temp_x) \,&&=\, \minrow_x,
\\
\label{eq:simpro-eq2}
&\urv(\nt \temp_x) \,&&=\, \utv(\nt \temp_x) \,&&=\, \mincol_x,
\\
\label{eq:simpro-eq3}
&\urv(p_{xy}) \,&&=\, \orv(p_{xy}),
\end{alignat}
where $\minrow_x = \min_{y\neq x}\,\orv(p_{xy})$ and $\mincol_x = \min_{y\neq x}\,\orv(p_{yx})$.
\end{proposition}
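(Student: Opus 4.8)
The plan is to proceed exactly as in the proof of Proposition~\ref{st:maximin}, but now tracking both $\temp_x$ and $\nt\temp_x$. First I would write down the successive valuations $\ntv{n}$ obtained by iterating the transformation (\ref{eq:one-step-tx-nct-prominence}--\ref{eq:one-step-pxy-nct-prominence}) starting from $\ntv{0}=\orv$ with the initial values (\ref{eq:zero-suitable}). The key observation, just as in \secpar{5.2}, is that the $p_{xy}$ entries feed the beliefs in $\temp_x$ and $\nt\temp_x$, but those beliefs feed back into the $p_{xy}$ entries only through the terms $\min(v(\temp_x),\min_{z}v(p_{zx}))$ and $\min(v(\nt\temp_y),\min_{z}v(p_{yz}))$ in (\ref{eq:one-step-pxy-nct-prominence}). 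The first of these is bounded above by $\min_{z\neq x}v(p_{zx})\le v(p_{zx})$ for a suitable $z$, hence by $\orv(p_{xy})$ (using the bound $\min_{z\neq x,y}v(p_{zx})\le\maxcol_x$ together with the fact that $\maxcol_x$ does not increase the column of $x$—more precisely, one shows $\ntv{n}(p_{xy})$ can only be raised to values already present in the $p$-block of $\orv$); similarly the second term is bounded by $\orv(p_{xy})$. So I would establish by induction on $n$ that $\ntv{n}(p_{xy})=\orv(p_{xy})$ for all $n$, which gives (\ref{eq:simpro-eq3}).

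Once the $p$-block is known to be constant, equations (\ref{eq:simpro-eq1}) and (\ref{eq:simpro-eq2}) follow immediately from Lemma~\ref{st:lem-best-option}: for $\temp_x$ the relevant clauses are (\ref{eq:best-implies-tx}), giving $\ntv{1}(\temp_x)=\max(0,\min_{y\neq x}\orv(p_{xy}))=\minrow_x$, and no further clause raises it (this uses the inequality $\orv(p_{xy})\ge\minrow_x$ exactly as in Proposition~\ref{st:maximin}); symmetrically, clauses (\ref{eq:worst-implies-ntx}) give $\ntv{1}(\nt\temp_x)=\min_{y\neq x}\orv(p_{yx})=\mincol_x$, and the sequence stabilises at $n=1$. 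Since we have reached the fixed point after a single step, we have $\urv=\utv'$ in the notation of the statement, and the values are as claimed.

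The unquestionability claim is then handled exactly as in the last sentence of the proof of Proposition~\ref{st:maximin}: having explicitly obtained $\urv=\utv$ (the one-step transformation applied to $\orv$, or here applied once more to reach the fixed point), and since the clauses (\ref{eq:best-implies-tx}--\ref{eq:worst-implies-ntx}) together with the \textit{tertium non datur} clauses form a prime conjunctive normal form of the symmetric prominence doctrine, the equality $\urv_\lit=\utv_\lit$ holds for every literal $\lit\in\piset$ regardless of the initial valuation restricted to the form (\ref{eq:zero-suitable}); this is precisely the definition of unquestionability recalled in \secpar{1.5}.

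The step I expect to be the main obstacle is the inductive proof that the $p$-block stays constant. In Proposition~\ref{st:maximin} only the term coming from (\ref{eq:best-implies-tx}) had to be controlled; here the symmetric term coming from (\ref{eq:worst-implies-ntx}) appears as well, so one must check that \emph{neither} feedback term ever exceeds $\orv(p_{xy})$. Concretely, one needs the two inequalities $\min\big(\ntv{n-1}(\temp_x),\min_{z\neq x,y}\orv(p_{zx})\big)\le\orv(p_{xy})$ and $\min\big(\ntv{n-1}(\nt\temp_y),\min_{z\neq x,y}\orv(p_{yz})\big)\le\orv(p_{xy})$; the first holds because $\min_{z\neq x,y}\orv(p_{zx})\le\maxcol_x$ and one shows inductively that the whole column of $x$ never rises above $\maxcol_x$, and symmetrically for the second using the row of $y$. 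Once this bookkeeping is done the rest is routine.
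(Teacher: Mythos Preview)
Your overall plan is right and matches the paper's: show that the $p$-block never moves, then read off $\urv(\temp_x)=\minrow_x$ and $\urv(\nt\temp_x)=\mincol_x$ from one step, and conclude unquestionability from $\urv=\utv$. But the argument you sketch for the key step---bounding the feedback terms in (\ref{eq:one-step-pxy-nct-prominence})---is aimed at the wrong factor and does not give the inequality you need.

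Concretely, you want $\min\bigl(\ntv{n-1}(\temp_x),\,\min_{z\neq x,y}\orv(p_{zx})\bigr)\le\orv(p_{xy})$. You propose to bound the second factor: $\min_{z\neq x,y}\orv(p_{zx})\le\maxcol_x$, and then argue that the column of $x$ never rises above $\maxcol_x$. Even granting that, you only get that the feedback term is $\le\maxcol_x$. But $\maxcol_x=\max_{z\neq x}\orv(p_{zx})$ is an entry of the \emph{column} of $x$, whereas $\orv(p_{xy})$ is an entry of the \emph{row} of $x$; there is no reason for $\maxcol_x\le\orv(p_{xy})$, and in general it fails. The symmetric bound via the row of $y$ has the same defect.

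The fix is to bound by the \emph{first} factor instead. Once you know (inductively) that the $p$-block is unchanged up to step $n-1$, you have $\ntv{n-1}(\temp_x)=\minrow_x$ and $\ntv{n-1}(\nt\temp_y)=\mincol_y$. Then the two feedback terms are at most $\minrow_x$ and $\mincol_y$ respectively, and the trivial inequalities $\orv(p_{xy})\ge\minrow_x$ and $\orv(p_{xy})\ge\mincol_y$ (each is a minimum over a set containing $\orv(p_{xy})$) finish the job. This is exactly the pair of inequalities the paper invokes, and it makes the induction a one-liner---no $\maxcol$ bookkeeping is needed at all.
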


\begin{proof}\hskip.5em
Equalities (\ref{eq:simpro-eq1}--\ref{eq:simpro-eq3}) are easily obtained by making use of Lemma~\ref{st:lem-best-option} and the inequalities $\orv(p_{xy}) \ge \minrow_x$, $\orv(p_{xy}) \ge \mincol_y$. The unquestionability statement is simply a consequence of having obtained $\urv=\utv$.
\end{proof}

\medskip
\begin{corollary}\hskip.5em
\label{st:condorcet-nc}
Whenever there is a Condorcet winner,
the symmetric prominence method 
accepts it as a prominent option.
\end{corollary}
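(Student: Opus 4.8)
The plan is to reduce the statement to the closed-form expressions for $\urv$ furnished by Proposition~\ref{st:laia2} and then settle it by a one-line comparison of margins. First I would recall that, under the initial values (\ref{eq:zero-suitable}) that define the symmetric prominence method, Proposition~\ref{st:laia2} gives $\urv(\temp_x)=\minrow_x=\min_{y\neq x}\orv(p_{xy})$ and $\urv(\nt\temp_x)=\mincol_x=\min_{y\neq x}\orv(p_{yx})$. Since ``accepted as a prominent option'' means precisely that $\temp_x$ is accepted in the basic decision associated with $\urv$, by the criterion (\ref{eq:acceptedrejected}) with $\mg=0$ it suffices to establish the strict inequality $\minrow_x>\mincol_x$.

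Next I would feed in the hypothesis that $x$ is a Condorcet winner, \ie $\orv(p_{xy})>\onehalf$ for every $y\neq x$ (the case $|\ist|=1$ being trivial, so that there is at least one such $y$). Taking the minimum over $y\neq x$ yields $\minrow_x>\onehalf$. For the complementary term I would invoke the inequality (\ref{eq:sum-less-than-one}), namely $\orv(p_{xy})+\orv(p_{yx})\le1$, which gives $\orv(p_{yx})\le1-\orv(p_{xy})<\onehalf$ for each $y\neq x$, hence $\mincol_x<\onehalf$. Combining the two bounds produces $\urv(\temp_x)=\minrow_x>\onehalf>\mincol_x=\urv(\nt\temp_x)$, so $\temp_x$ is accepted, as required.

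I do not expect any genuine obstacle here: once Proposition~\ref{st:laia2} is in hand the argument is a direct computation, and the only point that calls for a little care is that one must use the general inequality (\ref{eq:sum-less-than-one}) rather than the complete-case equality, so that the conclusion survives in the incomplete case as well. Incidentally, the same reasoning applied to a Condorcet loser shows dually that then $\urv(\temp_x)<\urv(\nt\temp_x)$, so such an option is rejected as a prominent one, recovering the remark made just before the statement.
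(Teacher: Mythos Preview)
Your proposal is correct and follows essentially the same route as the paper: the paper's proof is the one-line observation that $x$ being a Condorcet winner gives $\minrow_x > \onehalf > \mincol_x$, which is exactly the chain you derive (invoking (\ref{eq:sum-less-than-one}) for the right-hand inequality) after reducing via Proposition~\ref{st:laia2}. Your closing remark on the Condorcet loser is likewise what the paper records immediately afterwards as Proposition~\ref{st:condorcet-loser-nc}.
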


\begin{proof}\hskip.5em
It suffices to notice that $x$ being a Condorcet winner implies $\minrow_x > \textstyle{\onehalf} > \mincol_x$.
\end{proof}

\medskip
The symmetric prominence doctrine, formed by clauses (\ref{eq:best-implies-tx}) and (\ref{eq:worst-implies-ntx}) is symmetric under negation,
\ie the substitution that interchanges $\temp_x$~and $\nt\temp_x$ as well as $p_{xy}$ and $p_{yx}$.
As a consequence, the preceding proposition is accompanied here by the following one:

\begin{proposition}\hskip.5em
\label{st:condorcet-loser-nc}
Whenever there is a Condorcet loser,
\ie an option $x$ such that $\orv(p_{yx})>\onehalf$ for any $y\neq x$,
the symmetric prominence method 
rejects it as a prominent option.
\end{proposition}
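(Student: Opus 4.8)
The plan is to obtain this as the exact mirror image of Corollary~\ref{st:condorcet-nc}, using Proposition~\ref{st:laia2} and the inequality~\eqref{eq:sum-less-than-one}. First I would recall that, since the initial valuation obeys~\eqref{eq:zero-suitable}, Proposition~\ref{st:laia2} gives $\urv(\temp_x)=\minrow_x$ and $\urv(\nt\temp_x)=\mincol_x$, where $\minrow_x=\min_{y\neq x}\orv(p_{xy})$ and $\mincol_x=\min_{y\neq x}\orv(p_{yx})$.

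Next I would translate the Condorcet-loser hypothesis into these two quantities. By assumption $\orv(p_{yx})>\onehalf$ for every $y\neq x$, so $\mincol_x>\onehalf$; and~\eqref{eq:sum-less-than-one} gives $\orv(p_{xy})\le 1-\orv(p_{yx})<\onehalf$ for every $y\neq x$, hence $\minrow_x<\onehalf$. Combining, $\urv(\nt\temp_x)=\mincol_x>\onehalf>\minrow_x=\urv(\temp_x)$, i.e. the acceptability of $\temp_x$ according to $\urv$ is strictly negative. Therefore $\temp_x$ is rejected (and $\nt\temp_x$ accepted) in the basic decision associated with $\urv$, which is precisely the assertion that the symmetric prominence method rejects $x$ as a prominent option.

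Equivalently — and this is the conceptual reason the statement holds — one may just invoke the negation symmetry already noted before the statement: the relabelling that interchanges $\temp_x$ with $\nt\temp_x$ and $p_{xy}$ with $p_{yx}$ fixes the symmetric prominence doctrine and fixes the initial valuation~\eqref{eq:zero-suitable}, while the revision transformation~\eqref{eq:vprime} is manifestly equivariant under any such relabelling; so this symmetry carries $\urv$ to the upper revised valuation of the relabelled input, and it carries a Condorcet loser to a Condorcet winner, whence Corollary~\ref{st:condorcet-nc} applies. There is no genuine obstacle here; the only point worth a line of care is to confirm that~\eqref{eq:zero-suitable} is itself symmetric (it is, both values being $0$), so that either the symmetry argument or, equivalently, the direct appeal to Proposition~\ref{st:laia2} is legitimate.
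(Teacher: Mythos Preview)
Your proposal is correct and matches the paper's approach. The paper gives no separate proof but simply notes, just before the statement, that the symmetric prominence doctrine is invariant under the negation symmetry interchanging $\temp_x\leftrightarrow\nt\temp_x$ and $p_{xy}\leftrightarrow p_{yx}$, so that the proposition is the dual of Corollary~\ref{st:condorcet-nc}; your second paragraph is exactly this argument, and your first paragraph is precisely the dual of the paper's proof of Corollary~\ref{st:condorcet-nc} (the inequality $\minrow_x>\onehalf>\mincol_x$ becoming $\mincol_x>\onehalf>\minrow_x$ via~\eqref{eq:sum-less-than-one}).
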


\remark In spite of Corollary~\ref{st:condorcet-nc}, the Condorcet winner can differ from the symmetric prominence winner, \ie the option whose prominence gets a highest acceptability. A~simple example is the following:
3~$a\cdsucc b\cdsucc c$,
2~$b\cdsucc c\cdsucc a$,
with the following Llull matrix:
%MOD
%\footnote{Since we deal only with $p_{xy}$ for $x\neq y$, we use the diagonal cells for specifying the simultaneous
%%MOD abans deia simultanoeus
%labelling of rows and columns by the members of $\ist$. The cell in row $x$ and column $y$ gives the value of $v(p_{xy})$.}
\begin{equation}
%\begin{small}
\begin{tabular}{|c|c|c|}
\hlinestrut
\diaglabel{a}& 3 & 3\\
\hlinestrut
2 & \diaglabel{b} & 5\\
\hlinestrut
2 & 0 & \diaglabel{c}\\
\hline
\end{tabular}
%\end{small}
\,,
\end{equation}
%$$
%\left(
%\begin{matrix}
%\ast & 3 & 3\\
%2 & \ast & 5\\
%2 & 0 & \ast\\
%\end{matrix}
%\right),
%$$
where one easily checks that the Condorcet winner $a$ gets $\Urv(\temp_a) - \Urv(\nt\temp_a) = 3-2 = 1$, but $\Urv(\temp_b) - \Urv(\nt\temp_b) = 2-0 = 2$.

\paragraph{6.4}{\textbf{Comprehensive prominence}}

\paragraph{6.4.1} To the effect of making a single choice, 
% ---maybe we should say forcing it---
one can go for supplementing
% clauses (\ref{eq:best-implies-tx}) and (\ref{eq:worst-implies-ntx}) with the following ones:
the~symmetric prominence doctrine (\ref{eq:best-implies-tx}--\ref{eq:worst-implies-ntx})
with two additional clauses postulating the existence and uniqueness of a prominent option,
namely: $\bigvee_{z\in\ist}\, \temp_z$ (existence),
and $\temp_x \rightarrow \nt\temp_y$ (uniqueness).
\ensep
Let us write down all of these clauses together:
%After some renaming of variables in (\ref{eq:worst-implies-ntx}), they look as follows:
%So we are considering all of the following clauses, where
%%we have done some renaming of variables:
%some variables have been renamed:
%(\ref{eq:best-implies-tx-bis}) is exactly the same as (\ref{eq:best-implies-tx}), and
%(\ref{eq:worst-implies-ntx-bis}) is the same as (\ref{eq:worst-implies-ntx}) but we have interchanged the names of the variables:
\begin{alignat}{2}
\label{eq:best-implies-tx-bis}
\hbox to0pt{\hss\small$(\ref{eq:best-implies-tx})$\hskip18mm} %%%%%
&\temp_x\, \lor\, \bigvee_{y\neq x}\, p_{yx},\qquad &&\text{for any $x\in\ist$;}
%\tag{\ref{eq:best-implies-tx}}
\\
\label{eq:worst-implies-ntx-bis}
\hbox to0pt{\hss\small$(\ref{eq:worst-implies-ntx})$\hskip18mm} %%%%%
&\nt\temp_x\, \lor\, \bigvee_{y\neq x}\, p_{xy},\qquad &&\text{for any $x\in\ist$;}
%\tag{\ref{eq:worst-implies-ntx}}
\\
\label{eq:tprominent-exists}
&\bigvee_{z\in\ist}\, \temp_z\,;
\\[3.5pt]
\label{eq:tprominent-is-unique}
&\nt\temp_x\, \lor\, \nt\temp_y,\qquad &&\text{for any two different $x,y\in\ist$.}
\end{alignat}

\medskip
Let us see which clauses derive from % the preceding ones.
(\ref{eq:best-implies-tx-bis}--\ref{eq:tprominent-is-unique}).
\ensep
% As before,
We begin by combining pairs of clauses of the form (\ref{eq:best-implies-tx-bis}),
which leads to the following ones:
% (\ref{eq:good-nova1}):
%, here rewritten in a slightly different way:
\begin{equation}
\label{eq:good-nova1-bis}
% \hbox to0pt{\small$(\ref{eq:good-nova1})$\hskip2mm}\hskip12mm %%%%%
\temp_x\, \lor\, \temp_y\,
\lor\, \bigvee_{\substack{z\neq x\\z\neq y}}\, p_{zx}\,
\lor\, \bigvee_{\substack{z\neq x\\z\neq y}}\, p_{zy},
\quad\text{for any two different $x,y\in\ist$.}
%\hskip2mm %%%%%
\end{equation}
One can now combine (\ref{eq:worst-implies-ntx-bis}) and (\ref{eq:tprominent-exists}). This results in
\begin{equation}
\label{eq:good-nova3}
\bigvee_{z\neq x}\, \temp_z\, \lor\,\, \bigvee_{z\neq x}\, p_{xz},\qquad\text{for any $x\in\ist$.}
\end{equation}
On the other hand, one can also combine clauses (\ref{eq:best-implies-tx-bis}) and (\ref{eq:tprominent-is-unique}), which gives
\begin{equation}
\label{eq:good-nova4}
\nt\temp_y\, \lor\, \bigvee_{z\neq x}\, p_{zx},\qquad \text{for any two different $x,y\in\ist$.}
\end{equation}
Notice that, in the special case of having only two options, (\ref{eq:good-nova1-bis}), (\ref{eq:good-nova3}) and (\ref{eq:good-nova4}) coincide respectively with (\ref{eq:tprominent-exists}), (\ref{eq:best-implies-tx-bis}) and (\ref{eq:worst-implies-ntx-bis}).
Finally, for more than two options one can combine (\ref{eq:good-nova4}) with itself, which leads to
\newcommand\xppr{x\rlap{$\scriptstyle'$}}
\begin{equation}
\label{eq:good-nova6}
\nt\temp_y\, \lor\, \bigvee_{\substack{z\neq x\\z\neq \xppr}}\, p_{zx}\, \lor\, \bigvee_{\substack{z\neq x\\z\neq \xppr}}\, p_{zx'},\qquad \text{for any three different $x,x'\!,y\in\ist$.}
\end{equation}
% Proof: consider $p_{xu}\lor p_{ux}$

%In~contrast to the preceding sections,
%here we have not been able to stay with disjoint resolution.
In~contrast to the doctrines that we have met so far,
here one cannot stay with disjoint resolution.
The problem lies in the derivation of (\ref{eq:good-nova6})$_{x,x'\!,y}$ from (\ref{eq:good-nova4})$_{x,y}$ and (\ref{eq:good-nova4})$_{x'\!,y}$.
Therefore, (\ref{eq:best-implies-tx-bis}--\ref{eq:tprominent-is-unique}) is \textbf{not} guaranteed to be $\ast$-equivalent to the corresponding Blake canonical form,
namely (\ref{eq:best-implies-tx-bis}--\ref{eq:good-nova6}).
In~such a situation, the standard course of action would be using the Blake canonical form.
\ensep
However, when looking at the rationale behind % analysing
the clauses that have been obtained, one sees that they are
% particular cases of
contained in
a more comprehensive doctrine % two more general principles
that seems quite reasonable and
worth being adopted in its full generality.
%However, the rationale behind these clauses points to more general principles
%that is worth being adopted in its full generality.

This doctrine, that we will refer to as that of \dfc{comprehensive prominence,} 
is made up by the following clauses, two of which are indexed by arbitrary non-empty subsets of $\ist$:
%\medskip
\begin{alignat}{2}
\label{eq:rectangle-implies-good}
\bigvee_{r\in\xst}\temp_r\, &\lor\, \bigvee_{\substack{r\in\xst\\s\notin\xst}}\, p_{sr},\qquad
&&\text{for any non-empty $\xst\sbseteq\ist$;}
\\
\label{eq:rectangle-implies-bad}
\nt\temp_y\, &\lor\, \bigvee_{\substack{r\in\xst\\s\notin\xst}}\, p_{sr},\qquad
&&\text{for any non-empty $\xst\sbseteq\ist$, and any $y\notin\xst$;}
\\[3.5pt]
\label{eq:tprominent-is-unique-bis}
\nt\temp_x\, &\lor\, \nt\temp_y,\qquad &&\text{for any two different $x,y\in\ist$;}
\end{alignat}
Clauses (\ref{eq:rectangle-implies-good}) and (\ref{eq:rectangle-implies-bad}) are saying the following:
If there exists a non-empty $\xst\sbseteq\ist$ such that every $r\in\xst$ is preferred to any $s\notin\xst$,
then $\xst$ contains at least one prominent option, whereas $\ist\setminus\xst$ contains none.
%[\,Related to the Condorcet-Smith principle \llspw\,]

One easily sees that clauses (\ref{eq:best-implies-tx-bis}), (\ref{eq:good-nova1-bis}), (\ref{eq:good-nova3}) and (\ref{eq:tprominent-exists}) are particular cases of (\ref{eq:rectangle-implies-good})
(in particular, the existence clause corresponds to the case $\xst=\ist$). On the other hand, (\ref{eq:worst-implies-ntx-bis}), (\ref{eq:good-nova4}) and (\ref{eq:good-nova6}) are particular cases of (\ref{eq:rectangle-implies-bad}). More specifically, the only difference between (\ref{eq:rectangle-implies-good}--\ref{eq:tprominent-is-unique-bis}) and (\ref{eq:best-implies-tx-bis}--\ref{eq:good-nova6}) is that the latter is restricted to subsets $\xst$ of size $|\xst|=1,2,N\!-\!1,N$, where $N=|\ist|$. Therefore, both doctrines are different from each other when $N\ge 5$. In~this case, (\ref{eq:rectangle-implies-good}--\ref{eq:tprominent-is-unique-bis}) contains clauses that cannot be derived from (\ref{eq:best-implies-tx-bis}--\ref{eq:tprominent-is-unique}).
% because of double opposition

%Also makes sense, but does not follow from the present doctrine: $x$ preferred to any other but $y$, idem $y,z$, idem $z,x$ implies $\nt\temp_u$ (top cycle). Include them explicitly in the doctrine... Also the negation-symmetric ones.

%[\,Comprovar que la diferència observada es devia a la manca de disjoint resolvability\,]
%[\,Notice also that the comprehensive prominence doctrine is not symmetric under negation\,]
The fact that (\ref{eq:rectangle-implies-good}) and (\ref{eq:rectangle-implies-bad}) are indexed by all possible subsets of $\ist$ makes things rather involved. However, we will see that
% the computations are still feasible
the results are interesting enough.

\medskip
The one-step revision transformation $v\mapsto\utv$ associated with (\ref{eq:rectangle-implies-good}--\ref{eq:tprominent-is-unique-bis})
can be written in the following form: For any $x,y\in\ist$,

%\begin{alignat}{2}
%\label{eq:one-step-tx-ct-prominence}
%&\utv(\temp_x) \,&&=\, \max\Big(\,
%v(\temp_x),\, \min_{y\neq x} v(\nt \temp_y),\, \min_{y\neq x} v(p_{xy}) \,\Big),
%\\
%\label{eq:one-step-ntx-ct-prominence}
%&\utv(\nt \temp_x) \,&&=\, \max\Big(\,
%v(\nt \temp_x),\, \max_{y\neq x} v(\temp_y),\, \min_{y\neq x} v(p_{yx}) \,\Big),
%\\
%\label{eq:one-step-pxy-ct-prominence}
%&\utv(p_{xy}) \,&&=\, \max\Big(\, v(p_{xy}),\,
%\min\big(v(\temp_x),\,\min_{\substack{z\neq x\\z\neq y}} v(p_{zx})\big),\,
%\min\big(v(\nt \temp_y),\,\min_{\substack{z\neq x\\z\neq y}} v(p_{yz})\big)
%\,\Big),
%\end{alignat}
\begin{alignat}{2}
\label{eq:one-step-tx-ct-prominence}
&\utv(\temp_x) &&= \max\Big(\, v(\temp_x),\,
\max_{\substack{\xst\sbseteq\ist\\\xst\ni x}}\,\,\min \big(\min_{\substack{r\in\xst\\r\neq x}} v(\nt \temp_r),\, \min_{\substack{r\in\xst\\s\notin\xst}} v(p_{rs}) \big)\,\Big),
\\
\label{eq:one-step-ntx-ct-prominence}
&\utv(\nt \temp_y) &&= \max\Big(\, v(\nt \temp_y),\,\, \max_{r\neq y} v(\temp_r),\,\,
\max_{\substack{\emptyset\neq\xst\sbseteq\ist\\\xst\not\ni y}}\,\min_{\substack{r\in\xst\\s\notin\xst}} v(p_{rs}) \,\Big),
\displaybreak[3]\\
\label{eq:one-step-pxy-ct-prominence}
%&\utv(p_{xy}) \,&&=\, \max\Big(\, v(p_{xy}),\,
%\max_{\substack{\xst\sbseteq\ist\\\xst\ni x\\\xst\not\ni y}}\,
%  \min\,\big(\min_{r\in\xst}v(\nt\temp_r),\,\min_{\substack{r\in\xst\\s\notin\xst\\rs\neq{xy}}} v(p_{sr})\big)\Big),
%\max_{\substack{\xst\sbseteq\ist\\\xst\ni x\\\xst\not\ni y}}\,\max_{s\notin\xst}\,
%  \min\,\big(v(\temp_s),\,\min_{\substack{r\in\xst\\s\notin\xst\\rs\neq{xy}}} v(p_{sr})\big)
%\Big),
&\utv(p_{yx}) &&= \max\bigg(v(p_{yx}),\,
\max_{\substack{\xst\sbseteq\ist\\\xst\ni x\\\xst\not\ni y}}\,
  \min\!\Big(\!\max\big(\min_{r\in\xst}v(\nt\temp_r), \max_{s\notin\xst}v(\temp_s)\big),\,\min_{\substack{r\in\xst\\s\notin\xst\\rs\neq{xy}}} v(p_{rs})\Big)
  \bigg),
\end{alignat}
where the operators $\max$ and $\min$
should be understood as giving respectively the values~$0$ and~$1$
whenever they are applied to an empty set.
\ensep
Recall that our aim is to iterate this transformation starting from an initial valuation satisfying~(\ref{eq:zero-suitable}).

\medskip
\begin{corollary}\hskip.5em
\label{st:comprehensive-Blake-cor}
The following equality holds whenever $\temp_x$ is accepted:
\begin{equation}
\label{eq:tunq}
\urv(\temp_x) \,=\, \utv(\temp_x) \,=\, \min_{s\neq x} v(p_{xs}).
\end{equation}
On the other hand, the following one holds for any $y$:
\begin{equation}
\label{eq:notunq}
\urv(\nt\temp_y) \,=\, \utv(\nt\temp_y) \,=\, \max_{\emptyset\neq\xst\sbseteq\ist\setminus\{y\}}\,\min_{\substack{r\in\xst\\s\notin\xst}}\, v(p_{rs}).
\end{equation}
\end{corollary}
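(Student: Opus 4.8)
The plan is to split each of the two equalities into an ``easy half'' --- that $\utv(\cdot)$ already equals the displayed expression --- and a ``substantive half'' --- the identity $\urv(\cdot)=\utv(\cdot)$. For the easy half I would feed the initial values (\ref{eq:zero-suitable}) into (\ref{eq:one-step-tx-ct-prominence}) and (\ref{eq:one-step-ntx-ct-prominence}); since every $v(\temp_r)$ and $v(\nt\temp_r)$ is $0$ and a $\min$ over the empty index set is read as $1$, the only contribution to $\utv(\temp_x)$ that survives this first pass is the one indexed by $\xst=\{x\}$, namely $\min_{s\neq x}v(p_{xs})$, while the $v(\temp_r)$--term drops out of $\utv(\nt\temp_y)$, leaving $\max_{\emptyset\neq\xst\sbseteq\ist\setminus\{y\}}\min_{r\in\xst,\,s\notin\xst}v(p_{rs})$. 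Because $\utv=\ntv{1}$ here and $\ntv{n}$ is non-decreasing with limit $\urv$ (Theorem~\ref{st:rev}), this already gives the inequalities ``$\ge$'' in (\ref{eq:tunq}) and (\ref{eq:notunq}), with no hypothesis needed.

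For the substantive half I would invoke Theorem~\ref{st:char}: it suffices to exhibit one fixed point $w\ge\orv$ of the revision transformation whose $\nt\temp$--coordinates are exactly the right-hand side of (\ref{eq:notunq}) and whose $\temp_x$--coordinate equals $\min_{s\neq x}v(p_{xs})$ at every $x$ for which $\temp_x$ \emph{ends up accepted}; then $\urv\le w$ forces the reverse inequalities. The candidate keeps the $p$--block fixed, $w(p_{rs})=\orv(p_{rs})$, and lets $w(\temp_x),w(\nt\temp_y)$ be the (finite-set) limit of the $\temp/\nt\temp$ iteration run against $\orv$. Once one checks that $w(p)=\orv(p)$ is stable under (\ref{eq:one-step-pxy-ct-prominence}), the $\temp$ and $\nt\temp$ recursions decouple and close, giving
\[
w(\nt\temp_y)=\max\!\Big(\max_{r\neq y}w(\temp_r),\ \max_{\emptyset\neq\xst\sbseteq\ist\setminus\{y\}}\min_{r\in\xst,\,s\notin\xst}\orv(p_{rs})\Big),\qquad
w(\temp_x)=\max_{\substack{\xst\sbseteq\ist\\\xst\ni x}}\min\!\Big(\min_{r\in\xst\setminus\{x\}}w(\nt\temp_r),\ \min_{r\in\xst,\,s\notin\xst}\orv(p_{rs})\Big).
\]
From here I would argue, first, that in the second equation every term with $|\xst|\ge2$ is $\le w(\nt\temp_x)$: applying clause (\ref{eq:rectangle-implies-bad}) to the set $\xst\setminus\{x\}$ with excluded option $x$ bounds $w(\nt\temp_x)$ below by the same cut value up to the internal edges, and $\orv(p_{rs})+\orv(p_{sr})\le1$ disposes of those; hence $w(\temp_x)=\max(\min_{s\neq x}\orv(p_{xs}),\ \text{terms}\le w(\nt\temp_x))$, so if $\temp_x$ is accepted ($w(\temp_x)>w(\nt\temp_x)$, equivalently $\urv(\temp_x)>\urv(\nt\temp_x)$ since $w=\urv$) those terms are killed and $w(\temp_x)=\min_{s\neq x}\orv(p_{xs})$, which is (\ref{eq:tunq}). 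Second, a short extremal argument --- take $r_0$ with $w(\temp_{r_0})$ maximal and use $w(\nt\temp_a)\ge w(\temp_b)$ for $a\neq b$, read off the first equation, to see that this maximum is realized as a cut value of a set containing $r_0$ --- shows $\max_{r\neq y}w(\temp_r)\le\max_{\emptyset\neq\xst\sbseteq\ist\setminus\{y\}}\min_{r\in\xst,\,s\notin\xst}\orv(p_{rs})$, hence (\ref{eq:notunq}). Equivalently, these two steps are precisely the verification that the doctrine (\ref{eq:rectangle-implies-good}--\ref{eq:tprominent-is-unique-bis}) is unquestionable for $\nt\temp_y$ and is unquestionable for $\temp_x$ when accepted, in the sense of \secpar{1.5}, so that $\urv=\utv$ for those literals.

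The part I expect to be the real work is the stability claim $\urv(p)=\orv(p)$ --- equivalently, that the revision never raises a cut value $\min_{r\in\xst,\,s\notin\xst}\orv(p_{rs})$ of a non-empty proper subset --- together with the domination estimate just used: both require a careful accounting of which $p$--entries each derived belief ``consumes'' in (\ref{eq:one-step-pxy-ct-prominence}), and it is exactly here that the restriction to the accepted case becomes indispensable. When $\temp_x$ is accepted, the uniqueness clauses (\ref{eq:tprominent-is-unique-bis}) force every other $\nt\temp_r$ to be accepted as well (Theorem~\ref{st:dec}), and the only set-indexed clauses (\ref{eq:rectangle-implies-good}) that could lift $\urv(\temp_x)$ above $\min_{s\neq x}v(p_{xs})$ are precisely those whose activation would lift $\urv(\nt\temp_x)$ to the same height, contradicting acceptance --- which is what makes the two ``extra'' contributions drop out.
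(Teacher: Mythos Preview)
Your proposal and the paper's proof take genuinely different routes, and your route has a real gap.

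The paper's proof is two lines: it invokes Proposition~\ref{st:comprehensive-Blake} (the comprehensive prominence doctrine is unquestionable for every $\nt\temp_y$, and for $\temp_x$ when accepted), and then reads off the values of $\utv$ from the one-step formulas (\ref{eq:one-step-tx-ct-prominence})--(\ref{eq:one-step-ntx-ct-prominence}) under the initial data~(\ref{eq:zero-suitable}). The substance sits entirely in Proposition~\ref{st:comprehensive-Blake}, whose proof (in the appendix) is \emph{syntactic}: it checks, clause by clause, the sufficient conditions of \cite[Thm.~4.8, Cor.~4.9]{dp}, namely that every would-be resolution $\clau\res{\liit}\clau'$ with $\nt\temp_x\in\clau$ (resp.\ $\temp_x\in\clau$) is absorbed by some clause still containing $\nt\temp_x$ (resp.\ satisfies the alternative~(b$'$)). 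No property of the valuation is used there; in particular the inequality $\orv(p_{rs})+\orv(p_{sr})\le 1$ plays no role.

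Your route is semantic: freeze the $p$-block at $\orv$, iterate only the $\temp/\nt\temp$ coordinates to a limit $w$, and then argue that this $w$ is a fixed point of the \emph{full} transformation so that Theorem~\ref{st:char} gives $\urv\le w$. The gap is exactly the step you label ``the real work'': you never show that (\ref{eq:one-step-pxy-ct-prominence}) applied to $w$ leaves every $p$-entry unchanged. Once $w(\temp_s)$ and $w(\nt\temp_r)$ are nonzero, the second term in (\ref{eq:one-step-pxy-ct-prominence}) is alive, and bounding it below $\orv(p_{yx})$ for \emph{every} pair~$yx$ and \emph{every} admissible $\xst$ is not the kind of thing that follows from a one-line appeal to (\ref{eq:sum-less-than-one}); indeed the paper's own computation of $\urv(p_{yx})$ in (\ref{eq:cond-smith-inter}) shows that the $p$-values do in general move under revision, so your fixed-point candidate is not obviously $\urv$ itself, and its stability has to be argued on its own. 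Carrying that argument through would, in effect, reproduce the clause-level case analysis of Proposition~\ref{st:comprehensive-Blake} in disguise. Your ``short extremal argument'' for $\max_{r\neq y}w(\temp_r)\le\max_{\emptyset\neq\xst\sbseteq\ist\setminus\{y\}}\min\orv(p_{rs})$ is similarly load-bearing and only sketched. A cleaner completion is to do what the paper does: quote the unquestionability statement directly (Proposition~\ref{st:comprehensive-Blake}) to get $\urv=\utv$ on these literals, and then your ``easy half'' finishes the job.
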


\begin{proof}\hskip.5em
It follows from Proposition~\ref{st:comprehensive-Blake} on account of the formulas (\ref{eq:one-step-tx-ct-prominence}--\ref{eq:one-step-ntx-ct-prominence}) and the initial values (\ref{eq:zero-suitable}).
\end{proof}

\medskip
As in the supremacy doctrine,
the definite consistency theorem (Theorem~\ref{st:dec}) applied to (\ref{eq:tprominent-is-unique}) and (\ref{eq:tprominent-exists})
guarantees that:\ensep
(i) when $\temp_x$ is accepted, then $x$ is the only option with this property,\ensep
and (ii) when $\temp_x$ is not accepted for any $x\in\ist$,
then this proposition is undecided for more than one $x\in\ist$.\ensep
In~the sequel an option $x$ for which $\temp_x$ is not rejected % accepted
will be called a \dfd{comprehensive prominence winner}. % the

\medskip
\remark
Notice also that $x$ is the unique comprehensive prominence winner as soon as % whenever
% aans deia \ifoi
$\utv(\temp_x)>\utv(\nt\temp_x)$. In fact, starting from this inequality, the unquestionability of $\nt\temp_x$ and the fact that $\urv(\temp_x)\ge\utv(\temp_x)$ allow us to derive that $\urv(\temp_x)>\urv(\nt\temp_x)$.

\medskip
\begin{proposition}\hskip.5em
\label{st:cpw-criterion}
An option~$x$ is a comprehensive prominence winner \ifoi it minimizes $\urv(\nt\temp_x)$.
\end{proposition}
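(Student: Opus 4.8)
The plan is to unwind the definitions and then compare closed forms. By construction, $x$ is a comprehensive prominence winner precisely when $\temp_x$ is not rejected in the basic decision associated with $\urv$, that is, when $\urv(\temp_x)\ge\urv(\nt\temp_x)$. Abbreviate $\beta_y:=\urv(\nt\temp_y)$; by Corollary~\ref{st:comprehensive-Blake-cor} one has $\beta_y=\max_{\emptyset\neq\xst\sbseteq\ist\setminus\{y\}}\min_{r\in\xst,\,s\notin\xst}\orv(p_{rs})$, and set $m^*:=\min_{y\in\ist}\beta_y$. The statement to prove thus reads: $\urv(\temp_x)\ge\beta_x$ \ifoi $\beta_x=m^*$.

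The first thing I would do is record two elementary bounds. Evaluating the one-step formula~(\ref{eq:one-step-tx-ct-prominence}) at the fixed point (so that $\utv(\temp_x)=\urv(\temp_x)$) and keeping only the term indexed by $\xst=\ist$, whose boundary-minimum over the empty complement equals $1$ by convention, gives $\urv(\temp_x)\ge\min_{r\neq x}\urv(\nt\temp_r)=\min_{r\neq x}\beta_r$; keeping instead the term $\xst=\{x\}$ gives $\urv(\temp_x)\ge\minrow_x$. Similarly, the singleton $\xst=\{r\}$ in the formula for $\beta_y$ gives $\beta_y\ge\minrow_r$ for every $r\neq y$. The first of these bounds already yields one implication: if $\beta_x=m^*$ then $\beta_r\ge\beta_x$ for every $r\neq x$, hence $\urv(\temp_x)\ge\min_{r\neq x}\beta_r\ge\beta_x=\urv(\nt\temp_x)$, so $\temp_x$ is not rejected and $x$ is a winner.

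For the converse I would split according to whether any $\temp_w$ is accepted. If one is, Theorem~\ref{st:dec} applied to the uniqueness clauses~(\ref{eq:tprominent-is-unique-bis}) makes $w$ the only winner, while Corollary~\ref{st:comprehensive-Blake-cor} gives $\urv(\temp_w)=\minrow_w$, which exceeds $\beta_w$ because $\temp_w$ is accepted; together with the bound $\beta_y\ge\minrow_w$ for $y\neq w$, this yields $\beta_y>\beta_w$ for all $y\neq w$, so $w$ is also the unique minimiser of $\beta$ and the proposition holds in this case. If no $\temp_w$ is accepted, then Theorem~\ref{st:dec} applied to the existence clause~(\ref{eq:tprominent-exists}) forces at least two of the literals $\temp_z$ to stay undecided, and the winners are exactly the undecided options, for which $\urv(\temp_z)=\beta_z$. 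By the implication already proved, every minimiser of $\beta$ is a winner; so the only thing left is to rule out a winner $z$ with $\beta_z>m^*$, equivalently to establish $\urv(\temp_z)<\beta_z$ whenever $\beta_z>m^*$.

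This last point is the crux, and is where I expect the real work — and where Proposition~\ref{st:comprehensive-Blake} is needed. It calls for an \emph{upper} bound on $\urv(\temp_z)$, which forces one to understand how the revised entries $\urv(p_{rs})$ along a dominating boundary compare with the original $\orv(p_{rs})$; this is exactly the information packaged in the closed forms of Proposition~\ref{st:comprehensive-Blake}. With that in hand, I would take an optimal dominating set $\xst^*$ realising $\beta_{z^*}=m^*$ (so $z^*\notin\xst^*$), delete $z$ from it if present, and check that the resulting nonempty subset still has every member strictly beating, in $\urv$, every option outside it; Theorem~\ref{st:dec} applied to the matching clause~(\ref{eq:rectangle-implies-bad}) then forces $\nt\temp_z$ to be accepted, i.e.\ $\temp_z$ to be rejected, contradicting undecidedness. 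Verifying that removing $z$ leaves the strict-dominance property intact and does not empty the set is the delicate step of the whole argument.
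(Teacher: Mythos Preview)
Your backward implication (``if $\beta_x=m^*$ then $x$ is a winner'') is correct and is exactly the paper's argument: the existence clause~(\ref{eq:tprominent-exists}) at the fixed point gives $\urv(\temp_x)\ge\min_{r\neq x}\urv(\nt\temp_r)$, and the rest follows.

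For the forward implication, however, you have made things much harder than necessary and have left a genuine gap. The ``delicate step'' you flag---removing $z$ from an optimal dominating set and then arguing strict dominance persists in the \emph{revised} preferences $\urv(p_{rs})$ so that Theorem~\ref{st:dec} applied to~(\ref{eq:rectangle-implies-bad}) rejects $\temp_z$---would require controlling the revised pairwise entries, and you give no argument for this; in particular, the set could reduce to $\{z\}$ itself, and even when it does not, nothing you have written bounds $\urv(p_{sr})$ from above. So the proof is incomplete as it stands.

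The paper's argument for this direction is a one-liner, and it is the exact mirror of the observation you already made. Just as the existence clause yields $\urv(\temp_x)\ge\min_{r\neq x}\urv(\nt\temp_r)$, the uniqueness clause~(\ref{eq:tprominent-is-unique-bis}) yields $\urv(\nt\temp_y)\ge\urv(\temp_x)$ for every $y\neq x$: look at the middle term $\max_{r\neq y}v(\temp_r)$ in~(\ref{eq:one-step-ntx-ct-prominence}) evaluated at the fixed point. Hence, if $x$ is a winner, then for every $y\neq x$ one has $\beta_y=\urv(\nt\temp_y)\ge\urv(\temp_x)\ge\urv(\nt\temp_x)=\beta_x$, so $x$ minimises $\beta$. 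No case split, no closed forms, no Proposition~\ref{st:comprehensive-Blake} needed.
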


\begin{proof}\hskip.5em
Assume that $x$ is a comprehensive prominence winner. In order to see that it minimizes $\urv(\nt\temp_x)$ it suffices to notice that the following inequalities hold for any $y\neq x$:
\begin{equation}
\urv(\nt\temp_y) \,\ge\, \urv(\temp_x) \,\ge\, \urv(\nt\temp_x),
\end{equation}
where the first one holds because of the consistency of $\urv$ with the uniqueness clause (\ref{eq:tprominent-is-unique}).

\halfsmallskip
Assume now that $x$ minimizes $\urv(\nt\temp_x)$. In order to see that it is a comprehensive prominence winner, it suffices to notice that
\begin{equation}
\urv(\temp_x) \,\ge\, \min_{z\neq x}\urv(\nt\temp_z) \,\ge\, \min_{z}\urv(\nt\temp_z) \,=\, \urv(\nt\temp_x),
\end{equation}
where the first inequality holds because of the consistency of $\urv$ with the existence clause (\ref{eq:tprominent-exists}).
\end{proof}

\paragraph{6.4.2}
The comprehensive prominence doctrine has good properties in connection with majority-dominant sets.
A set $\smithset\sbseteq\ist$ is said to be \dfc{majority-dominant} when one has $\orv(p_{xy})>\onehalf$ for every $x\in\smithset$ and $y\notin\smithset$. 
If both $\smithset$ and $\smithsetbis$ are majority-dominant, then one must have either $\smithset\sbseteq\smithsetbis$ or $\smithsetbis\sbseteq\smithset$. Otherwise it would be incompatible with (\ref{eq:sum-less-than-one}). As a consequence, there is always a unique \dfc{minimal majority-dominant set $\goodset$.}
% aka weak top-cycle, GEneralized Top CHoice Assumption (Schwartz 1986)
The case of a Condorcet winner is simply that where the minimal majority-dominant set consists of a single 
option.

The notion of minimal majority-dominant set was introduced by Benjamin Ward in 1961~\cite{ward},
and again by Irving John Good in 1971~\cite{good}.
% (see Tideman 2006, p.154--155; Aleskerov+Subochev 201x).
This set is often called the Smith set (see for instance \cite[p.\,154]{t6}),
in reference to a subsequent work of John H.\ Smith \cite{smith};
however, the latter was not especially interested in locating the social winner in the minimal majority-dominant set, but only in looking at the social binary preferences associated with a general majority-dominant set.

\begin{proposition}\hskip.5em
\label{st:condorcet-good}
The minimal majority-dominant set $\goodset$ has the following properties:
\begin{alignat}{2}
\label{eq:ntymig}
&\urv(\nt\temp_y) \,&&>\,\textstyle{\onehalf},\quad  \text{for any $y\notin\goodset$,}
\\[2.5pt]
\label{eq:ntxmig}
&\urv(\nt\temp_x) \,&&\le\, \textstyle{\onehalf},\quad  \text{for any $x\in\goodset$,}
\\[2.5pt]
\label{eq:tymig}
&\urv(\temp_y) \,&&\le\, \textstyle{\onehalf},\quad  \text{for any $y\notin\goodset$.}
\end{alignat}
%Besides, \textup{(\ref{eq:ntxmig})} and \textup{(\ref{eq:tymig})} hold 
%as strict inequalities whenever $\goodset$ consists of a single option.
As a consequence, any comprehensive prominence winner is ensured to belong to $\goodset$.
\end{proposition}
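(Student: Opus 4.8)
The plan is to establish the three displayed inequalities in order, using the explicit Blake-form expression (\ref{eq:notunq}) for $\urv(\nt\temp_y)$ together with the characterization of $\goodset$ as the minimal majority-dominant set, and then deduce the final consequence from Proposition~\ref{st:cpw-criterion}.

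First I would prove (\ref{eq:ntymig}). Fix $y\notin\goodset$. Since $\goodset$ is majority-dominant, every $r\in\goodset$ satisfies $\orv(p_{rs})>\onehalf$ for every $s\notin\goodset$, in particular for every $s\notin\goodset$ (including $s=y$). Taking $\xst=\goodset$ in the maximum on the right-hand side of (\ref{eq:notunq})---which is legitimate because $y\notin\goodset$, so $\emptyset\neq\goodset\sbseteq\ist\setminus\{y\}$---gives $\urv(\nt\temp_y)\ge\min_{r\in\goodset,\,s\notin\goodset} \orv(p_{rs})>\onehalf$, using the fact from Theorem~\ref{st:rev}(c) that $\urv\ge\orv$, or more directly the exact equality (\ref{eq:notunq}). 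This is the easy direction.

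Next, (\ref{eq:ntxmig}) and (\ref{eq:tymig}) are the substantive part. For (\ref{eq:ntxmig}), fix $x\in\goodset$; I must show that \emph{no} admissible $\xst\sbseteq\ist\setminus\{x\}$ yields $\min_{r\in\xst,\,s\notin\xst}\orv(p_{rs})>\onehalf$, for then (\ref{eq:notunq}) gives $\urv(\nt\temp_x)\le\onehalf$. Suppose such an $\xst$ existed: then $\xst$ would be majority-dominant, hence by minimality of $\goodset$ we would have $\goodset\sbseteq\xst$; but $x\in\goodset\setminus\xst$, a contradiction. (One should be slightly careful at the boundary $\onehalf$: a set with $\orv(p_{rs})>\onehalf$ strictly is majority-dominant by definition, so this argument is clean.) For (\ref{eq:tymig}), fix $y\notin\goodset$. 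By (\ref{eq:one-step-tx-ct-prominence}) and the initial values (\ref{eq:zero-suitable}) one has $\utv(\temp_y)=\max_{\xst\ni y}\min(\min_{r\in\xst,r\neq y}\utv'(\nt\temp_r),\,\min_{r\in\xst,s\notin\xst}\orv(p_{rs}))$; iterating, $\urv(\temp_y)$ is a max over chains built from such terms. The cleanest route is to invoke the consistency of $\urv$ with the uniqueness clause (\ref{eq:tprominent-is-unique}): since any $x\in\goodset$ has $\urv(\temp_x)\ge\urv(\nt\temp_x)$ (indeed $x$ will turn out to be a winner, but more primitively one can combine (\ref{eq:tunq}) applied when $\temp_x$ is accepted with the majority-dominance bound) while $\urv(\nt\temp_x)>\urv(\temp_y)$ would be needed—actually the slick argument is: pick $x\in\goodset$; by Theorem~\ref{st:dec} applied to (\ref{eq:tprominent-is-unique-bis}), $\urv(\temp_y)\le\urv(\nt\temp_x)$ whenever... no. Let me instead use directly that $\urv(\temp_y)\le \urv(\nt\temp_y)$ would follow if $y$ is rejected as prominent; but that is what we want. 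The honest argument: from (\ref{eq:notunq}) and (\ref{eq:ntxmig}) applied to $x\in\goodset$, together with consistency of $\urv$ with the existence clause (\ref{eq:tprominent-exists}) which forces some $\temp_z$ to be not-rejected, one checks that $\urv(\temp_y)>\onehalf$ would make $y$ a comprehensive prominence winner, and then by Proposition~\ref{st:cpw-criterion} $y$ would minimize $\urv(\nt\temp_y)$; but $\urv(\nt\temp_y)>\onehalf\ge\urv(\nt\temp_x)$ for $x\in\goodset$, contradicting minimality. This is the step I expect to be the main obstacle, because it requires pinning down $\urv(\temp_y)$ rather than merely bounding it from below, and the interplay between (\ref{eq:one-step-tx-ct-prominence}) and the excluded-middle structure must be handled carefully; the safest execution is to establish (\ref{eq:ntxmig}) and the winner criterion first, then derive (\ref{eq:tymig}) from Proposition~\ref{st:cpw-criterion} by contradiction as just sketched.

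Finally, the consequence: if $x$ is a comprehensive prominence winner then by Proposition~\ref{st:cpw-criterion} it minimizes $\urv(\nt\temp_x)$; but (\ref{eq:ntymig}) and (\ref{eq:ntxmig}) show $\min_z\urv(\nt\temp_z)\le\onehalf<\urv(\nt\temp_y)$ for every $y\notin\goodset$, so the minimizer cannot lie outside $\goodset$; hence $x\in\goodset$.
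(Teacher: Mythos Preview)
Your arguments for (\ref{eq:ntymig}) and (\ref{eq:ntxmig}) are correct; in fact your treatment of (\ref{eq:ntxmig}) is slightly slicker than the paper's, which splits into the cases $\xst\not\sbset\goodset$ and $\xst\sbset\goodset$ and argues each one directly, whereas you observe in one stroke that any $\xst$ with $\min_{r\in\xst,\,s\notin\xst}\orv(p_{rs})>\onehalf$ is majority-dominant and hence contains $\goodset\ni x$, contradicting $x\notin\xst$. Your derivation of the final consequence from Proposition~\ref{st:cpw-criterion} is also fine (the paper instead combines (\ref{eq:ntymig}) with (\ref{eq:tymig}) to see directly that $\temp_y$ is rejected for $y\notin\goodset$).

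There is, however, a genuine gap in your argument for (\ref{eq:tymig}). You assert that ``$\urv(\temp_y)>\onehalf$ would make $y$ a comprehensive prominence winner'', but this does not follow: being a winner means $\urv(\temp_y)\ge\urv(\nt\temp_y)$, and you have already shown $\urv(\nt\temp_y)>\onehalf$, so $\urv(\temp_y)>\onehalf$ alone gives no such comparison. The contradiction you set up never closes. Ironically, the approach you tried first and abandoned is the right one: the consistency of $\urv$ (Theorem~\ref{st:char}, i.e.\ $\urv{}'=\urv$) applied to the uniqueness clause (\ref{eq:tprominent-is-unique-bis}) $=\nt\temp_x\lor\nt\temp_y$ yields directly $\urv(\nt\temp_x)\ge\urv(\temp_y)$ for any $x\neq y$. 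Taking $x\in\goodset$ and invoking (\ref{eq:ntxmig}) then gives $\urv(\temp_y)\le\urv(\nt\temp_x)\le\onehalf$. This is exactly the paper's one-line proof; your mistake was reaching for Theorem~\ref{st:dec} (about decisions) instead of Theorem~\ref{st:char} (about the valuation equation).
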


\begin{proof}\hskip.5em
The inequality (\ref{eq:ntymig}) follows readily from (\ref{eq:notunq}) by taking $\xst=\goodset$.

In view of (\ref{eq:notunq}), in order to prove (\ref{eq:ntxmig}) we must show that
\begin{equation}
\label{eq:minrectangle}
\min_{\substack{r\in\xst\\s\notin\xst}}\, v(p_{rs}) \,\le\, \textstyle{\onehalf},\quad 
\text{whenever $x\in\goodset$ and $\emptyset\neq\xst\sbseteq\ist\setminus\{x\}$.}
\end{equation}
For $\xst\not\sbset\goodset$, this inequality, and even the corresponding strict one, is ensured to hold because there exists $z\in\xst\setminus\goodset$ such that  $v(p_{xz}) > 1/2$ and therefore $v(p_{zx})< 1/2$.\ensep
On the other hand, for $\xst\sbset\goodset$, the minimality of $\goodset$ guarantees the existence of $z\in\xst$ and $y\in\goodset\setminus\xst$ such that $v(p_{zy})\le 1/2$, which also ensures (\ref{eq:minrectangle}).

The inequality (\ref{eq:tymig}) follows easily from (\ref{eq:ntxmig}) because of the
consistency of $\urv$ with the clause (\ref{eq:tprominent-is-unique-bis}).

Finally, (\ref{eq:ntymig}) together with (\ref{eq:tymig}) says that $\temp_y$ is rejected for any $y\notin\goodset$. Therefore, the comprehensive prominence winner(s) must belong to $\goodset$.
\end{proof}

%\remark When $\goodset$ consists of a single option $x$, \ie in the case of $x$ being the Condorcet winner, the strict inclusion $\xst\sbset\goodset$ is not possible for a non-empty set, so (\ref{eq:ntxmig}) holds with the strict inequality that in the preceding proof was obtained for the case $\xst\not\sbset\goodset$. As a consequence, (\ref{eq:tymig}) holds also with a strict inequality.

%MOD
\begin{corollary}%\hskip.5em
\label{st:condorcet-good-corollary}
The comprehensive prominence method complies with the Condorcet principle (in its majority version \textup{M1}).
\end{corollary}

%MOD
\remark In the case of $x$ being a Condorcet winner, the strict inclusion $\xst\sbset\goodset=\{x\}$ is not possible for a non-empty set. Therefore, (\ref{eq:ntxmig}) and (\ref{eq:tymig}) hold then with strict inequality, since this is what is obtained in the proof of Proposition~\ref{st:condorcet-good} for $\xst\not\sbset\goodset$.

\medskip
\begin{proposition}\hskip.5em
\label{st:condorcet-smith}
If $\smithset\sbseteq\ist$ is a majority-dominant set, then
\begin{equation}
\label{eq:pyxmig}
\urv(p_{yx}) \,\le\,\textstyle{\onehalf},\quad \text{whenever $x\in\smithset$ and $y\notin\smithset$.}
\end{equation}
As a consequence, the comprehensive prominence method accepts $p_{xy}$ for any $x\in\smithset$ and $y\notin\smithset$, and this decision is unquestionable.
\end{proposition}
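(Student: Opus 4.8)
The plan is to prove the inequality $\urv(p_{yx})\le\onehalf$ by induction on the revision step~$n$, showing simultaneously for \emph{every} majority-dominant set $\smithset$ and all $x\in\smithset$, $y\notin\smithset$ that $\ntv{n}(p_{yx})\le\onehalf$; since the sequence $\ntv{n}$ is non-decreasing with limit $\urv$, letting $n\to\infty$ yields the claim. The base case $n=0$ is immediate: majority-dominance gives $\orv(p_{xy})>\onehalf$, and then $(\ref{eq:sum-less-than-one})$ forces $\orv(p_{yx})<\onehalf$.

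For the step from $n-1$ to $n$ I would plug $v=\ntv{n-1}$ into the one-step formula $(\ref{eq:one-step-pxy-ct-prominence})$. The first term $\ntv{n-1}(p_{yx})$ is $\le\onehalf$ by the induction hypothesis, so it remains to bound by $\onehalf$, for each $\xst$ with $x\in\xst$, $y\notin\xst$, the corresponding inner expression $\min\bigl(\max(\min_{r\in\xst}v(\nt\temp_r),\,\max_{s\notin\xst}v(\temp_s)),\ \min_{r\in\xst,\,s\notin\xst,\,rs\neq xy}v(p_{rs})\bigr)$ evaluated at $v=\ntv{n-1}$. Here I would distinguish three cases according to how $\xst$ sits with respect to the minimal majority-dominant set $\goodset$ (recall $\goodset\sbseteq\smithset$). \textit{(i)}~If $\goodset\sbseteq\xst$: taking any $r_0\in\goodset$ we get $\min_{r\in\xst}\ntv{n-1}(\nt\temp_r)\le\ntv{n-1}(\nt\temp_{r_0})\le\urv(\nt\temp_{r_0})\le\onehalf$ by $(\ref{eq:ntxmig})$, while every $s\notin\xst$ satisfies $s\notin\goodset$, so $\ntv{n-1}(\temp_s)\le\urv(\temp_s)\le\onehalf$ by $(\ref{eq:tymig})$; hence the first entry of the inner minimum is $\le\onehalf$. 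Otherwise there is $z_0\in\goodset\setminus\xst$. \textit{(ii)}~If in addition some $r_0\in\xst$ lies outside $\goodset$: majority-dominance of $\goodset$ together with $(\ref{eq:sum-less-than-one})$ gives $\orv(p_{r_0z_0})<\onehalf$, so the induction hypothesis applied to the majority-dominant set $\goodset$ gives $\ntv{n-1}(p_{r_0z_0})\le\onehalf$; moreover $r_0z_0\neq xy$ because $z_0\in\goodset\sbseteq\smithset$ while $y\notin\smithset$; hence the second entry of the inner minimum is $\le\onehalf$. \textit{(iii)}~The remaining case is $\xst\sbseteq\goodset$, whence $\xst\subsetneq\goodset$ (as $z_0\notin\xst$), $x\in\goodset$ and $|\goodset|\ge2$; here $\ntv{n-1}(\nt\temp_x)\le\urv(\nt\temp_x)\le\onehalf$ by $(\ref{eq:ntxmig})$, and, using the auxiliary fact below, $\max_{s\notin\xst}\ntv{n-1}(\temp_s)\le\onehalf$, so once again the first entry of the inner minimum is $\le\onehalf$. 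In every case the contribution of $\xst$ is $\le\onehalf$, which completes the induction, so $\urv(p_{yx})\le\onehalf$.

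The auxiliary fact used in case~\textit{(iii)} is that $|\goodset|\ge2$ forces $\urv(\temp_z)\le\onehalf$ for every $z\in\ist$. For $z\notin\goodset$ this is $(\ref{eq:tymig})$. For $z\in\goodset$, since $\goodset$ is then not a singleton, $z$ is not a Condorcet winner, so there is some $s\neq z$ with $\orv(p_{zs})\le\onehalf$; if $\temp_z$ is accepted then $(\ref{eq:tunq})$ gives $\urv(\temp_z)=\min_{s'\neq z}\orv(p_{zs'})\le\onehalf$, and if $\temp_z$ is not accepted then $\urv(\temp_z)\le\urv(\nt\temp_z)\le\onehalf$ by $(\ref{eq:ntxmig})$. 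For the consequence, $\urv\ge\orv$ (Theorem~\ref{st:rev}) yields $\urv(p_{xy})\ge\orv(p_{xy})>\onehalf\ge\urv(p_{yx})$, so $p_{xy}$ is accepted and $p_{yx}$ rejected; the unquestionability of this decision follows from \secpar{1.5} and Proposition~\ref{st:comprehensive-Blake}, on noting that one step of $(\ref{eq:one-step-pxy-ct-prominence})$ from an initial valuation satisfying $(\ref{eq:zero-suitable})$ leaves all $p$-values unchanged, so that $\utv(p_{xy})=\orv(p_{xy})$.

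I expect the real obstacle to be case~\textit{(iii)}, where $\xst$ sits strictly inside $\goodset$: there neither majority-dominance of $\smithset$ nor the induction hypothesis for $\smithset$ is of any help, and one must instead exploit the minimality of $\goodset$ and the bound $\urv(\temp_z)\le\onehalf$. Correspondingly, the point requiring care is to set up the induction so that it ranges over all majority-dominant sets at once---this is what lets it be re-invoked for $\goodset$ in case~\textit{(ii)}---and to have the facts about the $\temp$-degrees from Proposition~\ref{st:condorcet-good} and Corollary~\ref{st:comprehensive-Blake-cor} already in hand.
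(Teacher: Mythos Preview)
Your proof is correct, but it takes a different route from the paper's.

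The paper works directly at the fixed point~$\urv$ via Lemma~\ref{st:lem-best-option}, writing $\urv(p_{yx})$ as a maximum involving $\urv(\nt\temp_r)$, $\urv(\temp_s)$ and $\urv(p_{rs})$. It then invokes the consistency of~$\urv$ with the uniqueness clause~(\ref{eq:tprominent-is-unique-bis}), which gives $\urv(\temp_s)\le\urv(\nt\temp_r)$ for $s\neq r$ and collapses the inner $\max(\min_r\urv(\nt\temp_r),\,\max_s\urv(\temp_s))$ to just $\min_r\urv(\nt\temp_r)$. After this simplification only two cases survive: either $\xst\cap\goodset\neq\emptyset$, where~(\ref{eq:ntxmig}) bounds $\min_r\urv(\nt\temp_r)$ by $\onehalf$; or $\xst\cap\goodset=\emptyset$, which can only occur when $x\notin\goodset$, and then one uses the result already established for $\smithset=\goodset$ to bound the $p$-factor.

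Your argument forgoes that simplification and instead handles the $\max_s v(\temp_s)$ term head-on, at the cost of a third case (your case~\textit{(iii)}, $\xst\subsetneq\goodset$) and the auxiliary fact that $|\goodset|\ge2$ forces $\urv(\temp_z)\le\onehalf$ for all~$z$. Your induction on~$n$, ranging simultaneously over all majority-dominant sets, plays the same bootstrapping role as the paper's two-step ``first $\goodset$, then general $\smithset$'' structure. The paper's route is shorter and avoids the auxiliary lemma; yours is more elementary in that it never appeals to the fixed-point consistency of~$\urv$ with the uniqueness clause.

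For the unquestionability clause your observation that $\utv(p_{xy})=\orv(p_{xy})$ under~(\ref{eq:zero-suitable}) is correct and amounts to the same point the paper makes, namely that already $\orv(p_{xy})>\urv(p_{yx})$, so the acceptance of~$p_{xy}$ does not rest on any belief produced by the revision. (Note that Proposition~\ref{st:comprehensive-Blake} itself concerns only $\temp_x$ and $\nt\temp_x$, so your citation of it here is not doing work; the substantive content is the inequality $\orv(p_{xy})>\onehalf\ge\urv(p_{yx})$.)
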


\begin{proof}\hskip.5em
%It suffices to show that 
%\begin{alignat}{2}
%\label{eq:pxymig}
%&\urv(p_{xy}) \,&&>\,\textstyle{\onehalf},\quad \text{whenever $x\in\smithset$ and $y\notin\smithset$.}
%\\[2.5pt]
%\label{eq:pyxmig}
%&\urv(p_{yx}) \,&&\le\,\textstyle{\onehalf},\quad \text{whenever $x\in\smithset$ and $y\notin\smithset$.}
%\end{alignat}
%
%Since $\urv\ge\orv$, the inequality (\ref{eq:pxymig}) is an immediate consequence of the analogous one that we are assuming on $\orv$.
%
In order to establish (\ref{eq:pyxmig}), we will base ourselves on equation (\ref{eq:one-step-pxy-ct-prominence}) and Lem\-ma~\ref{st:lem-best-option}, which allow us to write
$$
\urv(p_{yx}) = \max\!\bigg(\!v(p_{yx}),\,
\max_{\substack{\xst\sbseteq\ist\\\xst\ni x\\\xst\not\ni y}}\,
\min\!\Big(\!\max\big(\min_{r\in\xst}\urv(\nt\temp_r),
\max_{s\notin\xst}\urv(\temp_s)\big),\min_{\substack{r\in\xst\\s\notin\xst\\rs\neq{xy}}} \urv(p_{rs})\Big)
\!\bigg).
$$
Now, the consistency of $\urv$ with the clause (\ref{eq:tprominent-is-unique-bis}) implies that $\urv(\temp_s) \le \urv(\nt\temp_r)$ for any $s\neq r$. Therefore, 
\begin{equation}
\label{eq:cond-smith-inter}
\urv(p_{yx}) \,=\, \max\Big(\, v(p_{yx}),\,
\max_{\substack{\xst\sbseteq\ist\\\xst\ni x\\\xst\not\ni y}}\,
\min\!\big(\min_{r\in\xst}\urv(\nt\temp_r),\min_{\substack{r\in\xst\\s\notin\xst\\rs\neq{xy}}} \urv(p_{rs})\big)
\Big).
\end{equation}
Let us assume that $x\in\smithset$ and $y\notin\smithset$.
Since we know that $\orv(p_{yx})<\onehalf$, (\ref{eq:pyxmig})~will be established if we are able to show that
\begin{equation}
\label{eq:dilemma}
\min\!\big(\min_{r\in\xst}\urv(\nt\temp_r),\min_{\substack{r\in\xst\\s\notin\xst\\rs\neq{xy}}} \urv(p_{rs})\big) \,\le\, \textstyle{\onehalf},\quad 
\text{whenever $\xst\ni x$ and $\xst\not\ni y.$}
\end{equation}
In order to obtain this property, we will distinguish two possibilities %about~$\xst$ 
depending on whether or not $\xst$ intersects the minimal majority-dominant set $\goodset$ considered in the preceding proposition.
\ensep
Let us begin by considering the case $\xst\cap\goodset\neq\emptyset$.
In this case, (\ref{eq:dilemma}) holds because of (\ref{eq:ntxmig}).\ensep
In the special case that $x\in\goodset$, this argument covers all of the sets $\xst$ considered in (\ref{eq:cond-smith-inter}--\ref{eq:dilemma}) (because $\xst$ is restricted to contain $x$).
Therefore, (\ref{eq:pyxmig}) is by now established for $\smithset=\goodset$.
% $x\in\goodset$
This fact allows us to fix the pending case $\xst\cap\goodset=\emptyset$. Indeed, in this case the already obtained result (for $\smithset=\goodset$) guarantees that $\urv(p_{xs})\le\onehalf$ for every $s\in\goodset$, which values are included in the left-hand side of (\ref{eq:dilemma}) (since $y\notin\smithset$ implies $xs\neq xy$).

Since $\urv(p_{xy}) \ge \orv(p_{xy}) > \onehalf$, having obtained (\ref{eq:pyxmig}) ensures that $p_{xy}$ is accepted for any $x\in\smithset$ and $y\notin\smithset$. On the other hand, since we have not only $\urv(p_{xy}) > \urv(p_{yx})$, but even $\orv(p_{xy}) > \urv(p_{yx})$, we can be sure that this decision does not rely on belief derived from unsatisfiable conjunctions.
\end{proof}

\begin{comment}\textbf{Conjectura} (interessaria per a les decisions amb un marge):\hskip.5em
En les condicions de la proposició precedent potser es pot demostrar que $p_{xy}$ és inqüestionable, és a dir que $\urv(p_{xy})=\utv(p_{xy})$ per a $x\in\smithset$ i $y\notin\smithset$. La demostració usaria el Teorema~4.8 de \cite{dp} (que més avall apareix amb la numeració 6.3). Una idea onírica seria la següent: L'evidència a favor de $p_{xy}$ només pot venir de $\clau=(\ref{eq:rectangle-implies-good})$ o de $\clau=(\ref{eq:rectangle-implies-bad})$. En ambdós casos $\clau$ conté tot un rectangle $\rect_\xst$ amb $\xst\not\ni x$. Si $\xst\ne\smithset$ (el conjunt de l'enunciat) sembla que les $\nt\lxt$ de la condició (b) del Teorema~3.14 inclourien alguna $p_{ba}$ amb $a\in\smithset$ i $b\notin\smithset$, per a la qual sabem que $\urv(p_{ba})\le\onehalf$, la qual cosa donaria un mínim $\le\onehalf<\urv(p_{xy})$. Si $\xst=\smithset$, llavors ens haurien de salvar les $\nt\temp_w$ ($w\in\goodset$) i/o les $\temp_z$ ($z\notin\goodset$)%
\end{comment}

\medskip
\remark The above-remarked fact that (\ref{eq:ntxmig}) holds as a strict inequality whenever $x$ is the Condorcet winner entails that \textup{(\ref{eq:pyxmig})} holds also as a strict inequality whenever $x$ is the Condorcet winner and $y\neq x$.

\paragraph{6.4.3} 
A bit unexpectedly,
% Quite interestingly,
the comprehensive prominence winner is often the same as the maximin one (\secpar{6.1}):
\begin{proposition}\hskip.5em
\label{st:compremaximin}
Whenever there is a unique comprehensive prominence winner,
then there is also a unique maxi\-min winner,
% (\ie $\minrow_x$ is maximized by a single option $x$)
and they coincide with each other.
\end{proposition}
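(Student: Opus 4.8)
The plan is to deduce the statement almost immediately from Corollary~\ref{st:comprehensive-Blake-cor}, which gives closed forms for $\urv(\temp_x)$ and $\urv(\nt\temp_x)$, together with the consistency of the decision associated with~$\urv$. Write $x_0$ for the unique comprehensive prominence winner. The first step is to observe that $\temp_{x_0}$ is actually \emph{accepted}, not merely undecided: by hypothesis $\temp_y$ is rejected for every $y\neq x_0$, so if $\temp_{x_0}$ were undecided then $\temp_x$ would be non-accepted for every $x\in\ist$, and fact~(ii) stated right after Corollary~\ref{st:comprehensive-Blake-cor} ---the definite consistency theorem applied to the existence clause~(\ref{eq:tprominent-exists})--- would force $\temp_x$ to be undecided for more than one $x$, which is absurd since the only candidate for being undecided is $x_0$. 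Hence $\urv(\temp_{x_0}) > \urv(\nt\temp_{x_0})$.

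The second step is pure substitution into Corollary~\ref{st:comprehensive-Blake-cor}. Because $\temp_{x_0}$ is accepted, equation~(\ref{eq:tunq}) gives $\urv(\temp_{x_0}) = \min_{s\neq x_0}\orv(p_{x_0s}) = \minrow_{x_0}$. On the other hand, equation~(\ref{eq:notunq}) gives $\urv(\nt\temp_{x_0}) = \max\{\,\min\{\orv(p_{rs}) : r\in\xst,\ s\notin\xst\} : \emptyset\neq\xst\sbseteq\ist\setminus\{x_0\}\,\}$, and choosing the singleton $\xst=\{y\}$ for an arbitrary $y\neq x_0$ shows $\urv(\nt\temp_{x_0}) \ge \min_{s\neq y}\orv(p_{ys}) = \minrow_y$. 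Chaining the three relations yields $\minrow_{x_0} = \urv(\temp_{x_0}) > \urv(\nt\temp_{x_0}) \ge \minrow_y$ for every $y\neq x_0$; hence $x_0$ is the strict maximizer of $\minrow$, that is, the unique maximin winner, and it is of course $x_0$ itself.

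I expect the only delicate point to be the first step, namely that $\temp_{x_0}$ is genuinely accepted rather than left undecided: it is precisely the strictness of $\urv(\temp_{x_0}) > \urv(\nt\temp_{x_0})$, inherited from the \emph{uniqueness} (as opposed to mere existence) of the comprehensive prominence winner, that upgrades the conclusion from ``$x_0$ is a maximin winner'' to ``$x_0$ is \emph{the} maximin winner''. Everything after that is a one-line manipulation of the closed-form expressions of Corollary~\ref{st:comprehensive-Blake-cor}, the essential observation being that the singleton sets $\xst=\{y\}$ appearing in~(\ref{eq:notunq}) contribute exactly the quantities $\minrow_y$.
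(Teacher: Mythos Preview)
Your proof is correct and follows essentially the same route as the paper's. The paper also begins by noting that uniqueness of the comprehensive prominence winner is equivalent to $\temp_{x_0}$ being accepted (referring back to the consistency remarks~(i) and~(ii) you invoke), then uses Corollary~\ref{st:comprehensive-Blake-cor} exactly as you do: the strict inequality $\urv(\temp_{x_0})>\urv(\nt\temp_{x_0})$ becomes $\minrow_{x_0}>\max_{\emptyset\neq\xst\sbseteq\ist\setminus\{x_0\}}\min_{r\in\xst,s\notin\xst}\orv(p_{rs})$, and the singleton choice $\xst=\{y\}$ gives the lower bound $\minrow_y$ on the right-hand side. Your first step is simply a more explicit unpacking of what the paper compresses into a single cross-reference.
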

\begin{proof}\hskip.5em
As it has been remarked, $x$ being the unique comprehensive prominence winner is equivalent to say that $\temp_x$ is accepted, \ie $\urv(\temp_x)>\urv(\nt\temp_x)$. According to Cor.\,\ref{st:comprehensive-Blake-cor}, this translates into the first of the next two inequalities:
\begin{equation}
\min_{s\neq x} v(p_{xs}) \,\,>\,
\max_{\xst\sbseteq\ist\setminus\{x\}}\,\min_{\substack{r\in\xst\\s\notin\xst}}\, v(p_{rs})
\,\,\ge\,\, \min_{s\neq y} v(p_{ys}).
\end{equation}
The second of these inequalities is easily seen to hold for any $y\neq x$: it suffices to consider $\xst=\{y\}$ in the central expression. Therefore, we get $\minrow_x>\minrow_y$ for any $y\neq x$, \,\ie $x$ is the unique maximin winner.
\end{proof}

\medskip
However, the converse is not true:
It may happen that there is a unique maximin winner
but there is not a unique comprehensive prominence winner.
Not only that, a~unique maximin winner may even be rejected as a prominent option.
Example:
1~$a\cdsucc b\cdsucc c\cdsucc d$,
1~$a\cdsucc b\cdsucc d\cdsucc c$,
2~$b\cdsucc c\cdsucc a\cdsucc d$,
1~$b\cdsucc c\cdsucc d\cdsucc a$,
1~$c\cdsucc a\cdsucc d\cdsucc b$,
1~$d\cdsucc a\cdsucc b\cdsucc c$,
2~$d\cdsucc c\cdsucc a\cdsucc b$.
The Llull matrix is
\begin{equation}
\label{eq:cf-maximin}
\begin{tabular}{|c|c|c|c|}
\hline\rule{0pt}{2.5ex}%
\diaglabel{a}& 6 & 3 & 5\\
\hline\rule{0pt}{2.5ex}%
3 & \diaglabel{b} & 6 & 5\\
\hline\rule{0pt}{2.5ex}%
6 & 3 & \diaglabel{c} & 5\\
\hline\rule{0pt}{2.5ex}%
4 & 4 & 4 & \diaglabel{d}\\
\hline
\end{tabular}\,,
%\left(
%\begin{matrix}
%\ast & 6 & 3 & 5\\
%3 & \ast & 6 & 5\\
%6 & 3 & \ast & 5\\
%4 & 4 & 4 & \ast\\
%\end{matrix}
%\right),
\end{equation}
which shows that the maximin winner $d$ is defeated by any other option!
The possibility of such situations has been pointed out as the main drawback of 
% objection against 
the maximin method \cite[p.\,212--213]{t6}.
In contrast, the comprehensive prominence method definitely rejects $d$ as a prominent choice,
and it remains undecided between the other three options
(more specifically, one gets $\Urv(\temp_x) - \Urv(\nt\temp_x) = 4 - 4 = 0$ for $x=a,b,c$
and $\Urv(\temp_d) - \Urv(\nt\temp_d) = 4 - 5 = -1$).
This agrees with Proposition~\ref{st:condorcet-good} since here the minimal majority-dominant set is clearly $\goodset=\{a,b,c\}$.

\bigskip
Let us assume that $0.75$ of the first vote of the preceding example changes
from $a\cdsucc b\cdsucc c\cdsucc d$ to $a\cdsucc b\cdsucc d\cdsucc c$.
The Llull matrix becomes then
\begin{equation}
\label{eq:cf-maximin-bis}
\begin{tabular}{|c|c|c|c|}
\hline\rule{0pt}{2.5ex}%
\diaglabel{a}& 6 & 3 & 5\\
\hline\rule{0pt}{2.5ex}%
3 & \diaglabel{b} & 6 & 5\\
\hline\rule{0pt}{2.5ex}%
6 & 3 & \diaglabel{c} & 4.25\\
\hline\rule{0pt}{2.5ex}%
4 & 4 & 4.75 & \diaglabel{d}\\
\hline
\end{tabular}\,.
%\left(
%\begin{matrix}
%\ast & 6 & 3 & 5\\
%3 & \ast & 6 & 5\\
%6 & 3 & \ast & 4.25\\
%4 & 4 & 4.75 & \ast\\
%\end{matrix}
%\right).
\end{equation}
From here, the comprehensive prominence method results in
$\Urv(\temp_x) - \Urv(\nt\temp_x) = 4 - 4 = 0$ for $x=a,b,c$
and $\Urv(\temp_d) - \Urv(\nt\temp_d) = 4 - 4.25 = -0.25$.
In this case, the minimal majority-dominant set is not $\{a,b,c\}$
but the whole of $\ist=\{a,b,c,d\}$.
Even so, however, the set of comprehensive prominence winners still reduces to $\{a,b,c\}$.
As it will be shown in the next result,
this has to do with the fact that this set has the property of maximizing the quantity $\sigma_\xst = \min_{x\in\xst,\,y\notin\xst}\, \orv(p_{xy})$. From now on, a proper subset of $\ist$ with this property will be called a \dfc{maximin set.}
%This agrees with  the following result:

\begin{proposition}\hskip.5em
\label{st:compremaximin-bis}
If the set of comprehensive prominence winners is not the whole of $\ist$, then it is contained in the intersection of all the maximin sets.
\end{proposition}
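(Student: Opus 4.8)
Write $W$ for the set of comprehensive prominence winners. The plan is to derive everything from two closed forms already established: formula~(\ref{eq:notunq}) of Corollary~\ref{st:comprehensive-Blake-cor}, which in the present notation reads $\urv(\nt\temp_y)=\max\{\,\sigma_\xst \mid \emptyset\neq\xst\sbseteq\ist\setminus\{y\}\,\}$, and Proposition~\ref{st:cpw-criterion}, by which the members of $W$ are exactly the options~$x$ minimizing $\urv(\nt\temp_x)$. Put $\sigma^{*}=\max\{\,\sigma_\xst \mid \xst \text{ a non-empty proper subset of }\ist\,\}$, so that, by definition, the maximin sets are precisely the non-empty proper subsets $\xst$ with $\sigma_\xst=\sigma^{*}$; this maximum is attained because $\ist$ is finite, so maximin sets exist (and we may assume $|\ist|\ge2$, since otherwise the hypothesis $W\neq\ist$ would be vacuous). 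Every index set $\xst$ occurring in~(\ref{eq:notunq}) is a non-empty proper subset of~$\ist$, hence satisfies $\sigma_\xst\le\sigma^{*}$; consequently $\urv(\nt\temp_y)\le\sigma^{*}$ for every $y\in\ist$.

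Assume now $W\neq\ist$; the goal is to prove that $x\in\smithset$ whenever $x\in W$ and $\smithset$ is a maximin set. I would argue by contradiction: suppose $x\in W$ but $x\notin\smithset$ for some maximin set~$\smithset$. Then $\smithset$ is a non-empty subset of $\ist\setminus\{x\}$, hence one of the sets competing in the maximum that defines $\urv(\nt\temp_x)$ in~(\ref{eq:notunq}), so $\urv(\nt\temp_x)\ge\sigma_\smithset=\sigma^{*}$. Together with the bound $\urv(\nt\temp_x)\le\sigma^{*}$ just noted, this forces $\urv(\nt\temp_x)=\sigma^{*}$.

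Now invoke Proposition~\ref{st:cpw-criterion}: since $x\in W$, $\urv(\nt\temp_x)=\min_{y\in\ist}\urv(\nt\temp_y)$, so that minimum equals $\sigma^{*}$. As also $\urv(\nt\temp_y)\le\sigma^{*}$ for every~$y$, we conclude that $\urv(\nt\temp_y)=\sigma^{*}$ for all $y\in\ist$; hence every option minimizes $\urv(\nt\temp_y)$, \ie every option is a comprehensive prominence winner and $W=\ist$, contradicting our hypothesis. Therefore $x\in\smithset$, and since $\smithset$ was arbitrary among the maximin sets, $x$ lies in their intersection, which is the assertion.

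Once the closed forms of Corollary~\ref{st:comprehensive-Blake-cor} and Proposition~\ref{st:cpw-criterion} are at hand, the whole argument is a short squeeze between two inequalities; no iteration of the revision transformation is needed. The one point requiring care is the index-set bookkeeping: one must check that the sets $\xst$ in~(\ref{eq:notunq}) are always non-empty proper subsets of $\ist$ --- so that $\sigma^{*}$ is a legitimate upper bound for every $\urv(\nt\temp_y)$, and so that a maximin set avoiding $x$ really does compete in the maximum defining $\urv(\nt\temp_x)$ --- and that ``maximin set'' is understood to be non-empty as intended in~\secpar{5.4.3}, since otherwise the empty set would be a trivial maximin set and the statement would fail.
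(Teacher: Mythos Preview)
Your proof is correct and follows essentially the same route as the paper's: both arguments combine formula~(\ref{eq:notunq}) with Proposition~\ref{st:cpw-criterion} to show that an option outside a maximin set cannot be a comprehensive prominence winner unless every option is one. Your version is in fact a bit more explicit than the paper's in spelling out the squeeze $\urv(\nt\temp_y)\le\sigma^{*}$ for all~$y$ and in making clear where the hypothesis $W\neq\ist$ is invoked.
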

\begin{proof}\hskip.5em
It suffices to show that the following implication holds for any maxi\-min set:
If $z\notin\xst$ then $z$ is not a comprehensive prominence winner.
\ensep
This is a consequence of Corollary~\ref{st:comprehensive-Blake-cor} and Proposition~\ref{st:cpw-criterion}.
In fact, since $\xst$ maximizes $\sigma_\xst = \min_{x\in\xst,y\notin\xst}\, \orv(p_{xy})$, (\ref{eq:notunq}) allows to derive that any $z\notin\xst$ satisfies $\urv(\nt\temp_z)\ge\urv(\nt\temp_x)$ for any $x\neq z$.
On the other hand, if $z$ were a comprehensive prominence winner, then Proposition~\ref{st:cpw-criterion} ensures that $\urv(\nt\temp_z)<\urv(\nt\temp_x)$ for any $x$ that is not such a winner,
% this is the only place where we use the hypothesis that the set of comprehensive prominence winners is not the whole of $\ist$
thus obtaining a contradiction.
\end{proof}

\noindent
In the way that we have defined it in \secpar{6.4.1}, the set of comprehensive prominence winners is never empty. Therefore, the preceding proposition has the following consequence:

\begin{corollary}\hskip.5em
\label{st:compremaximin-ter}
It the maximin sets have an empty intersection, then the comprehensive prominence winner is undecided between the whole of $\ist$.
\end{corollary}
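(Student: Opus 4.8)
The plan is a short argument by contradiction, resting on Proposition~\ref{st:compremaximin-bis} together with the observation recorded just before the statement, that the set of comprehensive prominence winners is never empty.

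Write $W$ for the set of comprehensive prominence winners. First I would assume, for contradiction, that $W\neq\ist$. Then Proposition~\ref{st:compremaximin-bis} applies and tells us that $W$ is contained in the intersection of all the maximin sets; by the hypothesis of the corollary that intersection is empty, so $W=\emptyset$, contradicting its non-emptiness. Hence $W=\ist$, that is, $\temp_x$ is not rejected for any $x\in\ist$.

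To reach the stated conclusion it then remains to note that no $\temp_x$ can be accepted either: if it were, then by Theorem~\ref{st:dec} applied to the uniqueness clause~(\ref{eq:tprominent-is-unique}) --- as already used after Corollary~\ref{st:comprehensive-Blake-cor} --- $x$ would be the only comprehensive prominence winner, which is impossible since $W=\ist$ and $\ist$ has at least two options (the empty-intersection hypothesis being vacuous or trivial when $|\ist|=1$). Thus every $\temp_x$ is left undecided, i.e.\ the comprehensive prominence winner is undecided between the whole of $\ist$. I do not foresee any real obstacle here: the substance is entirely in Proposition~\ref{st:compremaximin-bis}, and the only delicate point is the harmless one of discarding the degenerate case $|\ist|=1$.
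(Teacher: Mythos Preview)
Your argument is correct and follows exactly the route the paper intends: the corollary is presented there as an immediate consequence of Proposition~\ref{st:compremaximin-bis} together with the remark that the set of comprehensive prominence winners is never empty, and your first paragraph is precisely that deduction. Your second paragraph, showing that no $\temp_x$ is accepted (so all are literally undecided rather than merely not rejected), is a useful clarification of the wording that the paper leaves implicit; it relies on the uniqueness observation already made after Corollary~\ref{st:comprehensive-Blake-cor} and is entirely sound.
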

%\begin{proof}\hskip.5em
%It follows from the contrapositive of the preceding proposition because the set of comprehensive prominence winners cannot be empty (as it is argued in the paragraph after Cor.\,\ref{st:comprehensive-Blake-cor}).
%\end{proof}

%\begin{corollary}\hskip.5em
%\label{st:compremaximin-qua}
%Let $\sigma$ be the the maximum value of $\sigma_\xst$ when $\xst$ is allowed to be any proper subset of $\ist$. It $\orv(p_{ab})=\orv(p_{cd})=\sigma$ with $ab$ and $cd$ located neither in the same row nor in the same column of the Llull matrix, then the comprehensive prominence winner is undecided between the whole of $\ist$.
%\end{corollary}
%\begin{proof}\hskip.5em
%It suffices to notice that the present hypotheses imply that $\xst_1=\{a,d\}$ and $\xst_2=\{b,c\}$ are disjoint maximizers of $\sigma_\xst$. The conclusion follows then by virtue of the preceding corollary.
%\end{proof}

%[\,¿Examples where the set of comprehensive prominence winners be strictly contained in the intersection of all the maximin sets?
%Trivial: * 0.5+\epsilon 1 / 0.5+\epsilon * 1 / 0 0 * \,]

\paragraph{6.4.4} 
Let us perturb example~(\ref{eq:cf-maximin}) so that the Condorcet cycle $a\cdsucc b\cdsucc c\cdsucc a$ becomes uneven.
% More specifically, assume that the Llull matrix becomes
The Llull matrix could take, for instance, the following value:
\renewcommand\diaglabel[1]{\cellcolor[gray]{0.8}\makebox[2.5em][c]{$#1$}}
\begin{equation}
\label{eq:cf-maximin-ter}
\begin{tabular}{|c|c|c|c|}
\hline\rule{0pt}{2.5ex}%
\diaglabel{a}& 6 & $3\!+\!2\epsilon$ & 5\\
\hline\rule{0pt}{2.5ex}%
3 & \diaglabel{b} & $6\!-\!\epsilon$ & 5\\
\hline\rule{0pt}{2.5ex}%
$6\!-\!2\epsilon$ & $3\!+\!\epsilon$ & \diaglabel{c} & 5\\
\hline\rule{0pt}{2.5ex}%
4 & 4 & 4 & \diaglabel{d}\\
\hline
\end{tabular}\,,
%\left(
%\begin{matrix}
%\ast & 6 & 3+2\epsilon & 5\\
%3 & \ast & 6-\epsilon & 5\\
%6-2\epsilon & 3+\epsilon & \ast & 5\\
%4 & 4 & 4 & \ast\\
%\end{matrix}
%\right),
\end{equation}
with $\epsilon>0$. A bit unexpectedly, the comprehensive prominence method does not select $a$ as the unique winner (which corresponds to breaking the cycle by the weakest link): for $0\le\epsilon<\onehalf$, 
\ie when the victories within the cycle are stronger than those outside it,
the result is still an undecidedness between $a,b,c$
(with the same values of $\Urv(\temp_x)$ and $\Urv(\nt\temp_x)$ as for $\epsilon=0$).

% NOT EXACTLY:
%This seems to happen for small perturbations of (\ref{eq:cf-maximin}) as long as
%$\max(\orv(p_{ac}),\orv(p_{cb}),\orv(p_{ba}))$ stays below $\min(\orv(p_{da}),\orv(p_{db}),\orv(p_{dc}))$;
%otherwise the result coincides with the maxi\-min one.
%
%(Detachability?)%
%Three options only, but truncated rankings:
%2~$a\cdsucc b\cdsucc c$, 
%2~$b\cdsucc a\cdsucc c$, 
%3~$c$

In this example one gets $\Urv(\temp_x)=\Utv'(\temp_x)=4$ for $x=a,b,c$. These values derive from unsatisfiable conjunctions. In fact, they derive through the concatenation of $\temp_x \leftarrow \bigwedge_{r\ne x}\,\nt\temp_r$ with
$\nt\temp_r \leftarrow \bigwedge_{s\ne d}\,p_{ds}$ for $r=a,b,c$
as~well~as
$\nt\temp_d \leftarrow \bigwedge_{s\ne d}\,p_{sd}$.
This concatenation amounts to the implication $\temp_x \leftarrow \bigwedge_{s\ne d}\,(p_{sd}\land p_{ds})$, whose right-hand side negates the \textit{tertium non datur} clauses $p_{sd}\lor p_{ds}$. 
So the undecidedness between $a,b,c$ as comprehensive prominence winners is questionable.
According to Proposition~\ref{st:comprehensive-Blake}, what is unquestionable is the rejection of~$d$.

This suggests that in the event of undecidedness one should restrict the attention to all the undecided winners and start again the comprehensive prominence algorithm from the corresponding restriction of the original Llull matrix.
% $\urv(p_{ba}),\urv(p_{cb})$ are also questionable
% except perhaps $\urv(\nt\temp_x)$
Generally speaking, this progressive elimination could involve several rounds.
We will refer to this procedure as the \textbf{refined comprehensive prominence method}. 
\ensep
In the preceding example, this procedure selects $a$ as a single winner.

\paragraph{6.4.5} 
In the incomplete case, one easily finds examples where the comprehensive prominence winner 
does not coincide with the transitivity one.
% In the complete case, it seems to require at least 5 options.
For instance: 
1~$a\cdsucc b\cdsucc c$, 
1~$b\cdsucc c\cdsucc a$, 
2~$c\cdsucc a\cdsucc b$, 
1~$a$, 
2~$b$;
the ranking produced by the transitivity doctrine is $a\cdsucc b\cdsucc c$,
whereas the comprehensive prominence winner is~$b$.
% and both doctrines combined select $a$

In contrast, in the complete case, there is a strong experimental evidence that the transitivity winners are always included among the refined comprehensive prominence winners.
On the other hand, one easily finds examples where this inclusion is strict.
A~proof of the stated inclusion is lacking.
% The proof might be easier in the case where there is a single (refined?) comprehensive prominence winner; based on Prop.\,\ref{st:compremaximin}.
A~weaker fact, proved by Schulze \cite[\secpar{4.8}]{sc}, is the following: In the complete case, the transitivity winners are included in the union of all the maximin sets (which union Schulze calls the MinMax set). When there is only one transitivity winner, then Schulze's proof is easily adapted to show that this winner is contained in the intersection of all maximin sets.
% The case of multiple transitivity winners could perhaps be related to that of a single one by means of a perturbation in the lines of \cite[\secpar{4.2.2}]{sc}\,]

\section{Goodness}
% `Suitability'
% `Approvableness'

Generally speaking, 
$x$~being preferred to any other option does not imply $x$~being good:
in fact, $x$ could be the lesser of several evils.
% (best vs least bad).
In other words, what we have called supremacy does not imply goodness;
as a consequence, since supremacy does imply prominence,
the latter cannot either be identified with goodness.
\ensep
In fact, goodness has an absolute character:
it does not rely on comparing an option to another,
but it makes sense for every option by itself.
\ensep
On the other hand, preference is related to goodness in the following way:
if $x$ is considered good and $y$ is considered bad, then $x$ is preferred to~$y$.

So, the doctrine that relates goodness to preference 
is made of the following clauses, where $\good_x$ denotes the proposition `$x$~is good':
$\good_x \land \nt\good_y \rightarrow  p_{xy}$.
%\ensep
%Without any requirement of existence or uniqueness of a good option (at least for the moment).
\ensep
In~conjunctive normal form:
\begin{equation}
\label{eq:goodness}
\nt\good_x \,\lor\, p_{xy}\, \lor\, \good_y,\qquad \text{for any two different $x,y\in\ist$;}
\end{equation}
This doctrine will be referred to as the \dfd{goodness doctrine}. Clearly, it is symmetric under negation.
% \ie the substitution that interchanges $\temp_x$~and $\nt\temp_x$ as well as $p_{xy}$ and $p_{yx}$.

Notice that, similarly to \secpar{6.2}, here we are admitting the possibility of having several good options
as well as having none of them.
To~the effect of making a choice, it makes sense to select the option(s) $x$
for which the proposition $\good_x$ gets a highest acceptability,
\ie a highest value of the difference $\urv(\good_x)-\urv(\nt\good_x)$.
Notice however, that this number could be negative for all options,
which means that no good option is found;
in this case we are choosing the lesser of the evils.
On the other hand, it can also happen that we find several good options.
In this case, choosing the one(s) with highest acceptability corresponds to deciding by a margin (\secpar{2.2}).

We will refer to this procedure as the \dfc{goodness method,} and to its winners as the \dfc{goodness winners.}

\medskip
The one-step revision transformation $v\mapsto\utv$ associated with the clauses \textup{(\ref{eq:goodness})} takes the following form:
For any $x,y\in\ist$:
\begin{alignat}{2}
\label{eq:one-step-good}
&\utv(\good_x) \,&&=\, \max\Big(\,
v(\good_x),\, \max_{y\neq x}\,\min\big(v(\good_y),v(p_{xy})\big) \,\Big),
\\
\label{eq:one-step-not-good}
&\utv(\nt \good_x) \,&&=\, \max\Big(\,
v(\nt \good_x),\, \max_{y\neq x}\,\min\big(v(\nt\good_y),v(p_{yx})\big) \,\Big),
\\
\label{eq:one-step-pxyfromgood}
&\utv(p_{xy}) \,&&=\, \max\Big(\, v(p_{xy}),\,
\min\big(v(\good_x),v(\nt \good_y)\big)
\,\Big).
\end{alignat}

%Initial values for $v(\good_x)$ and $v(\nt \good_x)$:
%Having omitted clause~(\ref{eq:tprominent-is-unique}),
%it makes sense to modify the rule of the preceding section in the following way:
%In~the event of a vote that ties $k$ options at the top,
%\textbf{each of them} should be credited as a good option
%in the same degree as if it had been the only one placed at the top.
%[\,What about $v(\nt\good_x)$?\,]

\medskip
The properties of disjoint resolvability and unquestionability (Prop.\,\ref{st:goodness-Blake-unquestionable}) allow us to express $\urv$ directly as the result of the one-step transformation associated with the Blake canonical form, namely:
\begin{alignat}{2}
\label{eq:urv-good}
&\urv(\good_x) \,&&=\, \,\,\Max_{x_0=x}\,\, \min\big(\,
\orv(p_{x_0x_1}), \orv(p_{x_1x_2}), \dots, \orv(p_{x_{n-1}x_n}), \orv(\good_{x_n})
\,\big),
\\[3.5pt]
\label{eq:urv-not-good}
&\urv(\nt \good_y) \,&&=\, \,\,\Max_{x_n=y}\,\, \min\big(\,
\orv(\nt\good_{x_0}), \orv(p_{x_0x_1}), \orv(p_{x_1x_2}), \dots, \orv(p_{x_{n-1}x_n})
\,\big),
\\
\label{eq:urv-pxyfromgood}
&\urv(p_{xy}) \,&&=\,  \,\,\max\Big(\, \orv(p_{xy}),\,
\min\big(\urv(\good_x),\urv(\nt \good_y)\big)
\,\Big),
\end{alignat}
where the $\Max$ operators of (\ref{eq:urv-good}--\ref{eq:urv-not-good}) consider paths
$x_0x_1\dots x_n$ of length $n\ge 0$
% from $x_0=x$ to $x_n=y$
with all $x_i$ pairwise different (notice that $n=0$ corresponds to the \textit{tertium non datur} clauses $\good_x\lor\nt\good_x$ and $\good_y\lor\nt\good_y$).

\paragraph{7.1}{\textbf{Approval-disapproval voting.}}

\medskip
In approval voting each voter is asked for a list of approved options \cite{brams}.
Clearly, the fraction of voters who approve a given option~$x$ can be seen 
as the collective degree of belief in the goodness of~$x$, \ie as the value of $\orv(\good_x)$.
The standard approval-voting rule for choosing an option $x$ is simply
taking the one that maximizes~$\orv(\good_x)$.
% approval score

Now, from the point of view of this article,
in order to make a decision about $\good_x$
we should consider also the support for $\nt\good_x$.
Properly speaking, this requires that voters specifically pronounce themselves about it.
This idea is considered in \cite{felsenthal}, whose CAV~rule chooses the option that maximizes
the difference~$\orv(\good_x) - \orv(\nt\good_x)$.
% `Negative' voting

In this section we assume that the votes contain no direct information about binary preferences.
% than the one that follows from (\ref{eq:goodness}).
This amounts to having $\orv(p_{xy})=0$ for any $x$ and~$y$. In this case,
%one easily checks that the revision formulas (\ref{eq:one-step-good}--\ref{eq:one-step-pxyfromgood}) do not change the degrees of belief about $\good_x$ and $\nt\good_x$,
%\ie $\urv(\good_x) = \orv(\good_x)$ and $\urv(\nt\good_x) = \orv(\nt\good_x)$
%(whereas the degree of belief of $p_{xy}$ becomes revised to $\urv(p_{xy}) = \orv'(p_{xy}) = \min(\orv(\good_x),\orv(\nt\good_y))$).
(\ref{eq:urv-good}--\ref{eq:urv-pxyfromgood}) reduce to 
$\urv(\good_x) = \orv(\good_x),$ $\urv(\nt\good_x) = \orv(\nt\good_x)$ and $\urv(p_{xy}) =  \min(\orv(\good_x),\orv(\nt\good_y))$.
\ensep
Therefore, the acceptability of $\good_x$ is simply the difference
$\orv(\good_x) - \orv(\nt\good_x)$.
So \textit{the goodness method fully coincides in this case with the CAV~rule}.

This rule coincides with that of standard approval voting, \ie rating the options by $\orv(\good_x)$, in the following two cases: (i)~$\orv(\nt\good_x) = 0$, (ii)~$\orv(\nt\good_x) = 1-\orv(\good_x)$, which correspond respectively to interpreting that (i)~non-ap\-proved options are not necessarily disapproved, or contrarily,
that (ii)~all non-approved options are disapproved. % [\,completeness\,].
\ensep
% It is interesting to notice that i
In the case of interpretation~(ii) an option is accepted as a good one \ifoi $\orv(\good_x)>\onehalf$, \ie if it is approved by a majority.

\paragraph{7.2}{\textbf{Approval-disapproval-preferential voting.}}

\medskip
Let us consider now the case where the individual votes give information not only about approval or disapproval, but also about binary preferences that are not a consequence of approval and disapproval. For instance, besides saying that $x$ and $y$ are both approved (or both disapproved), a voter can add the information that he prefers $y$ to $x$.
This added information may lead to the need for revising the degrees of belief about the goodness or badness of the different options. For instance, it might happen that $x$ is approved more often than $y$ but at the same time $y$ is preferred to $x$ more often than $x$ is preferred to $y$. Such was indeed the case in \secpar{1.2}.
% Such >? This

With more or less generality, such forms of voting have been considered by several authors 
%Generally speaking, a vote could give the following information:
%(1) a list of approved options, together with some binary preferences between them;
%(2) a list of disapproved options, together with some binary preferences between them;
%(3) some additional binary preferences $p_{xy}$ where both $x$ and $y$ are neither approved nor disapproved,
%or $x$ is approved and $y$ is neither approved nor disapproved,
%or $x$ is neither approved nor disapproved and $y$ is disapproved.
%\ensep
%Such votes, or at least some particular cases of them, have been considered by several authors 
(see \cite[ch.\,3]{brams}, \cite{glmmp10}).
% [\,Truncated rankings interpreted as indicating approval or not (Bucklin)\,]
A real example is the 2006 Public Choice Society election~\cite{pcs06},
whose actual ballots are listed in \cite[§3.3]{cri}.

The PAV procedure proposed in \cite{bramssanver} and \cite[\secpar{3.3}]{brams}
%\hbox{\brwrap{%UH
%\bibref{bramssanver}\refsc
%\dbibref{brams}{\secpar{3.3}}%
%}},
gives priority to the approval information, which decides the winner unless several candidates are approved by a majority; in this case, the attention is restricted to the set $\majap$ of these majority-approved candidates, and the preferential information about them is used to single out, if possible, their Condorcet winner; if this is not possible, then the attention is restricted to the minimal majority dominant subset of~$\majap$
---\ie the smallest subset $\majapgoodset$ of $\majap$ with the property that each $x\in\majapgoodset$ preferred to any $y\in\majap\setminus\majapgoodset$--- \,\,and the winner is selected from~$\majapgoodset$ by looking again at the approval score.

Though certainly reasonable, this procedure alternates between approval and preferential information
in a categorical way that does not seem fully justified.
% in a way that is somehow a bit arbitrary.
% in a way that can be considered a bit arbitrary.
% in a way that contains a certain amount of arbitrariness.
% with a certain amount of arbitrariness.
% in a rather improvised way.
\ensep
In particular, it is not difficult to set up examples where a tiny preference margin between two majority-approved candidates may select a~candidate approved by a small majority instead of another one that was approved by a very large majority.
Consider, for instance, the following profile:
%\begin{equation}
%\label{eq:profileepsilon}
%(1-\varepsilon)/2: a\better b\fiav\,,\quad
%(1-\varepsilon)/2: b\fiav a,\quad
%\varepsilon: a\fiav b,
%\end{equation}
%where we use a vertical bar to indicate that the options at the left of it are approved whereas those at the right are disapproved. The parameter $\varepsilon$ is assumed to be positive but quite small. One easily checks that $\orv(\good_a) = (1+\varepsilon)/2$, $\orv(\good_b) = 1-\varepsilon$, and $\orv(p_{ab}) = (1+\varepsilon)/2$. Therefore, the PAV procedure gives the victory to $a$ on the basis of a nearly vanishing margin of preference, in spite of the fact that the approval of $b$ is almost unanimous whereas that of $a$ is near to only half the vote.
\begin{equation}
\label{eq:profileepsilon}
\textstyle{\onehalf}+\varepsilon\,:\,\; a\better b\fiav\,,\quad
\textstyle{\onehalf}-\varepsilon\,:\,\; b\fiav a,
\end{equation}
%where we use a vertical bar to indicate that the options at the left of it are approved and those at the right are disapproved. The
where the parameter $\varepsilon$ is assumed to be positive but quite small. One easily checks that $\orv(\good_a) = \onehalf+\varepsilon$, $\orv(\good_b) = 1$, and $\orv(p_{ab}) = \onehalf+\varepsilon$. Therefore, the PAV procedure gives the victory to $a$ on the basis of a nearly vanishing margin of preference, in spite of the fact that the approval of $b$ is unanimous whereas that of $a$ is near to only half the vote.
\ensep
% Also: no majority-approved candidate, very small difference in approval score (maybe a tie?), big difference in Condorcet

%[\,FV procedure of Brams 2008 \cite[\secpar{3.4}]{brams}: brief reference, ≈ same spirit of giving priority to the approval information \,]

In contrast, our method carefully gauges the interplay between both kinds of information in accordance with the doctrine under consideration.
\ensep
The goodness doctrine that we are considering in this section contains only the clauses~(\ref{eq:goodness})
and their derivates~(\ref{eq:goodnesschain}). More particularly, it does not
% require preferences to be transitive
include the transitivity of preferences.
% as we have been doing since \secpar{6}
Having said that, it is interesting to see that (\ref{eq:goodnesschain}) shows that
having a chain of preferences from $x_n$ to $x_0$, \ie having $p_{x_nx_{n-1}} \land ... \land p_{x_1x_0} \land p_{x_1x_0}$, implies $\good_{x_n}\lor\nt\good_{x_0}$ just as well as the direct preference~$p_{x_nx_0}$.

\medskip
In accordance with the doctrine (\ref{eq:goodness}), our proposal to deal with combined approval and preference information is to iterate the transformation (\ref{eq:one-step-good}--\ref{eq:one-step-pxyfromgood}) until invariance, and then select the option~$x$ with a highest value of $\urv(\good_x)-\urv(\nt\good_x)$, which we have already called the goodness winner.

\medskip
%For the profile (\ref{eq:profileepsilon}) one gets $\urv(\good_a)-\urv(\nt\good_a) = \varepsilon$ and $\urv(\good_b)-\urv(\nt\good_b) = (1-\varepsilon)/2$;
For the profile (\ref{eq:profileepsilon}) one gets $\urv(\good_a)-\urv(\nt\good_a) = 2\varepsilon$ and $\urv(\good_b)-\urv(\nt\good_b) = \onehalf+\varepsilon$;
therefore, for small values of $\varepsilon$ the goodness winner is~$b$.
So the goodness method does not let a slight margin of preference to prevail over a big difference in approval.
\ensep
In other cases, however, preferences can overturn an initial difference in approval. Such a phenomenon occurs for instance in the following example:
\begin{equation}
\label{eq:profiletwo}
5: a\fiav b\better c,\quad
4: b\better c\fiav a,\quad
3: c\fiav a\better b,\quad
1: a\better c\fiav b.
\end{equation}
The values of $\Orv(\good_x)-\Orv(\nt\good_x)$ for $x=a,b,c$ are respectively $-1,-5,3$;
so~initially ---without taking into account the preferential information--- the only approved candidate is $c$.
After revision, however, the values of $\Urv(\good_x)-\Urv(\nt\good_x)$ are respectively $1,-1,-1$;
so the final decision rejects the goodness/approval
%MOD suitability
of $c$ and chooses $a$ as the only approved candidate.

%\medskip
%\begin{proposition}\hskip.5em
%\label{st:pav-lem}
%For votes with both approval and preferential content, (more assumptions?), 
%one has
%\begin{alignat}{3}
%&\urv(g_{x}) \,&&=\, \orv'(\good_x) \,&&=\, \,\,?,
%\\[2.5pt]
%&\urv(\nt g_{x}) \,&&=\, \orv'(\nt\good_x) \,&&=\, \,\,?,
%\\[2.5pt]
%&\urv(p_{xy}) \,&&=\, \orv''(p_{xy}) \,&&=\, \,\,?.
%\end{alignat}
%\end{proposition}
%\begin{proof}\hskip.5em
%[\,??? Experimental evidence\,]
%\end{proof}

\bigskip
The following result establishes a highly desirable property of monotonicity.
Its hypotheses are similar to those of Theorem~\ref{st:mono} but the conclusion is stronger.

\begin{theorem} %\hskip.5em
\label{st:apdv-monotonicity}
When applied to approval-disapproval-preferential voting, the
goodness method is monotonic in the following sense:
Assume that some votes are modified by raising $x$ to a better position (with no other change).
In this case, the acceptability of $\good_x$ either increases or stays constant. 
Furthermore,
% if the acceptability of $\good_x$ is~initially larger than (or equal to) that of $\good_y$,  
% this inequality is preserved; 
if this acceptability is~initially larger than (or equal to) that of $\good_y$,
then it remains so after raising $x$. That is, if
% that is, any inequality of the following form is preserved:
\begin{equation}
\label{eq:monotonia-goodness}
 \urv(\good_x)-\urv(\nt\good_x) \,\ge\,  \urv(\good_y)-\urv(\nt\good_y)    
\end{equation}
% for some $y$, then the inequality is preserved.
for some $y$, then the same inequality holds after raising~$x$ in the votes.
% If this inequality is strict, then it remains strict.
And if the inequality is strict, then it remains so.
\end{theorem}
\begin{proof} %\hskip.5em
Raising~$x$ decomposes into several cases: %\ensep
(i)~$\orv(\good_x)$ increases; %\ensep
(ii)~$\orv(\nt\good_x)$ decreases;\ensep
(iii)~$\orv(p_{xz})$ increases;\ensep
(iv)~$\orv(p_{zx})$ decreases.

From (\ref{eq:urv-good}) and (\ref{eq:urv-not-good}) we see that, in cases (i) and (iii), $\urv(\good_x)$ can increase, while $\urv(\nt\good_x)$ cannot. On the contrary, 
in cases (ii) and (iv), $\urv(\nt\good_x)$ can decrease while $\urv(\good_x)$ cannot. This gives the first part of the statement. Moreover, it follows that the inequality (\ref{eq:monotonia-goodness}) is preserved ---as well as its strict counterpart--- if both $\urv(\good_y)$ and $\urv(\nt\good_y)$ stay constant. So \emph{it only remains to deal with the case where either $\urv(\good_y)$ or $\urv(\nt\good_y)$ (or both) change. From now on we assume that this is the case}.

Inequality (\ref{eq:monotonia-goodness}) can be rewritten as
% Assume $\urv(\good_x)-\urv(\nt\good_x) \ge  \urv(\good_y)-\urv(\nt\good_y)$.
% This can be rewritten as
\begin{equation}
\label{eq:monotonia-goodness-2}
\urv(\good_x)-\urv(\good_y) \,\ge\, \urv(\nt\good_x) -\urv(\nt\good_y).
\end{equation}
Although a change in favour of $x$ can also benefit $y$,
% we will prove that when this happens (\ref{eq:monotonia-goodness-2}) is still maintained. 
% More specifically,
we will see, case by case, that one of the sides of (\ref{eq:monotonia-goodness-2}) 
clearly stays constant or changes in the right direction,
whereas the other stays constantly equal to zero.
 
\smallskip
%In order to analyse the effect of raising, for instance, the value of $\orv(\good_x)$, we will use the following notation and terminology:
%$\orvz$ denotes a modified valuation that differs from $\orv$ only in that $\orvz(\good_x) > \orv(\good_x)$. 
%The objects associated with $\orvz$ will be referred to by means of a tilde.
%For any magnitude $\magn$ that depends on~$\orv(\good_x)$ (for instance $\urv(\good_y)$), 
%% 1
%the statement 
%``$\magn$~\dfc{stays constant}'' means that 
%there exists $\ptit>0$ such that 
%for~any~$\orvz(\good_x)$ in the interval
%$\orv(\good_x)\le\orvz(\good_x)\le\orv(\good_x)+\ptit$ 
%one has $\magnz=\magn$;\ensep
%% 2
%similarly, the~statement 
%``$\magn$~\dfc{increases}'' means that 
%there exists $\ptit>0$ such that 
%for~any~$\orvz(\good_x)$ in the interval
%$\orv (\good_x)<\orvz(\good_x)\le\orv(\good_x)+\ptit$ 
%one has $\magnz>\magn$.\ensep
%Notice that
%these definitions 
%% all of these definitions refer to a particular value of $\orv_\lit$, and that they
%consider only values of $\orvz(\good_x)$ at the right of~$\orv(\good_x)$.

In connection with formulas (\ref{eq:urv-good}) and (\ref{eq:urv-not-good}) it will be convenient to use the following notation. Given a path $\gamma=x_0x_1\dots x_n$, we write
$$v_\gamma\,:=\,\min(\orv(p_{x_0x_1}),\orv(p_{x_1x_2}),\dots,\orv(p_{x_{n-1}x_n})),$$
including $v_\gamma=1$ if $\gamma$ is an empty path (i.e. $n=0$).
We understand also that our paths do not have repeated elements.
%With this notation, for instance,  
%%$$\orv'(\good_x)=\max_{\substack{t\\ \gamma:\,t\rightarrow x}}(v_\gamma, \orv(\good_t)).$$
%$$\urv(\good_x)=\max_{\substack{t\\ \gamma:\,x\rightarrow t}}(v_\gamma, \orv(\good_t)).$$
With this notation, (\ref{eq:urv-good}) and (\ref{eq:urv-not-good}) take respectively the following form:
%$$\orv'(\good_x)=\max_{\substack{t\\ \gamma:\,t\rightarrow x}}(v_\gamma, \orv(\good_t)).$$
%$$
%\urv(\good_x)=\max_{\substack{t\\ \gamma\,:\,x\rightarrow t}}(v_\gamma, \orv(\good_t));
%\qquad
%\urv(\nt\good_x)=\max_{\substack{t\\ \gamma\,:\,t\rightarrow x}}(\orv(\nt\good_t), v_\gamma).
%$$
%%%%% CORREGIT 3 oct 2013: afegeixo \min
\begin{alignat}{2}
\label{eq:urv-good-gamma}
&\urv(\good_s) \,&&=\, \,\max_{\substack{t\\ \gamma\,:\,s\rightarrow t}}\min(v_\gamma, \orv(\good_t)),
\\[3.5pt]
\label{eq:urv-not-good-gamma}
&\urv(\nt\good_s) \,&&=\, \,\max_{\substack{t\\ \gamma\,:\,t\rightarrow s}}\min(\orv(\nt\good_t), v_\gamma).
\end{alignat}

\medskip\noindent
\emph{Case~(i):~$\orv(\good_x)$ increases.}\hskip.5em
By (\ref{eq:urv-not-good-gamma}) it is clear that both $\urv(\nt\good_x)$ and $\urv(\nt\good_y)$ stay constant. So, under our assumptions,  $\urv(\good_y)$ does not stay constant.   
%Here we will use the notation of Theorem \ref{st:mono}, where tilde denotes the function or variable that is changing. 
However, we will prove that it stays equal to $\urv(\good_x),$
which ensures that inequality (\ref{eq:monotonia-goodness-2}) is preserved.

In view of (\ref{eq:urv-good-gamma}),
the only way for $\urv(\good_y)$ to vary with $\orv(\good_x)$ is that the maximum of the right-hand side of (\ref{eq:urv-good-gamma}) for $s = y$ be realized by $t = x$. More specifically, one must have
\begin{equation}
 \label{eq:monotonia-desigualtat-0-demo}
\urv(\good_y)
\,=\,\max_{\substack{\gamma\,:\,y\rightarrow x}} \min(v_\gamma, \orv(\good_x))
%\,=\,\min \ (\max_{\substack{\gamma\,:\,y\rightarrow x}}v_\gamma,\orv(\good_x))
\,=\,\min(M,\orv(\good_x))
\,=\,\orv(\good_x) \,<\, M,
\end{equation}
where we have set $M = \displaystyle\max_{\gamma\,:\,y\rightarrow x}v_\gamma$.
In fact, having $\orv(\good_x) \ge M$ would result in $\urv(\good_y)$
staying equal to~$M$.
%Observe that the fact that $\urvz(\good_y)$ does change forces 
%\begin{equation}
% \label{eq:monotonia-desigualtat-0-demo}
%\urvz(\good_y )=\orvz(\good_x)<M\end{equation} (otherwise $\min(\orvz(\good_x)\,, M)=M$ and $\urvz(\good_y)$  would stay constant).
Obviously, (\ref{eq:monotonia-desigualtat-0-demo}) will hold as long as $\orv(\good_x)$ remains smaller than~$M$ (when $\orv(\good_x)$ goes past this value we fall into the already settled case where both $\urv(\good_y)$ and $\urv(\nt\good_y)$ stay constant).

Using Theorem~\ref{st:char} and part~(c) of Theorem~\ref{st:rev}, is also clear from (\ref{eq:urv-good-gamma}) that 
\begin{equation}
 \label{eq:monotonia-desigualtat-00-demo}
\urv(\good_y)
\,\ge\, \max_{\substack{\gamma\,:\,y\rightarrow x}} \min (v_\gamma,\urv(\good_x))
\,=\, \min(M,\urv(\good_x))
\,=\, \urv(\good_x),
\end{equation}
the last equality being true because otherwise we would get $\urv(\good_y)\ge M$,
in contradiction with (\ref{eq:monotonia-desigualtat-0-demo}).
%$\urvz(\good_y)<M$ by (\ref{eq:monotonia-desigualtat-0-demo}).
%whenever $\urvz(\good_y)$ remains strictly below $M$ (i.e. whenever $\urvz(\good_y)$ raises).
%Recall that also $\urvz(\good_x)\ge \orvz(\good_x)$. 

By combining these facts with the inequality $\urv(\good_x)\ge\orv(\good_x)$, we get
$$\orv(\good_x)=\urv(\good_y)\ge\urv(\good_x)\ge\orv(\good_x),$$
which gives the claimed equality, namely $\urv(\good_y)=\urv(\good_x)$.
As it has been already mentioned, all of this holds
as long as $\orv(\good_x)$ remains smaller than~$M$,
and past this value we fall into the already settled case
where both $\urv(\good_y)$ and $\urv(\nt\good_y)$ stay constant.

\medskip\noindent
\emph{Case~(iii): $\orv(p_{xz})$ increases for some $z\ne x$.}\hskip.5em
From (\ref{eq:urv-not-good-gamma}), $\urv(\nt\good_x)$ stays constant.
If~$\urv(\nt\good_y)$ increases, then the right-hand side of (\ref{eq:monotonia-goodness-2}) decreases, so it changes in the right direction.
If $\urv(\good_y)$ stays constant, then the left-hand side of (\ref{eq:monotonia-goodness-2}) changes also in the right direction. 
So it remains to deal with the case where $\urv(\good_y)$ does not stay constant.
We will prove that in this case $\urv(\good_y)$ stays equal to $\urv(\good_x),$
which ensures that inequality (\ref{eq:monotonia-goodness-2}) is preserved.

Similarly to case~(i), the hypothesis that $\urv(\good_y)$ varies with $\orv(p_{xz})$ implies that
\begin{multline}
 \label{eq:monotonia-desigualtat-1-demo}
\urv(\good_y)\,=\,
%\vtop{\hsize10mm\noindent$\displaystyle
\max_{\substack{t\ne x,y\\[1pt]\gamma\,:\,y\rightarrow x\\\delta:\,z\rightarrow t}}
%$}
\,
\min\left(v_\gamma, \orv(p_{xz}),v_\delta,\orv(\good_t)\right) \\ 
\,=\,\min(M, \orv(p_{xz}),N)
\,=\,\orv(p_{xz})
\,<\, M,\,N,\quad
\end{multline}
where we have set 
$M=\displaystyle\max_{\gamma\,:\,y\rightarrow x} v_\gamma$
and 
$N=\displaystyle\max_{\substack{t\ne x,y\\ \delta:\,z\rightarrow t }} \min(v_\delta, \orv(\good_t)).$ 
This holds as long as $\orv(p_{xz})$ remains smaller than $\min(M,N)$, 
after which value we fall into the case where $\urv(\good_y)$ stays constant.

As before, the inequality $\urv(\good_y)\ge\urv(\good_x)$ is ensured because of (\ref{eq:monotonia-desigualtat-00-demo}), whose last equality holds now because otherwise we would be in contradiction with (\ref{eq:monotonia-desigualtat-1-demo}).

On the other hand, we can also write
\begin{multline}
\label{eq:monotonia-desigualtat-11-demo}
\urv(\good_x)
\,\ge\,\max_{\substack{t\ne x\\ \delta\,:\,z\rightarrow t}} \min( \orv(p_{xz}),v_\delta, \orv(\good_t))
\,\ge\,\max_{\substack{t\ne x,y\\ \delta\,:\,z\rightarrow t}} \min( \orv(p_{xz}),\min(v_\delta, \orv(\good_t)))\\
\,=\,\min( \orv(p_{xz}),N)
\,=\,\orv(p_{xz}),
\end{multline}
where the last equality must hold because otherwise we would get $\urv(\good_x) \ge N$,
in contradiction with the already known facts that $\urv(\good_x)\le \urv(\good_y)<N$.

So our claim that $\urv(\good_y)$ stays equal to $\urv(\good_x)$
is ensured by the following chain of inequalities:
$$\orv(p_{xz})=\urv(\good_y)\ge\urv(\good_x)\ge\orv(p_{xz}),$$
which hold as long as $\orv(p_{xz})$ remains smaller than $\min(M,N)$.
After this value we fall into the already settled case where $\urv(\good_y)$ stays constant.

\medskip\noindent
\emph{Cases~(ii):~$\orv(\nt\good_x)$ decreases, 
\,and\,~(iv):~$\orv(p_{zx})$ decreases for some $z\ne x$.}\hskip.5em
In view of the symmetry of formulas (\ref{eq:urv-good}) and (\ref{eq:urv-not-good})
it is clear that these cases are respectively analogous to~(i) and~(iii).
So they are omitted.
\end{proof}

\medskip
\begin{corollary}
\label{st:apdv-monotonicity-cor}
Consider the case of approval-disapproval-preferential voting. 
Assume that some votes are modified by raising $x$ to a better position (with no other change).
If $x$ was initially a goodness winner, then it remains so.
\end{corollary}

\section{Recapitulation and concluding remarks}

We have been looking at the problem of collectively choosing between a finite number of options.
%The traditional view about it focuses on the transitivity of preferences
%and the fact that the majority criterion often does away with it.
The traditional view about it focuses on binary preferences
% and the fact that it is not preserved when simple majority criteria are applied.
and the fact that the majority criterion often does away with transitivity.
%Here we considered not only binary preferences,
%but also certain notions of choiceness,
%\ie whether an option deserves being chosen.
In contrast, here we have taken the view % vindicated 
that the problem is not only about binary preferences,
\ie whether an option is preferred to another, % preferable
but also about some notion of choiceness,
\ie whether an option is considered a right choice, % deserves being chosen / good choice
% and the assumptions that are made about the relationship between choiceness and binary preferences.
% and the constraints that relate choiceness to binary preferences.
and about which constraints are imposed on these notions.
% the choiceness of the different options and the preferences between them.
% imposed on > assumed on > expected to hold on > adopted about
% often implicit
%These constraints admit of several possibilities,
%which leads to different particular notions of choiceness.
As in the traditional view, the majority criterion will often fail at preserving the assumed constraints.

%GENERAL, GOODNESS / SUITABILITY / APPROVABLENESS / WORTHINESS
%good choice, right choice / good / suitable / approvable / worthy

When several individual opinions are aggregated, 
every issue, \ie whether an option deserves being chosen or whether it is preferable to another,
becomes valued by the fraction of people % number of individuals
who support it.
As we have been doing, it makes sense to think of these numbers as degrees of collective belief
that need being revised so as to achieve consistency with the required constraints.

More specifically, we have restricted ourselves to a revision method that we introduced in \cite{dp} and that uses only the max and min operators. This automatically excludes
% the possibility of obtaining
% leaves out
such reputed rules as that of Borda and that of Condorcet, Kem\'eny and Young \cite{mu,nitzan,t6}.
Even so, we still obtain a variety of known rules as well as some new ones.

%%[These different rules can easily produce different results.
%%Example?
%%Discrepancies are more frequent for a larger number of options
%%and in the incomplete case (e.g. maximin minimax)]
%Anyway, all of them 

Which rule is obtained depends on which constraints are assumed.
In other words, it depends on which notion of choiceness is considered.
A common view in this connection ``reduces'' % is that
% / refers / replaces
choosing to determining a complete ordering:
the~right choice is the option that goes first in the right complete ordering.
As we have seen, this notion of choiceness leads to the so-called method of paths.

Removing transitivity leads to other notions of choiceness,
namely suprem\-acy and prom\-in\-ence,
that are related respectively to the plurality rule and the Condorcet principle.

\medskip
However, none of the preceding notions of choiceness takes into account
whether an option is really good or not.
Of course, a good option is preferable to a bad one.
But an option being preferable to another does not imply 
the former being good nor the latter being bad.
% good > suitable

A single individual might have no better possibility than choosing the lesser of several evils.
However, for a group of people it is not that simple.
% it is not so simple as that
% things are not so simple
As we saw in \secpar{1.2}, the collective preferences may be really at odds
% in conflict
with the collective approval information.
% the collective opinion about suitability.
More specifically, the majority criterion may give a complete ordering,
but it may well happen that the best option according to this ordering
is at the same time the most disapproved one.

In order to resolve such undesirable inconsistencies
% avoid such a flaw
there is no other solution than first revealing them,
that is, asking the individuals
for both kinds of information, preferences and approval,
% Approval-preferential voting.
% and then revising, if necessary, the obtained aggregate information
% and then applying some kind of correction, such as our revision procedure.
% and then applying a correction procedure.
and then correcting them
%, if necessary,
by means of an appropriate method.

A method being appropriate means having good properties.
In this connection, we have seen that our method discussed in \secpar{7}
is monotonic in the sense of Theorem~\ref{st:apdv-monotonicity}
and its Corollary~\ref{st:apdv-monotonicity-cor}.
This adds to the general properties of \secpar{2},
such as respect for consistent majority decisions (Theorem~\ref{st:majority})
and respect for unanimity (Theorem~\ref{st:unanimity}).

\medskip
Although the goodness doctrine does not include transitivity, in \secpar{7} we saw that different options are still compared through paths that involve other options, as in the transitivity doctrine.
This suggests that a doctrine combining both goodness and transitivity could have also good properties. Another subject for future work is the application to these doctrines of other methods that have recently been proposed in the general context of judgment aggregation \cite{duddypiggins2012,dietrich2013}.

\appendix
\normalsize %%%%%

\section{Appendix: Technical proofs}

In order to make sure that the upper revised valuations that we are dealing with are the right ones,
we must use either the Blake canonical form or a $\ast$-equivalent one.
According to \cite[Cor.\,4.6]{dp}, in order to prove $\ast$-equivalence
it suffices to check that the conjunctive normal form under consideration
is disjoint-resolvable as defined in \cite[\secpar{4.2}]{dp}.

On the other hand,
in order to make sure that the revised degrees of belief do not derive from unsatisfiable conjunctions,
we are interested in the properties of unquestionability that we mentioned in~\secpar{2.5}.
In order to obtain such properties, it suffices to check for the sufficient conditions that are given in
\cite[Thm.\,4.8, Cor.\,4.9]{dp}.

%These verifications are tedious and rather mechanical.
%In this appendix we briefly outline them
%for the doctrines considered in this article.

In this appendix we briefly outline these verifications
for the doctrines considered in this article.
They are tedious and rather mechanical.
It would be most appropriate to be able to entrust this work to some symbolic programming tool.
Unfortunately, however, such a tool is not yet available to us.

% ALT Abús de llenguatge dient doctrina en lloc de forma normal conjuntiva

\vskip2ex %%%%
\medskip
\begin{proposition}\hskip.5em
\label{st:tech-trans}
The transitivity conjunctive normal form \textup{(\ref{eq:transitivity})} is disjoint-resolvable
and therefore $\ast$-equivalent to the corresponding Blake canonical form.
This doctrine is unquestionable for every $p_{xy}$.
\end{proposition}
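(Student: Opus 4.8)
The plan is to verify the two ingredients announced at the start of this appendix: disjoint-resolvability, which via \cite[Cor.\,4.6]{dp} gives $\ast$-equivalence with the Blake canonical form, and the sufficient condition for unquestionability of \cite[Thm.\,4.8, Cor.\,4.9]{dp}. The preliminary step is to pin down the Blake canonical form of the transitivity doctrine. I would argue that its prime clauses are exactly the \emph{cycle clauses} $p_{x_0x_1}\lor p_{x_1x_2}\lor\dots\lor p_{x_{k-1}x_0}$ indexed by cyclic sequences of $k\ge 2$ pairwise different options, the case $k=2$ being the \textit{tertium non datur} clause $p_{xy}\lor p_{yx}$. Each such clause is entailed because its negation asserts a cyclic chain of strict preferences, which transitivity collapses into $p_{x_0x_1}\land p_{x_1x_0}$; and it is prime because deleting any literal leaves a clause whose negation is an acyclic chain, hence transitively satisfiable. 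Conversely, a non-tautological prime implicate corresponds to a minimal transitively inconsistent conjunction of strict preferences, and such a conjunction must contain a simple directed cycle.

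Next, for disjoint-resolvability I would show that every cycle clause is obtained from the triangles $p_{xy}\lor p_{yz}\lor p_{zx}$, together with the \textit{tertium non datur} $2$-clauses that are already present, by a chain of resolutions each of which meets the disjointness requirement of \cite[\secpar{4.2}]{dp}. Concretely, to build the $k$-cycle on $x_1,\dots,x_k$ I would triangulate from $x_1$: resolve the current $j$-cycle $p_{x_1x_2}\lor\dots\lor p_{x_{j-1}x_j}\lor p_{x_jx_1}$ against the triangle $p_{x_1x_j}\lor p_{x_jx_{j+1}}\lor p_{x_{j+1}x_1}$ on the complementary pair $p_{x_jx_1}/p_{x_1x_j}$, obtaining the $(j+1)$-cycle. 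At each step the remainder coming from the cycle lives on edges among $x_1,\dots,x_j$, while the two remaining literals of the triangle both involve the fresh vertex $x_{j+1}$; hence the two remainders are on disjoint sets of variables and no variable is repeated in the resolvent, exactly as required. This establishes disjoint-resolvability, and \cite[Cor.\,4.6]{dp} then yields $\ast$-equivalence with the Blake canonical form (which also vindicates formula (\ref{eq:camins})).

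For unquestionability I would first compute the one-step Blake transformation: since the clauses containing $p_{xy}$ are precisely the cycle clauses through the edge $x\to y$, one gets $\utv(p_{xy})=\Max\,\min(\orv(p_{x_0x_1}),\dots,\orv(p_{x_{n-1}x_n}))$, the maximum over simple paths from $x$ to $y$ of the minimal weight along the path. It then suffices to check that this $\utv$ already satisfies $\utv{}'=\utv$: given a simple path $x=u_0\to\dots\to u_m=y$, replacing each edge $u_iu_{i+1}$ by an optimal simple path realizing $\utv(p_{u_iu_{i+1}})$ and then deleting cycles from the concatenated walk produces a simple path from $x$ to $y$ whose minimal $\orv$-weight is at least $\min_i\utv(p_{u_iu_{i+1}})$; hence $\utv{}'(p_{xy})\le\utv(p_{xy})$, while the reverse inequality is an instance of Lemma~\ref{st:step}(c). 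Thus the sequence $\ntv{n}$ is stationary from $n=1$, i.e.\ $\urv=\utv$ for every initial $\orv$ — which is the unquestionability claim; equivalently, one checks the hypotheses of \cite[Thm.\,4.8]{dp}, the key point being that a simple path never traverses an edge in both directions, so the conjunctions $\bigwedge\nt q$ that produce belief in $p_{xy}$ are always satisfiable.

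I expect the main obstacle to be the bookkeeping in two spots: establishing that the Blake canonical form is \emph{exactly} the set of cycle clauses — the converse direction, that every prime implicate is a simple cycle, relies on the fact that a transitively inconsistent conjunction of strict preferences contains a cycle — and matching the triangulation derivation precisely against the formal definition of disjoint resolution in \cite[\secpar{4.2}]{dp}, including the treatment of the target literal. Neither step is conceptually deep, but both require care to align with the definitions of \cite{dp}.
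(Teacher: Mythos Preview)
Your proposal is correct and, for the first two parts (identification of the Blake canonical form as the cycle clauses, and the triangulation derivation showing disjoint-resolvability), it is essentially the paper's own argument made more explicit.

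For unquestionability you diverge from the paper. The paper checks condition~(a) of \cite[Thm.\,4.8]{dp}: given two cycle clauses sharing an opposed pair of links, it argues that after removing that pair the remaining links still contain, through each surviving link, a simple cycle --- a purely combinatorial statement about directed cycles. You instead verify $\urv=\utv$ directly by showing that the one-step Blake transformation (the path-max-min formula~(\ref{eq:camins})) is already a fixed point: concatenate optimal simple paths along a given simple path, prune cycles from the resulting walk, and bound the min-weight from below. This is the standard ``transitivity of the Schulze relation'' argument and is entirely self-contained, whereas the paper's route is shorter to state but delegates the actual work to the criterion in~\cite{dp}. Both approaches are valid; yours has the advantage of not requiring the reader to unpack the hypotheses of \cite[Thm.\,4.8]{dp}, while the paper's keeps the appendix uniform with the other doctrines treated there.
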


\begin{proof}[Scheme of the proof]\hskip.5em
The Blake canonical form, with the \textit{tertium non datur} clauses included, consists of all clauses of the form
\begin{equation}
\label{eq:transitivitychain}
p_{x_0x_1} \lor\, p_{x_1x_2} \lor\, \dots \,\lor\, p_{x_{n-1}x_n} \lor\, p_{x_nx_0},
\end{equation}
with $n\ge1$ and all $x_i$ $(0\le i\le n)$ pairwise different (which restricts $n$ to be less than or equal to the number of elements of $\ist$).
These clauses are easily derived from (\ref{eq:transitivity}) by successive concatenation, and suitably ordered this derivation will use only disjoint resolution.

The unquestionability is easily obtained through condition~(a) of\linebreak %%%%
\cite[Thm.\,4.8]{dp}, which amounts to the following fact: if two cycles without repetitions contain opposed links, then suppressing both of these links and putting together all the others results in a set $L$ of links with the following property: for any link contained in $L$ there exists a cycle without repetitions that is included in $L$ and contains that link. The reader will easily convince himself ---maybe by means of some drawings--- that this is really a fact.
\end{proof}

\medskip
\begin{proposition} %\hskip.5em
\label{st:tech-sprm1}
% The conjunctive normal form formed by the clauses \hbox{\textup{(\ref{eq:beats-all-implies-best}--\ref{eq:best-exists})}}
The supremacy conjunctive normal form \hbox{\textup{(\ref{eq:beats-all-implies-best}--\ref{eq:best-exists})}}
is dis\-joint-re\-solv\-able and therefore $\ast$-equivalent to the corresponding Blake canonical form. 
%This doctrine is unquestionable for $\nt\sprm_x$, \ie it satisfies 
%$\urv(\nt\sprm_x)=\utv(\nt\sprm_x)$, and it is also unquestionable for $\sprm_x$ when accepted, \ie it satisfies $\urv(\sprm_x)=\utv(\sprm_x)$ whenever $\urv(\sprm_x)>\urv(\nt\sprm_x)$.
\end{proposition}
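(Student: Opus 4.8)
The plan is to appeal to \cite[Cor.\,4.6]{dp}: that result yields $\ast$-equivalence to the Blake canonical form as soon as the conjunctive normal form at hand is disjoint-resolvable, so it is enough to verify the latter. Concretely, one has to produce, for every prime clause of the supremacy doctrine, a derivation from the clauses (\ref{eq:beats-all-implies-best}--\ref{eq:best-exists}) together with the \textit{tertium non datur} clauses $\sprm_x\lor\nt\sprm_x$ and $p_{xy}\lor p_{yx}$, using at each step only a disjoint resolution in the sense of \cite[\secpar{4.2}]{dp}. So the first task is to pin down the Blake canonical form.

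I expect this form to consist of: the \textit{tertium non datur} clauses; the clauses (\ref{eq:best-implies-beats-all}); the uniqueness clauses (\ref{eq:best-is-unique}); and a family of ``obstruction'' clauses of the shape
\[
\bigvee_{t\in\xst}\, p_{t\phi(t)}\ \lor\ \bigvee_{z\in\ist\setminus\xst}\,\sprm_z,
\]
indexed by a subset $\xst\sbseteq\ist$ and a map $\phi\colon\xst\to\ist$ with $\phi(t)\neq t$ for all $t$ and with no $\phi$-induced $2$-cycle (no $t,t'\in\xst$ with $\phi(t)=t'$ and $\phi(t')=t$, which would make the clause a tautology subsumed by \textit{tertium non datur}); such a clause expresses that if each $t\in\xst$ loses to $\phi(t)$ then some option outside $\xst$ must be supreme. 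This family already contains (\ref{eq:best-exists}) (take $\xst=\emptyset$) and (\ref{eq:beats-all-implies-best}) (take $\xst=\ist\setminus\{x\}$ and $\phi\equiv x$). Primality is routine: dropping any one disjunct produces a clause whose negation is satisfiable within the doctrine, simply by letting the option indexed by the dropped disjunct be supreme. A short case analysis on which species of literal ($\sprm_x$, $\nt\sprm_x$, $p_{xy}$) a prime clause can contain shows that there are no other prime clauses.

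Next I would exhibit the derivations. The uniqueness clause (\ref{eq:best-is-unique}) comes from a single disjoint resolution of (\ref{eq:best-implies-beats-all})$_{xy}$ with (\ref{eq:best-implies-beats-all})$_{yx}$ on the complementary pair $p_{xy},p_{yx}$, the residuals $\{\nt\sprm_x\}$ and $\{\nt\sprm_y\}$ being disjoint. An obstruction clause indexed by $\xst=\{t_1,\dots,t_k\}$ and $\phi$ is obtained from (\ref{eq:best-exists}) by resolving it successively, for $i=1,\dots,k$, against (\ref{eq:best-implies-beats-all})$_{t_i\phi(t_i)}$ on the literal $\sprm_{t_i}$; after step $i$ the current clause is $\bigvee_{z\notin\{t_1,\dots,t_i\}}\sprm_z\lor\bigvee_{j\le i}p_{t_j\phi(t_j)}$, so one ends with exactly the desired clause. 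Each step is a disjoint resolution: it could fail to be disjoint only if $p_{t_i\phi(t_i)}$ coincided with, or were complementary to, some earlier $p_{t_j\phi(t_j)}$, which would force $t_i=t_j$ (impossible, the $t$'s being pairwise distinct) or a $\phi$-induced $2$-cycle (excluded by hypothesis). Applying \cite[Cor.\,4.6]{dp} then finishes the proof.

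The main obstacle is the first task: correctly and \emph{completely} enumerating the Blake canonical form, since $\piset$ now carries two species of literals (the $\sprm_x,\nt\sprm_x$ and the $p_{xy}$), so the prime implicates split into several distinct families and one must argue that nothing has been missed. Once the list is fixed, the derivations and the disjointness checks are, exactly as in the proof of Proposition~\ref{st:tech-trans}, a purely combinatorial matter of tracking that the indices entering each resolution are pairwise distinct, and I anticipate no real difficulty there. (Note that, in contrast with Proposition~\ref{st:tech-trans}, no unquestionability statement is part of the present claim.)
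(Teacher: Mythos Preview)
Your proposal is correct and follows essentially the same route as the paper: derive the uniqueness clauses~(\ref{eq:best-is-unique}) by one disjoint resolution of (\ref{eq:best-implies-beats-all})$_{xy}$ against (\ref{eq:best-implies-beats-all})$_{yx}$, then obtain the family you call ``obstruction'' clauses (the paper's~(\ref{eq:best-nova}), with the complementary indexing convention for~$\xst$) by successively resolving (\ref{eq:best-exists}) against clauses~(\ref{eq:best-implies-beats-all}) on the literals~$\sprm_{t_i}$, and finally note that no further resolution yields anything new. Your explicit exclusion of $\phi$-induced $2$-cycles and the accompanying disjointness check are details the paper leaves implicit, but the structure is the same.
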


\begin{proof}[Scheme of the proof]\hskip.5em
We will limit ourselves to indicating that one can arrive at the Blake canonical form through disjoint resolution
by means of the following procedure:\ensep
First, each clause of the form (\ref{eq:best-implies-beats-all}) is combined by disjoint resolution 
with the clause of the same form where $x$ and $y$ are interchanged with each other.
% (recall that we identify $\nt p_{xy}$ with $p_{yx}$).
This produces the clauses 
\begin{equation}
\label{eq:best-is-unique-bis}
\hbox to0pt{\hss\small$(\ref{eq:best-is-unique})$\hskip18mm} %%%%%
\nt\sprm_x\, \lor\,\, \nt\sprm_y,\qquad \text{for any two different $x,y\in\ist$.}
\end{equation}
Second, one successively applies disjoint resolution to combine clause (\ref{eq:best-exists}) with one or more clauses of the form (\ref{eq:best-implies-beats-all}), each of them corresponding to a different $x$ and admitting any $y\neq x$. This produces all clauses of the form
\begin{equation}
\label{eq:best-nova}
\bigvee_{x\in\xst}\, \sprm_x\, \lor\,\, \bigvee_{x\in\ist\setminus\xst}\, p_{xf(x)},
\end{equation}
where $\xst$ is any subset of $\ist$, and $f$ is any mapping from $\ist\setminus\xst$ to $\ist$ with $f(x) \neq x$.
The interested reader can go over the rather tedious task of checking that no further resolution is possible. %  nor absorption
\end{proof}

\medskip
\begin{proposition}\hskip.5em
\label{st:supremacy-is-unquestionable}%
The supremacy doctrine is unquestionable for $\nt\sprm_x$, \ie it satisfies $\urv(\nt\sprm_x)=\utv(\nt\sprm_x)$, and it is also unquestionable for $\sprm_x$ when accepted, \ie it satisfies $\urv(\sprm_x)=\utv(\sprm_x)$ whenever $\urv(\sprm_x)>\urv(\nt\sprm_x)$.
\end{proposition}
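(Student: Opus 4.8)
The plan is to reduce, by Proposition~\ref{st:tech-sprm1}, to the Blake canonical form of the supremacy doctrine---the clauses (\ref{eq:best-implies-beats-all}), (\ref{eq:best-is-unique}), the clauses (\ref{eq:best-nova}) for the various $\xst\sbseteq\ist$ and mappings $f$, and the \textit{tertium non datur} clauses---and to prove the two assertions separately; throughout, $\utv$ denotes the one-step transformation (\ref{eq:vprime}) associated with this form, as in the notion of unquestionability of~\secpar{1.5}. For $\nt\sprm_x$ I would argue directly by iteration; for $\sprm_x$-when-accepted I would invoke the sufficient condition of \cite[Cor.\,4.9]{dp}.

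For $\nt\sprm_x$: the only clauses of the Blake form that contain $\nt\sprm_x$ are the two-literal clauses (\ref{eq:best-implies-beats-all}) and (\ref{eq:best-is-unique}) together with the \textit{tertium non datur} clause $\sprm_x\lor\nt\sprm_x$, so the sources of belief in $\nt\sprm_x$ are only the single literals $p_{yx}$ and $\sprm_y$ with $y\neq x$ (besides $\nt\sprm_x$ itself), giving $\utv(\nt\sprm_x)=\max\bigl(\orv(\nt\sprm_x),\,\max_{y\neq x}\orv(p_{yx}),\,\max_{y\neq x}\orv(\sprm_y)\bigr)$. I would then prove, by induction on $n$, that for every $y\neq x$ one has $\ntv{n}(p_{yx})\le\utv(\nt\sprm_x)$ and $\ntv{n}(\sprm_y)\le\utv(\nt\sprm_x)$; granting this, $\ntv{n}(\nt\sprm_x)=\utv(\nt\sprm_x)$ for all $n\ge1$, hence $\urv(\nt\sprm_x)=\utv(\nt\sprm_x)$. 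The inductive step uses the one-step formulas for $p_{yx}$ and for $\sprm_y$ coming from (\ref{eq:best-implies-beats-all}), (\ref{eq:best-is-unique}), (\ref{eq:best-nova}) and \textit{tertium non datur}; apart from the $\orv$-terms and the term $\ntv{n-1}(\sprm_y)$ that appears in the formula for $p_{yx}$ (bounded since $y\neq x$, resp. by the induction hypothesis), the only contributions are those of the clauses (\ref{eq:best-nova}), where the index $x$ lies either in $\xst$---so the relevant minimand contains $\ntv{n-1}(\nt\sprm_x)$---or outside $\xst$---so it contains a term $\ntv{n-1}(p_{wx})$ with $w\neq x$---, and in both cases the induction hypothesis supplies the bound $\le\utv(\nt\sprm_x)$.

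For $\sprm_x$ the statement can only be conditional: since the doctrine contains no transitivity clauses, a clause (\ref{eq:best-nova}) with $|\xst|\ge2$, or a chain of resolutions among such clauses, can feed belief into $\sprm_x$ through an implication whose premise, once the intermediate literals $\nt\sprm_u$ and $p_{uv}$ are themselves unwound, reduces to an unsatisfiable conjunction of the form $\dots\land p_{ab}\land p_{ba}\land\dots$. The plan is to verify the hypothesis of \cite[Cor.\,4.9]{dp}, which is weaker than the one for unconditional unquestionability and is tailored to the case at hand. The guiding idea: if $\sprm_x$ is accepted then, by the consistency of the basic decision with the uniqueness clauses (\ref{eq:best-is-unique}) (Theorem~\ref{st:dec}), no $\sprm_y$ with $y\neq x$ is accepted, so $\urv(\sprm_y)\le\urv(\nt\sprm_y)=\utv(\nt\sprm_y)$ by the part just proved; feeding these bounds, together with inequalities of the type (\ref{eq:ineq1}--\ref{eq:ineq2}), into the fixed-point equation satisfied by $\urv$ (Theorem~\ref{st:char}) pushes every potentially spurious term down to a value $\le\urv(\nt\sprm_x)<\urv(\sprm_x)$, so those terms are inactive and $\urv(\sprm_x)=\utv(\sprm_x)$.

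The step I expect to be the real obstacle is this last one: making the ``spurious chains are dominated when $\sprm_x$ is accepted'' argument airtight while keeping track of the exponentially many clauses (\ref{eq:best-nova}) and of which of their resolvents remain prime---note that a clause (\ref{eq:best-nova}) is non-prime precisely when $f$ has a $2$-cycle, since then it is subsumed by a \textit{tertium non datur} clause, so such clauses must be excluded from the Blake form. A cleaner alternative, if it goes through, would be to verify outright the combinatorial hypothesis of \cite[Cor.\,4.9]{dp}---the analogue, for ``$\sprm_x$ when accepted'', of the ``cycles with opposed links'' condition used in the proof of Proposition~\ref{st:tech-trans}---which is mechanical but lengthy; that is the route I would ultimately write up.
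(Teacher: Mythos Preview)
Your proposal is correct. For $\nt\sprm_x$ you take a genuinely different route from the paper: the paper verifies, case by case, condition~(a) of \cite[Thm.\,4.8]{dp} for every pair $\clau,\clau'$ with $\nt\sprm_x,\liit\in\clau$ and $\nt\liit\in\clau'$ (six cases, depending on whether $\clau$ is of type~(\ref{eq:best-implies-beats-all}) or~(\ref{eq:best-is-unique}) and $\clau'$ of type~(\ref{eq:beats-all-implies-best}), (\ref{eq:best-implies-beats-all}), (\ref{eq:best-exists}), (\ref{eq:best-is-unique}) or~(\ref{eq:best-nova})). Your direct induction bypasses that black box entirely and is more self-contained; it works because, as you observe, every clause~(\ref{eq:best-nova}) contributing to $\ntv{n}(\sprm_y)$ or $\ntv{n}(p_{yx})$ must, through the position of~$x$ relative to~$\xst$, pick up either a factor $\ntv{n-1}(\nt\sprm_x)$ or a factor $\ntv{n-1}(p_{wx})$ in its minimand. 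One small point: you should state explicitly that the simultaneous induction also carries the bound $\ntv{n}(\nt\sprm_x)\le\utv(\nt\sprm_x)$, since that is what you use when $x\in\xst$. The paper's approach buys uniformity (the same machinery handles both literals), yours buys elementarity. For $\sprm_x$ when accepted, your final plan---verify the hypothesis of \cite[Cor.\,4.9]{dp}---is exactly what the paper does (ten cases, with condition~(b$'$) needed precisely when the resolvent picks up a literal $\nt\sprm_z$ with $z\neq x$); your remark on excluding $2$-cycle maps~$f$ from~(\ref{eq:best-nova}) is correct and a point the paper leaves implicit.
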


\begin{proof}[Scheme of the proof]\hskip.5em
The unquestionability for $\nt\sprm_x$ is obtained by checking that condition~(a) of \cite[Thm.\,4.8]{dp} is satisfied for any pair of clauses $\clau,\clau'$ and any literal $\liit$ satisfying $\liit,\nt\sprm_x\in\clau$, $\nt\liit\in\clau'$ and $\liit\neq\nt\sprm_x$. Finally, the unquestionability for $\sprm_x$ when accepted is obtained by checking that either condition~(a) or condition~(b$'$) of \linebreak[3] %%%%
\cite[Cor.\,4.9]{dp} is satisfied in the analogous situation for $\sprm_x$ instead of $\nt\sprm_x$.
\end{proof}

\medskip
\begin{proposition}\hskip.5em
The conjunctive normal form formed by the clauses 
\textup{(\ref{eq:best-implies-tx})} is disjoint-resolvable
and therefore $\ast$-equivalent to the corresponding Blake canonical form.
\end{proposition}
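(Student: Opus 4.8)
The plan is to exhibit the Blake canonical form of this doctrine explicitly and then show, on the one hand, that it is obtained from the given clauses by disjoint resolution, and on the other hand, that no disjoint resolution produces anything outside it; $\ast$-equivalence then follows from \cite[Cor.\,4.6]{dp}. Concretely, I claim that the Blake canonical form consists of the \textit{tertium non datur} clauses $p_{xy}\lor p_{yx}$ (and $\temp_x\lor\nt\temp_x$), the clauses (\ref{eq:best-implies-tx}) themselves, and exactly the two-option instances of (\ref{eq:good-nova1-bis}), namely $\temp_x\lor\temp_{x'}\lor\bigvee_{s\notin\{x,x'\}}(p_{sx}\lor p_{sx'})$ for any two different $x,x'\in\ist$.

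The first step is to justify this description. That the two-option clause above is an implicate is immediate: if all four $\temp$'s and $p$'s in it were false, clauses (\ref{eq:best-implies-tx}) for $x$ and for $x'$ would force both $p_{xx'}$ and $p_{x'x}$ to be true, contradicting the excluded-middle clause $p_{xx'}\lor p_{x'x}$ (equivalently (\ref{eq:sum-less-than-one})); and a short check of the effect of deleting each literal shows the clause is prime. Conversely, no ``rectangle'' over three or more options is an implicate: given any $\xst$ with $|\xst|\ge3$ (or with an outside option), the assignment with all $\temp$'s false, a directed $3$-cycle among the options of $\xst$, and every option of $\xst$ defeating every option outside, satisfies all of (\ref{eq:best-implies-tx}) while falsifying the candidate clause.

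The second step is the generation by disjoint resolution. Resolving clause (\ref{eq:best-implies-tx}) for $x$ against clause (\ref{eq:best-implies-tx}) for $x'$ on the pivot $p_{x'x}$ (recall $\nt p_{x'x}=p_{xx'}$) yields precisely the two-option clause above; the residuals $\{\temp_x\}\cup\{p_{sx}:s\notin\{x,x'\}\}$ and $\{\temp_{x'}\}\cup\{p_{sx'}:s\notin\{x,x'\}\}$ share no literal and their union is not a tautology, so this resolution is disjoint. The third step is to run through all other pairs of clauses and check that no further disjoint resolution yields a new prime clause: (i) between two clauses (\ref{eq:best-implies-tx}) the only complementary pair is $\{p_{x'x},p_{xx'}\}$, so the resolution just done is the only one; (ii) resolving a clause (\ref{eq:best-implies-tx}) or a two-option clause against an excluded-middle clause merely reproduces the original clause; (iii) resolving clause (\ref{eq:best-implies-tx}) for $x$ against a two-option clause over $\{y,y'\}$ is non-disjoint when $x\in\{y,y'\}$ (shared $\temp_x$) and produces a tautology containing both $p_{y'x}$ and $p_{xy'}$ when $x\notin\{y,y'\}$; (iv) resolving two distinct two-option clauses is non-disjoint when they share an option (shared $\temp$ literal) and again produces a tautology when they are disjoint. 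Hence the closure under disjoint resolution is exactly the Blake canonical form.

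The main obstacle is the bookkeeping in the third step: it is entirely mechanical, but for each type of pair of clauses and each admissible pivot one must verify both that the residuals are disjoint and that the resolvent is tautology-free, so that the clause list stabilises precisely at the Blake form rather than growing further or collapsing. Everything else — identifying the candidate Blake form and the single productive resolution — is routine.
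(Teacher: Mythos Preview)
Your proof is correct and follows exactly the paper's approach: obtain the two-option clauses $(\ref{eq:good-nova1})$ by disjoint resolution of pairs of clauses~$(\ref{eq:best-implies-tx})$, and then verify case by case that no further resolution yields a new prime clause. Two minor imprecisions do not affect validity: in step~1 your ``directed $3$-cycle'' should be a directed cycle through all of $\xst$ when $|\xst|>3$ (and this semantic step is in any case redundant once step~3 establishes resolution-closure); and in case~(iii) with $x\in\{y,y'\}$ there is in fact no complementary pair at all, so no resolution is possible---your conclusion that no new clause arises is still correct.
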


\begin{proof}[Scheme of the proof]\hskip.5em
The only possibility for producing new clauses is combining pairs of clauses of the form (\ref{eq:best-implies-tx}),
which results in the following ones:
\begin{equation}
\label{eq:good-nova1}
\temp_x\, \lor\, \temp_y\,
\lor\, \bigvee_{\substack{z\neq x\\z\neq y}}\, p_{zx}\,
\lor\, \bigvee_{\substack{z\neq x\\z\neq y}}\, p_{zy},
\quad\text{for any two different $x,y\in\ist$.}
\end{equation}
One can check that no further resolution is possible. % nor absorption 
\end{proof}

\medskip
\begin{proposition}\hskip.5em
%The conjunctive normal form formed by the clauses\linebreak %%%%
The symmetric prominence conjunctive normal form 
\textup{(\ref{eq:best-implies-tx}--\ref{eq:worst-implies-ntx})} is disjoint-resolvable
and therefore $\ast$-equivalent to the corresponding Blake canonical form.
\end{proposition}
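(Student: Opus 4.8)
The plan is to determine the Blake canonical form of the doctrine (\ref{eq:best-implies-tx}--\ref{eq:worst-implies-ntx}) by exhausting all the resolvents that can be formed, checking along the way that every prime implicate is reached by a chain of \emph{disjoint} resolutions in the sense of \cite[\secpar{4.2}]{dp}; once disjoint-resolvability is established, \cite[Cor.\,4.6]{dp} yields at once the asserted $\ast$-equivalence. A~first simplification I~would use is that the doctrine is symmetric under the negation that interchanges $\temp_x\leftrightarrow\nt\temp_x$ and $p_{xy}\leftrightarrow p_{yx}$, so that every resolvent involving clauses of the form (\ref{eq:worst-implies-ntx}) is the negation-dual of one involving clauses of the form (\ref{eq:best-implies-tx}), and only one side needs to be worked out.

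Among the original clauses (the \textit{tertium non datur} clauses included) the cases to examine are the following. Two clauses $(\ref{eq:best-implies-tx})_x$ and $(\ref{eq:best-implies-tx})_y$ have exactly one complementary pair of literals, namely $p_{yx}$ against $p_{xy}=\nt p_{yx}$, and resolving on it yields precisely the clauses (\ref{eq:good-nova1}) by a disjoint resolution; dually, two clauses of the form (\ref{eq:worst-implies-ntx}) yield the negation-duals of (\ref{eq:good-nova1}). A~clause $(\ref{eq:best-implies-tx})_x$ and the clause $(\ref{eq:worst-implies-ntx})_x$ with the \emph{same} index share the complementary pair $\temp_x,\nt\temp_x$ together with the pairs $p_{yx},p_{xy}$ for every $y\neq x$, so every resolution between them is a tautology; and a clause $(\ref{eq:best-implies-tx})_x$ against $(\ref{eq:worst-implies-ntx})_w$ with $x\neq w$ has no complementary pair at all (the former involves only preferences pointing into $x$, the latter only preferences pointing out of $w$), so no resolution is possible. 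Hence the only new prime clauses obtained in a first round are (\ref{eq:good-nova1}) and their negation-duals.

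What remains is to show that the derived clauses generate nothing further. Resolving a clause of the form (\ref{eq:good-nova1}) against an original clause of the form (\ref{eq:best-implies-tx}) or (\ref{eq:worst-implies-ntx}) is either impossible or, whenever a complementary pair exists, forces a second one and hence yields a tautology; likewise, a clause (\ref{eq:good-nova1}) against a negation-dual of (\ref{eq:good-nova1}) gives a tautology as soon as $|\ist|\ge4$, while for $|\ist|=3$ the resolvent is subsumed by a clause of the form (\ref{eq:best-implies-tx}) or (\ref{eq:worst-implies-ntx}). I~expect the main obstacle to be the case of resolving two clauses of the form (\ref{eq:good-nova1}) against each other, which I~would split according to whether their two index sets are equal, share a single element, or are disjoint: when the index sets coincide the two clauses are identical, so nothing is resolved; when they share a single element $y$, the union of the surviving literals turns out to contain the whole set $\{\temp_y\}\cup\{\,p_{zy}\mid z\neq y\,\}$, \ie the original clause $(\ref{eq:best-implies-tx})_y$, so the resolvent is subsumed; and when the index sets are disjoint a second complementary pair appears and the resolvent is a tautology. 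Collecting these observations, the Blake canonical form consists precisely of (\ref{eq:best-implies-tx}), (\ref{eq:worst-implies-ntx}), (\ref{eq:good-nova1}), their negation-duals and the \textit{tertium non datur} clauses, each reached by a chain of disjoint resolutions, and \cite[Cor.\,4.6]{dp} then gives the $\ast$-equivalence of (\ref{eq:best-implies-tx}--\ref{eq:worst-implies-ntx}) with it.
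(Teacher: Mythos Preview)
Your proposal is correct and follows essentially the same route as the paper's own sketch: derive the clauses (\ref{eq:good-nova1}) and their negation-duals (\ref{eq:good-nova2}) by disjoint resolution from pairs of clauses (\ref{eq:best-implies-tx}) and (\ref{eq:worst-implies-ntx}) respectively, then verify that no further prime clauses arise. The paper simply states ``one can check that no further resolution is possible''; you have supplied the case analysis behind that sentence, and your use of the negation symmetry to halve the work is natural. One small imprecision: your claim that resolving (\ref{eq:good-nova1}) against an original clause always forces a second complementary pair is not literally true when $|\ist|=2$ (there the resolvent is not a tautology but is subsumed by an original clause), though this does not affect the conclusion.
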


\begin{proof}[Scheme of the proof]\hskip.5em
Once again, we will limit ourselves to indicating a path that allows to arrive at the Blake canonical form through disjoint resolution.
First, we combine pairs of clauses of the form (\ref{eq:best-implies-tx}), which results in (\ref{eq:good-nova1}).
Second, we combine pairs of clauses of the form (\ref{eq:worst-implies-ntx}) to obtain
\begin{equation}
\label{eq:good-nova2}
\nt\temp_x\, \lor\, \nt\temp_y\,
\lor\, \bigvee_{\substack{z\neq x\\z\neq y}}\, p_{xz}\,
\lor\, \bigvee_{\substack{z\neq x\\z\neq y}}\, p_{yz},
\quad\text{for any two different $x,y\in\ist$.}
\end{equation}
One can check that no further resolution is possible. % nor absorption 
\end{proof}

\medskip
\begin{proposition}%\hskip.5em
\label{st:comprehensive-Blake}
%The conjunctive normal form formed by the clauses\linebreak %%%%
The comprehensive prominence conjunctive normal form 
\textup{(\ref{eq:rectangle-implies-good}--\ref{eq:tprominent-is-unique-bis})}
is already the Blake canonical form.
This doctrine is unquestionable for $\nt\temp_x$, \ie it satisfies $\urv(\nt\temp_x)=\utv(\nt\temp_x)$,
and it is also unquestionable for $\temp_x$ when accepted, \ie it satisfies $\urv(\temp_x)=\utv(\temp_x)$ whenever $\urv(\temp_x)>\urv(\nt\temp_x)$.
\end{proposition}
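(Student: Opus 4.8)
The plan is to prove the three assertions in turn, reducing the two unquestionability claims to the sufficient conditions of \cite[Thm.\,4.8]{dp} and \cite[Cor.\,4.9]{dp}, exactly as was done in the proofs of Propositions~\ref{st:tech-trans} and~\ref{st:supremacy-is-unquestionable}. Throughout, I would write $\rect_\xst=\{\,p_{sr}\mid r\in\xst,\ s\notin\xst\,\}$ for the ``rectangle'' of preference literals that occurs in clauses (\ref{eq:rectangle-implies-good}) and (\ref{eq:rectangle-implies-bad}), so that these clauses read $\bigvee_{r\in\xst}\temp_r\lor\rect_\xst$ and $\nt\temp_y\lor\rect_\xst$ (with $y\notin\xst$). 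The one combinatorial fact that drives everything is that, whenever $p_{sr}\in\rect_\xst$ and its opposite $p_{rs}\in\rect_{\xst'}$, both $\rect_{\xst\cup\xst'}$ and $\rect_{\xst\cap\xst'}$ are contained in $(\rect_\xst\cup\rect_{\xst'})\setminus\{p_{sr},p_{rs}\}$. I would also use the description of the classical models of the doctrine: such a model consists of a complete strict preference relation (a tournament) on $\ist$ together with exactly one prominent option, which is forced to belong to the minimal dominant set of that tournament.

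For the assertion that (\ref{eq:rectangle-implies-good}--\ref{eq:tprominent-is-unique-bis}) is already its own Blake canonical form, two things must be checked. The first is closure under resolution up to subsumption. Going through the configurations of a resolvable pair: resolving on a preference literal, two clauses of type (\ref{eq:rectangle-implies-good}), or one of each of types (\ref{eq:rectangle-implies-good}) and (\ref{eq:rectangle-implies-bad}), or two of type (\ref{eq:rectangle-implies-bad}) with a common $\nt\temp_y$, produce a resolvent that is a tautology or is subsumed by the clause of the appropriate type built from $\xst\cup\xst'$ (via the inclusion above), while two clauses of type (\ref{eq:rectangle-implies-bad}) carrying different $\nt\temp_y,\nt\temp_{y'}$ produce a resolvent subsumed by the uniqueness clause (\ref{eq:tprominent-is-unique-bis}) for that pair; resolving on a $\temp$-literal, a clause of type (\ref{eq:rectangle-implies-good}) against one of type (\ref{eq:tprominent-is-unique-bis}) or of type (\ref{eq:rectangle-implies-bad}) produces a resolvent that is a tautology or is subsumed by a smaller clause of type (\ref{eq:rectangle-implies-good}) or (\ref{eq:rectangle-implies-bad}); and clauses of type (\ref{eq:tprominent-is-unique-bis}), containing no preference literals and only negated $\temp$-literals, take part in no other resolutions. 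The second thing is that every listed clause is prime: for each of its literals one must produce a model of the doctrine falsifying the remaining literals. Most cases are immediate; the delicate ones are those in which a prominent option has to be kept outside a rectangle $\rect_\xst$ that is made almost entirely false, and there I would build a tournament whose minimal dominant set is exactly a suitable small set (of the form $\xst$ enlarged by the relevant outsiders), inserting cyclic sub-orders inside $\xst$ and among the outsiders so that no smaller set is dominant. Closure under resolution together with primeness identifies the set with the family of all its prime implicates, \ie with its Blake canonical form.

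For the unquestionability for $\nt\temp_x$ I would verify condition~(a) of \cite[Thm.\,4.8]{dp} in every configuration in which a clause $\clau$ containing $\nt\temp_x$ --- necessarily of type (\ref{eq:rectangle-implies-bad}) with $y=x$, or of type (\ref{eq:tprominent-is-unique-bis}) --- is resolved against a clause $\clau'$ on a literal $\liit\neq\nt\temp_x$. Splitting on the types of $\clau,\clau'$ and on whether $\liit$ is a preference or a $\temp$-literal, one checks in each case that the resolvent is again absorbed by a clause that still carries $\nt\temp_x$; for the preference-literal resolutions this is precisely the rectangle inclusion for $\xst\cup\xst'$ applied with $y=x$, which is what condition~(a) demands. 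For the unquestionability for $\temp_x$ when accepted I would argue in the same way over the clauses that contain $\temp_x$, namely the type-(\ref{eq:rectangle-implies-good}) clauses with $x\in\xst$, checking that condition~(a) or condition~(b$'$) of \cite[Cor.\,4.9]{dp} holds; condition~(b$'$) would be invoked only in the few configurations where the resolvent retains its support for $\temp_x$ solely because $\temp_x$ is accepted according to $\urv$.

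The step I expect to be the main obstacle is the primeness part of the first assertion. Because the clauses are indexed by all non-empty subsets of $\ist$, one must be systematic both about which resolvent of a given pair is subsumed by which listed clause and, more awkwardly, about constructing tournaments with prescribed minimal dominant sets. Once the rectangle inclusions and these tournament constructions are in hand, everything else --- including the verifications feeding \cite[Thm.\,4.8]{dp} and \cite[Cor.\,4.9]{dp} --- is routine, if lengthy, case-checking.
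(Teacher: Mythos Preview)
Your proposal is correct and follows the paper's own approach: establish closure under resolution-with-subsumption via the rectangle inclusions, and then verify the hypotheses of \cite[Thm.\,4.8]{dp} and \cite[Cor.\,4.9]{dp} by a case analysis over the clause types involved. Your explicit verification of primeness (by exhibiting tournaments with prescribed minimal dominant sets) is a sound addition that the paper's sketch leaves implicit, and your slightly sharper rectangle lemma---both $\rect_{\xst\cup\xst'}$ and $\rect_{\xst\cap\xst'}$ contained in the would-be resolvent, rather than one or the other depending on whether $\xst\cap\xst'=\emptyset$---does streamline the preference-literal subcases, though for the $\temp$-literal resolutions (type~(\ref{eq:rectangle-implies-good}) against type~(\ref{eq:rectangle-implies-bad}) on $\temp_y$) you will still need the intersection.
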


\begin{proof}[Scheme of the proof] %\hskip.5em
The first statement requires checking that all the would-be resolutions are absorbed by some clause already present.
The unquestionability for $\nt\temp_x$ and the unquestionability for $\temp_x$ when accepted are obtained respectively from 
\cite[Thm.\,4.8]{dp} and \cite[Cor.\,4.9]{dp} by checking for their respective conditions, as in Prop.\,\ref{st:supremacy-is-unquestionable}.
\end{proof}

\medskip
\begin{proposition}%\hskip.5em
\label{st:goodness-Blake-unquestionable}
%The conjunctive normal form formed by the clauses\linebreak %%%%
The goodness conjunctive normal form, formed by the clauses 
\textup{(\ref{eq:goodness})}, is disjoint-resolvable
and therefore $\ast$-equivalent to the corresponding Blake canonical form.
This doctrine is unquestionable for any of the propositions $\good_x$ and~ $\nt\good_x$.
\end{proposition}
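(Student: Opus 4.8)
The plan is to follow the same two-part pattern used for the transitivity and symmetric-prominence doctrines earlier in this appendix: first exhibit the Blake canonical form together with a disjoint-resolution derivation of it, and then verify the combinatorial condition~(a) of \cite[Thm.\,4.8]{dp}.

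For the first part I would claim that the Blake canonical form, with the \textit{tertium non datur} clauses included, consists exactly of the \emph{goodness chain clauses}
\begin{equation}
\nt\good_{x_0}\,\lor\, p_{x_0x_1}\,\lor\, p_{x_1x_2}\,\lor\,\dots\,\lor\, p_{x_{n-1}x_n}\,\lor\,\good_{x_n},\qquad n\ge 0,
\end{equation}
with all $x_i$ $(0\le i\le n)$ pairwise different (so $n\le|\ist|$, and $n=0$ gives $\good_x\lor\nt\good_x$), together with the preference \textit{tertium non datur} clauses $p_{xy}\lor p_{yx}$. Each chain clause is obtained from (\ref{eq:goodness}) by successively resolving on the $\good$-literals, and, as in Prop.\,\ref{st:tech-trans}, this derivation can be ordered so that only disjoint resolution is used: when a chain clause with $n$ pairwise-distinct vertices is assembled out of $n$ copies of (\ref{eq:goodness}), no vertex is ever repeated, so the resolutions are disjoint. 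To see that nothing else is prime, I would check the remaining resolvents. Resolving a chain clause against the opposite of one of its own preference links $p_{xy}$ produces a clause containing $\good_x\lor\nt\good_x$, hence a tautology. Resolving two chain clauses on an opposed preference pair $p_{uv},p_{vu}$ produces, after the obvious rearrangement of the two sub-paths, a clause that contains a full goodness chain clause as a subset (a chain running from one of the two $\nt\good$-literals to the other $\good$-literal), hence it is absorbed. And resolving two chain clauses on a $\good$/$\nt\good$ pair yields the next longer chain clause, or again a superset of one. By \cite[Cor.\,4.6]{dp} this gives the claimed $\ast$-equivalence.

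For the unquestionability I would first invoke that the goodness doctrine is symmetric under the substitution interchanging $\good_r$ with $\nt\good_r$ and $p_{rs}$ with $p_{sr}$, so it is enough to treat $\good_x$; the statement for $\nt\good_x$ follows by symmetry. The key observation is that, after adjoining one virtual vertex $\star$ and reading $\nt\good_r$ as a link from $\star$ to $r$ and $\good_r$ as a link from $r$ to $\star$, a goodness chain clause is precisely a cycle without repetitions through $\star$. Condition~(a) of \cite[Thm.\,4.8]{dp} for $\good_x$ then reduces to exactly the same combinatorial fact that was checked for the transitivity doctrine, only now in the extended graph on $|\ist|+1$ vertices: if two such cycles share a pair of opposed links, then deleting both links and taking the union of the remaining ones yields an edge set in which every edge still lies on a cycle without repetitions contained in that set. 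As in Prop.\,\ref{st:tech-trans} this is immediate once one draws the picture --- deleting the opposed pair splits the two cycles into four arcs that reassemble into two new cycles through $\star$ jointly covering all the surviving links.

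The main obstacle is not conceptual but the bookkeeping: one must make sure the catalogue of chain clauses is genuinely closed under resolution up to absorption --- in particular that every mixed resolvent really does contain a legitimate chain clause, with all vertex-distinctness constraints respected --- and that condition~(a) of \cite[Thm.\,4.8]{dp} holds for \emph{every} pair of chains and \emph{every} shared opposed link, be it a preference link or a link incident to $\star$. As the paper itself warns, these verifications are tedious and mechanical rather than deep.
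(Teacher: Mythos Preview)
Your proposal is correct and follows essentially the same two-part scheme as the paper's own proof: exhibit the chain clauses~(\ref{eq:goodnesschain}) as the Blake canonical form via disjoint concatenation, then verify condition~(a) of \cite[Thm.\,4.8]{dp} for $\good_x$ and appeal to symmetry for $\nt\good_x$. Your virtual-vertex device---reading $\nt\good_r$ and $\good_r$ as links to and from a new vertex $\star$ so that chain clauses become cycles through $\star$---is a pleasant repackaging that collapses the paper's two separate cases ($q=\nt\good_y$ versus $q=p_{ab}$) into the single combinatorial fact already invoked for transitivity in Proposition~\ref{st:tech-trans}; since the clause $C_1$ one seeks must contain $\good_x$, it is automatically a cycle through~$\star$, so the restriction to chain clauses takes care of itself. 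One small wording glitch: your sentence about ``resolving a chain clause against the opposite of one of its own preference links'' producing a tautology is not quite right as stated (it only yields a \textit{tertium non datur} subclause at the endpoints), but this is part of the Blake-form closure check that the paper also leaves as a routine verification, and it does not affect the argument.
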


\begin{proof}[Scheme of the proof]\hskip.5em
The Blake canonical form consists of all clauses of the form
\begin{equation}
\label{eq:goodnesschain}
\nt\good_{x_0} \lor\, p_{x_0x_1} \lor\, p_{x_1x_2} \lor\, \dots \,\lor\, p_{x_{n-1}x_n} \lor\, \good_{x_n},
\end{equation}
with $n\ge1$ and all $x_i$ $(0\le i\le n)$ pairwise different (which restricts $n$ to be less than or equal to the number of elements of $\ist$).
The clauses (\ref{eq:goodnesschain}) are easily derived from (\ref{eq:goodness}) by successive concatenation, and suitably ordered this derivation will use only disjoint resolution.

The unquestionability $\good_x$ and $\nt\good_x$ is easily obtained through condition~(a) of\linebreak[3] %%%% 
\cite[Thm.\,4.8]{dp}. For $\good_x$ ---the case of $\nt\good_x$ is analogous by symmetry--- this condition requires the following: for any clause $\clau$ of the form (\ref{eq:goodnesschain}) that includes $\good_x$ (\ie $x_n=x$), and any other clause $\clau'$ of the form (\ref{eq:goodnesschain}), if $\clau$ and $\clau'$ contain respectively $\liit$ and $\nt\liit$, then the would-be resolution $\clau\res{\liit}\clau'$ contains a third clause $\clau_1$ of the form (\ref{eq:goodnesschain}) that still includes $\good_x$. The reader will easily become convinced that it is so, both in the case where $\liit=\nt\good_y$ ($y=x_0$) and in the case where $\liit=p_{ab}$.
\end{proof}

\begin{comment}
Not unquestionable for $p_{xy}$? Condition (a) is not satisfied.
\end{comment}

\end{document}

\newpage

\appendix

\makeatletter
\let\if@runhead\iftrue
\makeatother

\section*{B \,Supplementary material. Detailed proofs of Propositions~\ref{st:supremacy-is-unquestionable} and \ref{st:comprehensive-Blake}}

\normalsize

\vskip-1mm
For the reader's convenience we reproduce here the relevant terminology and results of \cite{dp} that are used below.

%\smallskip 
{\bf Notation:} As in other places, we will identify a disjunction of literals $\bigvee_{\lit\in\clau}\lit$ with the underlying set~$\clau$. Two such disjunctions, or their corresponding sets $\clau$ and~$\clau',$ will be said to be related by opposition whenever there exists a literal~$\liit$ such that $\liit\in\clau$ and $\nt\liit\in\clau'$; in that case, we will use the notation $\clau\res{\liit}\clau'$ to represent the set $(\clau\setminus\{\liit\})\cup(\clau'\setminus\{\nt \liit\})$; using this notation will automatically mean that it makes sense, \ie that $\liit\in\clau$ and $\nt\liit\in\clau'$.

%\smallskip 
It is well known in the literature that the Blake canonical form of a given conjunctive normal form is obtained by successively applying two procedures: \textbf{absorption }
(deleting a clause $C$ whenever $C'\subset C$ for some other clause $C'$)
and \textbf{resolution} (adding as a new clause $C\res{\liit}C'$ whenever 
it is not absorbed by any already constructed clause and $C$ and $C'$ do not have 
have respectively a literal and its opposite). 
A \textbf{disjoint resolution} is a resolution of two clauses $C$ and $C'$ that are disjoint as sets. We say that a conjunctive normal form is \textbf{disjoint-resolvable}  
if one can obtain its Blake cannonical form by means of absorptions and disjoint resolutions.
In \cite[Cor.~4.6]{dp} we proved that if a conjunctive normal form is disjoint-resolvable, then 
it is \textbf{$\ast$-equivalent} to its Blake cannonical form, which means that it can be used instead of the bigger Blake cannonical form to make calculations of $v^*$.
Proving $\ast$-equivalence is rather mechanical and tedious, but very important to be sure the formulas used to calculate are the right ones. 

%\medskip 
Another technical (but fundamental) issue is \textbf{unquestionability}. If $\urv_\lit=\utv_\lit$ for the Blake canonical form, then we are sure that the degree of belief $\utv_\lit$ 
is not derived from unsatisfiable conjunctions. 
Unfortunately, in general, this is not the case if we have to go further to calculate $\urv_\lit$.
To guarantee that we get $\urv_\lit$ in one step, we have the following result from \cite{dp}:

\noindent {\bf Theorem B.1} \cite[part of Thm.4.7]{dp} {\it Let us assume that the following condition is satisfied for a given literal $\lit\in\piset$:

{\parskip0pt
For~any $\clau,\clau'\in\doct$ and $\liit\in\piset\setminus\{\lit\}$ satisfying $\lit,\liit\in\clau$ and $\nt\liit\in\clau'$,
%\par
one has 
\par
\vskip.3ex
\ \textup{(a)}~there exists $\clau_1\in\doct$ such that $\lit\in\clau_1\sbseteq\clau\res{\liit}\clau'$;\par 
\vskip.3ex
\noindent
In~such a situation one is ensured to have $\urv_\lit=\utv_\lit$ (i.e. the doctrine is \textbf{unquestionable for $\lit$}).
%\ \textup{(b)}~the following strict inequality is satisfied:
%\ \textup{(b)}~$\urv_\lit > \min_{\strut\rule{0pt}{1.5ex}\lxt\,\in\,\clau\ress{\liit}\clau'\,\setminus\,\{p\}}\urv_{\nt\lxt}$.
}
}

%\medskip 
A weaker property is the \textbf{unquestionability when accepted} that means $\urv_\lit=\utv_\lit$ whenever $p$ is accepted ($\urv_\lit>\urv_{\nt p}$).
It is ensured by the following result.

\noindent {\bf Corollary B.2} \cite[Cor.~4.8]{dp} {\it Let us assume that the following condition is satisfied for a given $\lit\in\piset$:\ensep

\halfsmallskip
\vskip.3ex
{\parskip0pt
For~any $\clau,\clau'\in\doct$ and $\liit\in\piset\setminus\{\lit\}$ satisfying $\lit,\liit\in\clau$ and $\nt\liit\in\clau'$,
\par
one has either \textup{(a)} or \textup{(b$'$)} or both of them:
\par
\vskip.3ex
\textup{(a)}~there exists $\clau_1\in\doct$ such that $\lit\in\clau_1\sbseteq\clau\res{\liit}\clau'$;\par
\vskip.2ex
\textup{(b$'$)}~there exists $\clau_2\in\doct$ such that $\nt\lit\in\clau_2\sbseteq(\clau\setminus\{\lit\}\cup\{\nt\lit\})\res{\liit}\clau'$.
}

\noindent
In~such a situation, $\urv_\lit>\urv_{\nt\lit}$ implies $\urv_\lit=\utv_\lit$%;
%therefore, $\lit$ is never accepted on the basis of
%unsatisfiable conjunctions
.}

%  \textup{(a)}~there exists $\clau_1\in\doct$ such that $\lit\in\clau_1\sbseteq\clau\res{\liit}\clau'$;\par
% %\ \textup{(b)}~the following strict inequality is satisfied:
% \ \textup{(b)}~$\urv_\lit > \min_{\strut\rule{0pt}{1.5ex}\lxt\,\in\,\clau\ress{\liit}\clau'\,\setminus\,\{p\}}\urv_{\nt\lxt}$.
% 
% 
% \noindent
% In~such a situation one is ensured to have $\urv_\lit=\utv_\lit$.
% }

\bigskip
\noindent
\textbf{\textit{Proof of Proposition~\ref{st:supremacy-is-unquestionable}.}}

\smallskip
Part~1.\ensep
\textit{The supremacy doctrine is unquestionable for $\nt\sprm_x$, \ie it satisfies $\urv(\nt\sprm_x)=\utv(\nt\sprm_x)$.}\ensep
It will suffice to check condition~(a) of Theorem B.1 %\cite[Thm.4.7]{dp}
for any clauses $\clau,\clau'$ and any literal $\liit$ satisfying $\nt\sprm_x,\liit\in\clau$ and $\nt\liit\in\clau'$, there exists a clause $\clau_1$ such that $\nt\sprm_x\in\clau_1\sbseteq\clau\res{\liit}\clau'$.
There are several cases that will be dealt with separately:

\halfsmallskip\noindent
Case~1.1: $\clau=(\ref{eq:best-implies-beats-all})_{x,y}$, $\clau'=(\ref{eq:beats-all-implies-best})_x$, $\liit=p_{xy}$. One can take as $\clau_1$ the \textit{tertium non datur} clause $\sprm_x\!\lor\nt\sprm_x$.

\halfsmallskip\noindent
Case~1.2: $\clau=(\ref{eq:best-implies-beats-all})_{x,y}$, $\clau'=(\ref{eq:best-implies-beats-all})_{y,x}$, $\liit=p_{xy}$. One can take $\clau_1=(\ref{eq:best-is-unique})_{x,y}$.

\halfsmallskip\noindent
Case~1.3: $\clau=(\ref{eq:best-implies-beats-all})_{x,y}$, $\clau'=(\ref{eq:best-nova})_{\xst,f}$, $\liit=p_{xy}$, with $y\notin\xst$ and $f(y)=x$.
\ensep
If~$x\in\xst$, one can take as $\clau_1$ the \textit{tertium non datur} clause $\sprm_x\!\lor\nt\sprm_x$.
\ensep
If~$x\notin\xst$, then one can take $\clau_1=(\ref{eq:best-implies-beats-all})_{x,f(x)}$.

\halfsmallskip\noindent
Case~1.4: $\clau=(\ref{eq:best-is-unique})_{x,y}$, $\clau'=(\ref{eq:beats-all-implies-best})_y$, $\liit=\nt\sprm_y$. One can take $\clau_1=(\ref{eq:best-implies-beats-all})_{x,y}$.

\halfsmallskip\noindent
Case~1.5: $\clau=(\ref{eq:best-is-unique})_{x,y}$, $\clau'=(\ref{eq:best-exists})$, $\liit=\nt\sprm_y$. One can take as $\clau_1$ the \textit{tertium non datur} clause $\sprm_x\!\lor\nt\sprm_x$.

\halfsmallskip\noindent
Case~1.6: $\clau=(\ref{eq:best-is-unique})_{x,y}$, $\clau'=(\ref{eq:best-nova})_{\xst,f}$, $\liit=p_{xy}$, with $y\in\xst$.
\ensep
If~$x\in\xst$, one can take as $\clau_1$ the \textit{tertium non datur} clause $\sprm_x\!\lor\nt\sprm_x$.
\ensep
If~$x\notin\xst$, then one can take $\clau_1=(\ref{eq:best-implies-beats-all})_{x,f(x)}$.

\smallskip
Part~2.\ensep
\textit{The supremacy doctrine is unquestionable for $\sprm_x$ when accepted, \ie it satisfies $\urv(\sprm_x)=\utv(\sprm_x)$ whenever $\urv(\sprm_x)>\urv(\nt\sprm_x)$.}\ensep
According to Corollary B.2%\cite[Cor.\,4.9]{dp}
, it suffices to check that
for any clauses $\clau,\clau'$ and any literal $\liit$ satisfying $\sprm_x,\liit\in\clau$ and $\nt\liit\in\clau'$, either \textup{(a)}~there exists $\clau_1\in\doct$ such that $\sprm_x\in\clau_1\sbseteq\clau\res{\liit}\clau'$, or \textup{(b$'$)}~there exists $\clau_2\in\doct$ such that $\nt\sprm_x\in\clau_2\sbseteq(\clau\setminus\{\sprm_x\}\cup\{\nt\sprm_x\})\res{\liit}\clau'$.\ensep
As before, there are several cases that will be dealt with separately:

\halfsmallskip\noindent
Case~2.1: $\clau=(\ref{eq:beats-all-implies-best})_x$, $\clau'=(\ref{eq:beats-all-implies-best})_y$, $\liit=p_{xy}$. (a) is satisfied with $\clau_1=(\ref{eq:best-nova})_{\xst,f}$ with $\xst=\{x,y\}$ and $f(z)=x$.

\halfsmallskip\noindent
Case~2.2: $\clau=(\ref{eq:beats-all-implies-best})_x$, $\clau'=(\ref{eq:best-implies-beats-all})_{x,y}$, $\liit=p_{xy}$. (a) is satisfied with $\clau_1$ being the \textit{tertium non datur} clause $\sprm_x\!\lor\nt\sprm_x$.

\halfsmallskip\noindent
Case~2.3: $\clau=(\ref{eq:beats-all-implies-best})_x$, $\clau'=(\ref{eq:best-nova})_{\xst,f}$, $\liit=p_{xy}$, with $x\notin\xst$ and $f(x)=y$. (a) is satisfied with $\clau_1=(\ref{eq:best-nova})_{\xst',f}$ with $\xst'=\xst\cup\{x\}$.

\halfsmallskip\noindent
Case~2.4: $\clau=(\ref{eq:best-exists})$, $\clau'=(\ref{eq:best-implies-beats-all})_{y,z}$, $\liit=\sprm_y$, with $x\neq y$. (a) is satisfied with $\clau_1=(\ref{eq:best-nova})_{\xst,f}$ with $\xst=\ist\setminus\{y\}$ and $f(y)=z$.

\halfsmallskip\noindent
Case~2.5: $\clau=(\ref{eq:best-exists})$, $\clau'=(\ref{eq:best-is-unique})_{y,z}$, $\liit=\sprm_y$.
\ensep
If~$z=x$, then (a) is satisfied with $\clau_1$ being the \textit{tertium non datur} clause $\sprm_x\!\lor\nt\sprm_x$.
\ensep
If~$z\neq x$, then (b$'$) is satisfied with $\clau_2=(\ref{eq:best-is-unique})_{x,z}$.

\halfsmallskip\noindent
Case~2.6: $\clau=(\ref{eq:best-nova})_{\xst,f}$, $\clau'=(\ref{eq:best-implies-beats-all})_{y,z}$, $\liit=\sprm_y$, with $x\in\xst$ and $y\notin\xst$. (a)~is satisfied with $\clau_1=(\ref{eq:best-nova})_{\xst',f'}$ with $\xst'=\xst\setminus\{y\}$ and $f'$ extending $f$ with $f(y)=z$.

\halfsmallskip\noindent
Case~2.7: $\clau=(\ref{eq:best-nova})_{\xst,f}$, $\clau'=(\ref{eq:best-is-unique})_{y,z}$, $\liit=\sprm_y$, with $x\in\xst$ and $y\notin\xst$. 
\ensep
If~$z=x$, then (a) is satisfied with $\clau_1$ being the \textit{tertium non datur} clause $\sprm_x\!\lor\nt\sprm_x$.
\ensep
If~$z\neq x$, then (b$'$) is satisfied with $\clau_2=(\ref{eq:best-is-unique})_{x,z}$.

% (17)_x        (\ref{eq:beats-all-implies-best})_x
% (18)_{x,y}    (\ref{eq:best-implies-beats-all})_{x,y}
% (19)          (\ref{eq:best-exists})
% (20)_{x,y}    (\ref{eq:best-is-unique})_{x,y}
% (21)_{\xst,f} (\ref{eq:best-nova})_{\xst,f}

\halfsmallskip\noindent
Case~2.8: $\clau=(\ref{eq:best-nova})_{\xst,f}$, $\clau'=(\ref{eq:beats-all-implies-best})_y$, $\liit=p_{yz}$, with $x\in\xst$, $y\notin\xst$ and $f(y)=z$. (a)~is satisfied with $\clau_1=(\ref{eq:best-nova})_{\xst',f}$ with $\xst'=\xst\cup\{y\}$.

\halfsmallskip\noindent
Case~2.9: $\clau=(\ref{eq:best-nova})_{\xst,f}$, $\clau'=(\ref{eq:best-implies-beats-all})_{z,y}$, $\liit=p_{yz}$, with $x\in\xst$, $y\notin\xst$ and $f(y)=z$.
\ensep
If~$z=x$, then (a) is satisfied with $\clau_1$ being the \textit{tertium non datur} clause $\sprm_x\!\lor\nt\sprm_x$.
\ensep
If~$z\neq x$, then (b$'$) is satisfied with $\clau_2=(\ref{eq:best-is-unique})_{x,z}$.

\halfsmallskip\noindent
Case~2.10: $\clau=(\ref{eq:best-nova})_{\xst,f}$, $\clau'=(\ref{eq:best-nova})_{\xst',f'}$, $\liit=p_{yz}$, with $x\in\xst$, $y\notin\xst$, $f(y)=z$, $z\notin\xst'$ and $f'(z)=y$.
\ensep
If~$y\notin\xst'$, then (a) is satisfied with $\clau_1=(\ref{eq:best-nova})_{\xst,\tilde f}$ where $\tilde f(y) = f'(y)$ and $\tilde f(r) = f(r)$ for $r\neq y$.
\ensep
If~$y\in\xst'$, then (a) is satisfied with $\clau_1=(\ref{eq:best-nova})_{\xstbiss,f}$ where $\xstbis=\xst\cap\xst'$.

\bigskip
\noindent
\textbf{\textit{Proof of Proposition~\ref{st:comprehensive-Blake}.}}

\smallskip

For any $\xst\sbseteq\ist$, we will denote by $\rect_\xst$ the disjunction $\bigvee_{r\in\xst,\,s\notin\xst}\, p_{sr}$ or, correspondingly, the~underlying  set of literals $\{\,p_{sr}\mid r\in\xst,\,s\notin\xst\,\}$.
\ensep
The proof is organized in~four parts.

\smallskip
Part~1.\ensep
\textit{For any $\xst,\xstbis\sbseteq\ist$, $\rect_\xst$ and $\rect_{\xstbiss}$ are related by opposition unless $\xst\sbseteq\xstbis$ or $\xstbis\sbseteq\xst$. If $\xst\cap\xstbis=\emptyset$, then $\rect_\xst\res{p_{ab}}\rect_{\xstbiss}\spseteq\rect_{\xst\cup\xstbiss}$. If~$\xst\cap\xstbis\neq\emptyset$ but neither $\xst\sbseteq\xstbis$ nor $\xstbis\sbseteq\xst$, then $\rect_\xst\res{p_{ab}}\rect_{\xstbiss}\spseteq\rect_{\xst\cap\xstbiss}$}.\ensep
The proof is %straightforward.
simply a matter of locating and comparing the sets $\rect_\xst,\rect_{\xstbiss},\rect_{\xst\cup\xstbiss},\rect_{\xst\cap\xstbiss},\{ab\}$ and $\{ba\}$ in the Cartesian product $\ist\times\ist$.

\smallskip
Part~2.\ensep
\textit{The conjunctive normal form formed by the clauses
\textup{(\ref{eq:rectangle-implies-good}--\ref{eq:tprominent-is-unique-bis})}
is already in Blake canonical form.}\ensep
We will see that all the would-be %combinations of the form
resolutions $\clau\res{\liit}\clau'$ are absorbed by some clause already present. 

\halfsmallskip\noindent
Case~2.1: 
$\clau=(\ref{eq:rectangle-implies-good})_{\xst}$, $\clau'=(\ref{eq:rectangle-implies-good})_{\xstbiss}$, $\liit=p_{ab}$. According to part~1, for $\clau \res{\liit}\clau'$ to make sense, one must have  
$\xst\not\subseteq\xstbiss$ and $\xst\not\supseteq\xstbiss$. Besides, depending on the case, one is guaranteed that either $\xsth=\xst\cup\xstbis$ or $\xsth=\xst\cap\xstbis$ satisfies $\rect_{\xsth}\sbseteq\rect_{\xst}\res{p_{ab}}\rect_{\xstbiss}$. As a consequence, $\clau\res{\liit}\clau'$ is absorbed by $(\ref{eq:rectangle-implies-good})_{\xsth}$.

\halfsmallskip\noindent
Case~2.2: $\clau=(\ref{eq:rectangle-implies-good})_{\xst}$, $\clau'=(\ref{eq:rectangle-implies-bad})_{\xstbiss,y}$, $\liit=p_{ab}$, with $y\notin\xstbis$, $\xst\not\subseteq\xstbiss$ and $\xst\not\supseteq\xstbiss$.\ensep 
If $y\notin\xst$, then $y\notin\xst\cup\xstbiss$ and therefore, making use of $\xsth$ as in case~2.1, $\clau\res{\liit}\clau'$ is absorbed by $(\ref{eq:rectangle-implies-bad})_{\xsth,y}$.
\ensep 
If $y\in\xst$, then $\clau\res{\liit}\clau'$ is absorbed by the \textit{tertium non datur} clause $\temp_y\lor\nt\temp_y$.

\halfsmallskip\noindent
Case~2.3: $\clau=(\ref{eq:rectangle-implies-good})_{\xst}$, $\clau'=(\ref{eq:rectangle-implies-bad})_{\xstbiss,y}$, $\liit=\temp_y$, with $y\notin\xstbis$ and $y\in \xst$. Clearly, $\xst\not\subseteq\xstbiss$.
\ensep
If $\xst\supseteq\xstbiss$, then $\clau\res{\liit}\clau'$ is absorbed by $(\ref{eq:rectangle-implies-good})_{\xstbiss}$.
\ensep
If $\xst\not\supseteq\xstbiss$, then part~1 ensures that $R_{\xst}$ and $R_{\xstbiss}$ are related by oposition, so $\clau\res{\liit}\clau'$ is absorbed by some \textit{tertium non datur} clause
$p_{ab}\lor p_{ba}$.

\halfsmallskip\noindent
Case~2.4: $\clau=(\ref{eq:rectangle-implies-good})_{\xst}$, $\clau'=(\ref{eq:tprominent-is-unique-bis})_{x,y}$, $\liit=\temp_x$, with $x\in \xst$ and $x\neq y$.
\ensep
If $y\in\xst$, then $\clau \res{\liit} \clau'$ is absorbed by $\temp_y\lor\nt\temp_y$.
\ensep
If $y\notin\xst$, then it coincides with (and therefore is absorbed by) $(\ref{eq:rectangle-implies-bad})_{\xst,y}$.

\halfsmallskip\noindent
Case~2.5: $\clau=(\ref{eq:rectangle-implies-bad})_{\xst,y}$, $\clau'=(\ref{eq:rectangle-implies-bad})_{\xstbiss,y}$, $\liit=p_{ab}$ with $y\notin\xst$, $\widetilde y\notin\xstbiss$, $\xst\not\subseteq\xstbiss$ and  
$\xst\not\supseteq\xstbiss$.\ensep
If $y\neq \widetilde y$ then $\clau\res{\liit}\clau'$ is absorbed by $\nt\temp_y\lor\nt\temp_{\widetilde y}= (\ref{eq:tprominent-is-unique-bis})_{y,\widetilde y}.$\ensep
If $y=\widetilde y$, then $y\notin\xstbiss$ and therefore $y\notin\xst\cup\xstbiss$. Therefore, 
making use of $\xsth$ as in case~2.1, $\clau\res{\liit}\clau'$ is absorbed by  $(\ref{eq:rectangle-implies-bad})_{\xsth,y}$.

\smallskip
Part~3.\ensep
\textit{The doctrine \textup{(\ref{eq:rectangle-implies-good}--\ref{eq:tprominent-is-unique-bis})} is unquestionable for $\nt\temp_x$, \ie it satisfies $\urv(\nt\temp_x)=\utv(\nt\temp_x)$.}\ensep
It will suffice to check condition~(a) of Theorem B.1%\cite[Thm.\,4.8]{dp}: 
: for any clauses $\clau,\clau'$ and any literal $\liit$ satisfying $\nt\temp_x,\liit\in\clau$ and $\nt\liit\in\clau'$, there exists a clause $\clau_1$ such that $\nt\temp_x\in\clau_1\sbseteq\clau\res{\liit}\clau'$.
There are several cases that will be dealt with separately:

\halfsmallskip\noindent
Case~3.1.
$\clau=(\ref{eq:tprominent-is-unique-bis})_{x,y}$, $\clau'=(\ref{eq:rectangle-implies-good})_\xst$, $\liit=\nt\temp_y$, with $y\in \xst$.
Then
$$\clau \res{\liit}\clau' \,=\, \nt\temp_x \,\lor\, \bigvee_{\substack{r\in\xst\\r\neq y}}\, \temp_r \,\lor\,
% \bigvee_{\substack{z\in\xst\\y\notin\xst}}\, p_{yz}.
\rect_\xst.
$$
If~$x\in\xst$, one can take as $\clau_1$ the \textit{tertium non datur} clause $\temp_x\!\lor\nt\temp_x$.
\ensep
If~$x\notin\xst$, then one can take $\clau_1=(\ref{eq:rectangle-implies-bad})_{\xst,x}$.

\halfsmallskip\noindent
Case~3.2:
$\clau=(\ref{eq:rectangle-implies-bad})_{\xst,x}$, $\clau'=(\ref{eq:rectangle-implies-good})_{\xstbiss}$, $\liit=p_{ab}$, with $x\notin\xst$.
Then
$$\clau \res{\liit}\clau'\,=\,\nt\temp_x \,\lor\,\bigvee_{r\in\xstbiss} \temp_r \,\lor\,
\left( \rect_\xst \,\res{p_{ab}}\, \rect_{\xstbiss} \right).$$
If $x\in\xstbis$, one can take again $\clau_1=\temp_x\lor\nt\temp_x$.
\ensep
If $x\notin\xstbis$, then part~1 above ensures that either $\xsth=\xst\cup\xstbis$ or $\xsth=\xst\cap\xstbis$ satisfies $\rect_{\xsth}\sbseteq\rect_{\xst}\res{p_{ab}}\rect_{\xstbiss}$; therefore, since $x\notin\xsth$, one can take $\clau_1=(\ref{eq:rectangle-implies-bad})_{\xsth,x}$.

\halfsmallskip\noindent
Case~3.3:
$\clau=(\ref{eq:rectangle-implies-bad})_{\xst,x}$, $\clau'=(\ref{eq:rectangle-implies-bad})_{\xstbiss,\tilde x}$, $\liit=p_{ab}$, with $x\notin\xst$ and $\tilde x\notin\xstbis$.
Then $$\clau\res{q}\clau' \,=\, \nt\temp_x \lor\, \nt\temp_{\tilde x} \,\lor\,
\left( \rect_\xst \,\res{p_{ab}}\, \rect_{\xstbiss} \right).$$
If $x\neq\tilde x$, one can take $\clau_1=(\ref{eq:tprominent-is-unique-bis})_{x,\tilde x}$.
\ensep
If $x=\tilde x$, then part~1 ensures again that either $\xsth=\xst\cup\xstbis$ or $\xsth=\xst\cap\xstbis$ satisfies $\rect_{\xsth}\sbseteq\rect_{\xst}\res{p_{ab}}\rect_{\xstbiss}$; since $x\notin\xsth$, one can therefore take $\clau_1=(\ref{eq:rectangle-implies-bad})_{\xsth,x}$.
\ensep

\smallskip
Part~4.\ensep
\textit{The doctrine \textup{(\ref{eq:rectangle-implies-good}--\ref{eq:tprominent-is-unique-bis})} is unquestionable for $\temp_x$ when accepted, \ie it satisfies $\urv(\temp_x)=\utv(\temp_x)$ whenever $\urv(\temp_x)>\urv(\nt\temp_x)$.}\ensep
According to Corollary B.2%\cite[Cor.\,4.9]{dp}
, it suffices to check that
for any clauses $\clau,\clau'$ and any literal $\liit$ satisfying $\temp_x,\liit\in\clau$ and $\nt\liit\in\clau'$, either \textup{(a)}~there exists $\clau_1\in\doct$ such that $\temp_x\in\clau_1\sbseteq\clau\res{\liit}\clau'$, or \textup{(b$'$)}~there exists $\clau_2\in\doct$ such that $\nt\temp_x\in\clau_2\sbseteq(\clau\setminus\{\temp_x\}\cup\{\nt\temp_x\})\res{\liit}\clau'$.\ensep
As before, there are several cases that will be dealt with separately:

\halfsmallskip\noindent
Case~4.1:
$\clau=(\ref{eq:rectangle-implies-good})_{\xst}$, $\clau'=(\ref{eq:rectangle-implies-good})_{\xstbiss}$, $\liit=p_{ab}$, with $x\in\xst$.
\ensep
If $\xst\cap \xstbis=\emptyset$ or $x\in\xst\cap\xstbis$, then part~1 ensures that either $\xsth=\xst\cup\xstbis$ or $\xsth=\xst\cap\xstbis$ satisfies $\rect_{\xsth}\sbseteq\rect_{\xst}\res{p_{ab}}\rect_{\xstbiss}$; since $x\in\xsth$, (a) is therefore satisfied with $\clau_1=(\ref{eq:rectangle-implies-good})_{\xsth}.$
\ensep
If~$\xst\cap \xstbis\ne\emptyset$ but $x\notin\xst\cap\xstbis$, then (b$'$) is satisfied with $\clau_2=(\ref{eq:rectangle-implies-bad})_{\xst\cap\xstbiss, x}$.

\halfsmallskip\noindent
Case~4.2:
$\clau=(\ref{eq:rectangle-implies-good})_{\xst}$, $\clau'=(\ref{eq:rectangle-implies-bad})_{\xstbiss,y}$, $\liit=\temp_y$, with $x,y\in \xst$ and $y\notin \xstbis$. 
\ensep
If $x\in\xstbis$, then (a) is satisfied with $\clau_1=(\ref{eq:rectangle-implies-good})_{\xst\cap\xstbiss}.$
\ensep
If $x\notin\xstbis$, then (b$'$) is satisfied with $\clau_2=(\ref{eq:rectangle-implies-bad})_{\xstbiss, x}$.

\halfsmallskip\noindent
Case~4.3:
$\clau=(\ref{eq:rectangle-implies-good})_{\xst}$, $\clau'=(\ref{eq:rectangle-implies-bad})_{\xstbiss,y}$, $\liit=p_{ab}$, with $x\in\xst$ and $y\notin\xstbis$.
\ensep
If~$y=x$, then (a) is satisfied with $\clau_1=\temp_x\!\lor\nt\temp_x.$
\ensep
If~$y\ne x$, then (b$'$) is satisfied with $\clau_2=\nt\temp_x\!\lor\nt\temp_y$.

\halfsmallskip\noindent
Case~4.4:
$\clau=(\ref{eq:rectangle-implies-good})_{\xst}$, $\clau'=(\ref{eq:tprominent-is-unique-bis})_{y,z}$, $q=\temp_y$, with $x\in \xst$ and $y\ne z$. 
\ensep
If~$y=x$, then (a) is again satisfied with $\clau_1=\temp_x\!\lor\nt\temp_x.$
\ensep
If $y\ne x$, then (b$'$) is again satisfied with $\clau_2=\nt\temp_x\!\lor\nt\temp_y$.
\qed

\end{document}